\newcommand{\captionfonts}{\normalsize}
\long\def\@makecaption#1#2{%
  \vskip\abovecaptionskip
  \sbox\@tempboxa{{\captionfonts #1: #2}}%
  \ifdim \wd\@tempboxa >\hsize
    {\captionfonts #1: #2\par}
  \else
    \hbox to\hsize{\hfil\box\@tempboxa\hfil}%
  \fi
  \vskip\belowcaptionskip}
\renewcommand{\fnum@algorithm}{\fname@algorithm}
\newcommand*{\rom}[1]{\expandafter\@slowromancap\romannumeral #1@}
\newtheorem{thm}{Theorem}
\newtheorem{lem}{Lemma}
\newtheorem{dfn}{Definition}
\newtheorem{prp}{Proposition}
\newtheorem*{rmk}{Remark}
\newtheorem{rmk-2}{Remark}
\newtheorem{rmk-3}{Remark}
\newtheorem{rmk-4}{Remark}
\newtheorem{rmk-5}{Remark}
\newtheorem{rmk-6}{Remark}
\newtheorem{rmk-7}{Remark}
\newtheorem{rmk-8}{Remark}
\newtheorem{cl}{Corollary}
\newtheorem{ax}{Axiom}
\begin{document}

\ \vspace{20mm}\\

{\LARGE \flushleft  On the Principles of ReLU Networks with One Hidden Layer}

\ \\
{\bf \large Changcun Huang}\\
{cchuang@mail.ustc.edu.cn}\\
{Shuitu Institute of Applied Mathematics, Chongqing 400700, P.R.C}\\
%


\thispagestyle{empty}

\ \vspace{-0mm}\\
%
\begin{center} {\bf Abstract} \end{center}
A neural network with one hidden layer or a two-layer network (regardless of the input layer) is the simplest feedforward neural network, whose mechanism may be the basis of more general network architectures. However, even to this type of simple architecture, it is also a ``black box''; that is, it remains unclear how to interpret the mechanism of its solutions obtained by the back-propagation algorithm and how to control the training process through a deterministic way. This paper systematically studies the first problem by constructing universal function-approximation solutions. It is shown that, both theoretically and experimentally, the training solution for the one-dimensional input could be completely understood, and that for a higher-dimensional input can also be well interpreted to some extent. Those results pave the way for thoroughly revealing the black box of two-layer ReLU networks and advance the understanding of deep ReLU networks.

\ \\[-2mm]
{\bf Keywords:} One-hidden layer, black box, function approximation, training solution, ReLU network.

\section{Introduction}
Deep learning has achieved a great triumph in both scientific and engineering areas in recent years, such as \citet*{Jumper2021} in protein-structure prediction and \citet*{OpenAI2023} in large-language models. However, its underlying principle is still mysterious and usually referred to as a ``black box''\citep*{Castelvecchi2016,Roscher2020}, which involves not only application reliability but also potential uncontrollable AI risks.

Technically speaking, the black box of neural networks is roughly composed of two parts. The first is what the mechanism of the solution derived from the back-propagation algorithm \citep*{Rumelhart1986} (also called the ``\textsl{training solution}'' in this paper) is. The second is how to obtain a desired solution through a deterministic way. The two parts are correlated because if the solution space is unknown, it would be hard to control the solution-finding process. In fact, due to the lack of the knowledge of solutions, one can only adjust the parameters intuitively and stochastically, leading to uncertainty of the training results.

Although a ReLU network with one-hidden layer is the simplest network architecture, its solution space is largely unknown \citep*{DeVore2021} and hence its training process cannot be determinedly manipulated as well. In scientific areas, to study a system, the investigation of its smallest component is a natural way. So unveiling the black box of two-layer shallow ReLU networks may be the foundation of understanding deeper ones.

This paper aims to uncover the solution space of two-layer ReLU networks for function approximation and to understand the solution obtained by the back-propagation algorithm. Despite the main idea for the one-dimensional input being inspired by the one-sided bases of splines, the higher-dimensional case needs to be developed with the aid of some new principles, which include multiple strict partial orders and continuity restriction, rendering it essentially different from the one-dimensional input. It will be seen that the training solution can be well interpreted under our framework both theoretically and experimentally.

\subsection{Related Work}
The function-approximation capability of two-layer neural networks had attracted the attention of mathematical researchers for more than 30 years. There's a large body of research for the non-ReLU cases. \citet*{Pinkus1999} gave a comprehensive review on the results up to 1999; after that, for example, \citet*{Draghici2002}, \citet*{Ismailov2012}, \citet*{Costarelli2013}, \citet*{Guliyev2018} and \citet*{Almira2021} continued the research until recently. But none of those works exclusively aimed at extracting or summarizing the principle of training solutions, while developing the theory from pure mathematical viewpoint.

There were also works for two-layer ReLU networks \citep*{Breiman1993, DeVore2021, Hatano2021}. However, the similar problem exists as above. Thus, although much had been done for theoretical analysis, little is known about the mechanism of the solution obtained by the back-propagation algorithm.

To the relationship between splines and ReLU networks, \citet*{Daubechies2019} studied the approximation to univariate functions by deep ReLU networks and compared the effect with that of linear splines. The spline expressed by ReLU networks can also be regarded as the solution of some regularized optimization problems \citep*{Unser2019,Aziznejad2019,Bohra2020,Parhi2021}.

Despite using the term ``hinge function'', \citet*{Breiman1993}'s proof of theorem 3 involved the one-sided bases of linear splines and the conclusion holds for two-layer ReLU networks as well, a result mostly related to part of this paper. \citet*{Balestriero2021} extended \citet*{Breiman1993}'s work to the multi-output case and proved that a wide range of deep ReLU networks, such as convolutional neural networks and ResNets, can be written as the form of the composition of spline functions. The above correlations between splines and ReLU networks are either for deep ReLU networks or for pure theoretical analysis and are different from our way of introducing the idea of splines.

\subsection{Road Map of the Paper}
This paper develops the theory gradually from the simplest univariate case (\textbf{section 2}) to the multivariate case (\textbf{sections 3} to \textbf{7}), with the theoretical explanation of solutions embedded in the proof of the conclusions. After the theory having been established, experimental verification will be given in \textbf{section 8}, in which several examples of training solutions will be explained by the theory.

Throughout this paper, the term ``training solution'' is the abbreviated version of the solution obtained by the back-propagation algorithm, which is the output function $g(\boldsymbol{x})=\sum_i\lambda_i\phi_i(\boldsymbol{x})$ of a two-layer ReLU network interpolating a data set $D \subset \mathbb{R}^n$ for $n \ge 1$, where $\phi_i(\boldsymbol{x})=\sigma(w_i^T\boldsymbol{x}+b_i)$ with $\sigma(x)=\max\{0,x\}$ being the activation function of a rectified linear unit (ReLU).

To the details, \textbf{section 2} gives a simple example, the approximation to univariate function via two-layer ReLU networks, through the principle of one-sided bases of splines (theorem 1), which includes the idea to be generalized to the multivariate case in sections 3 and 4. \textbf{Section 3} generalizes the concept of knots on one-dimensional line as well as their ''less than'' relation (a strict partial order) to the higher-dimensional case. \textbf{Section 4} completes the generalization of section 2 for multivariate-function approximation over a single strict partial order (theorem 3), especially by establishing the relationship between the linear pieces of a two-layer ReLU network with higher-dimensional input (theorem 2).

\textbf{Section 5} enlarges the solution space by introducing the concept of ''two-sided bases'' of splines (theorem 5) on the basis of the preceding sections, after which the training solution for one-dimensional input is theoretically explained (corollary 3). This new concept is important for the diversity of solutions.

\textbf{Sections 6} and \textbf{7} further increase the complexity of the constructed solution by adding new principles, in order to get closer to the training solution for a higher-dimensional input. \textbf{Section 6} investigates the function approximation over multiple strict partial orders (theorem 7), with each providing a set of knots to realize the associated piecewise linear function, and embeds the principle of two-sided bases of section 5 into the new solution form (theorem 8). \textbf{Section 7} proposes a fundamental principle called ``continuity restriction'' (theorem 9) and the universal function approximation for higher-dimensional input is finally proved (theorem 10), which completes the theoretical framework of this paper.

\textbf{Section 8} uses experimental results to verify the theory and it is shown that the solution obtained by the back-propagation algorithm can be explained by our theoretical framework. \textbf{Section 9} highlights several conclusions related to the black-box problem. \textbf{Section 10} concludes this paper by a discussion and proposes two open problems for future studies.

The outline above does not contain all the results of this paper. The remaining ones may be an intermediate result or a relatively unimportant conclusion that need not to be included in the main framework.

\section{Approximation to Univariate Function}
The one-sided bases \citep*{Chui1992, Schumaker2007} of splines are nonlocal and hence are not as popular as its further developed version---$B$-splines. However, this ``disadvantage'' happens to be the intrinsic property of a ReLU whose activation function is an one-sided basis. The spline theory tells us that a two-layer ReLU network can realize a piecewise linear function via one-sided bases, from which we obtain a construction method. The main ideas summarized in section 2.3 will be generalized to the multivariate case in later sections 3 and 4.

\subsection{One-Sided Bases}
The notations of splines in this section are borrowed from \citet*{Schumaker2007} with some modifications. On closed interval $I = [0, 1]$, a spline is a piecewise polynomial defined on the subintervals derived from what is called ``knots'' and may satisfy some smoothness conditions.

Let
\begin{equation}
\Delta = \{x_{\nu}: \nu = 1, 2, \dots, \zeta-1\},
\end{equation}
where
\begin{equation}
0 < x_1 < x_2 < \cdots < x_{\zeta-1} < 1
\end{equation}
partition $[0, 1]$ into $\zeta$ subintervals $I_1 = [0, x_1]$ and $I_i = (x_{\mu-1}, x_{\mu}]$ for $\mu = 2, 3, \dots, \zeta$ with $x_{\zeta} = 1$. We call each of $x_1, x_2, \dots, x_{\zeta-1}$ a knot. Denote by  $\mathcal{P}_2$ the set of linear functions (polynomials with degree $1$). The space of continuous linear splines is defined to be
\begin{equation}
\begin{aligned}
\mathfrak{S}_1(\Delta)  =  \{s: s(x)=s_i(x) \in \mathcal{P}_2 \ &\text{for} \ x \in I_i, s_{i}(x_{i})= s_{i+1}(x_{i}) \ \text{for} \ i \ne \zeta, \\& i = 1, 2, \dots, \zeta\}.
\end{aligned}
\end{equation}
We sometimes use the redundant term ``spline function'' to emphasize its function property.

The one-sided bases \citep*{Chui1992,Schumaker2007} of $\mathfrak{S}_1(\Delta)$ can be defined as
\begin{equation}
\{\rho_{j}(x) = \sigma(x - x_{j}): j = -1, 0, \dots, \zeta-1\},
\end{equation}
where $\sigma(x) = \max\{0, x\}$ is the activation function of a ReLU and
\begin{equation}
x_{-1} < x_{0} \le 0 < x_1 < x_2 < \cdots < x_{\zeta-1} < 1.
\end{equation}
Equation 2.4 suggests that a ReLU network can realize a continuous linear spline in terms of the one-sided bases, by which we give a construction method next.

\subsection{Construction of Continuous Linear Splines}
Under equations from 2.1 to 2.3, a continuous linear spline $\mathcal{S}(x) \in \mathfrak{S}_1(\Delta)$ with $\zeta$ linear pieces can be expressed as
\begin{equation}
\mathcal{S}(x) = \{s_i = a_ix + b_i \  \text{for} \ x \in I_i: i = 1, 2, \dots, \zeta\},
\end{equation}
subject to
\begin{equation}
a_ix_{i+1} + b_i = a_{i+1}x_{i+1} + b_{i+1}
\end{equation}
for $i \ne \zeta$ that ensures the continuous property of $\mathcal{S}(x)$ at the knots.

\begin{lem}
To any continuous linear spline $\mathcal{S}(x)$ of equation 2.6, under the one-sided bases of equation 2.4, there exists a unique form
\begin{equation}
\mathcal{S}(x) = \sum_{j = -1}^{\zeta-1}\lambda_{j}\sigma(x - x_{j}),
\end{equation}
where $\lambda_{-1} = (a_1x_{0}+b_1)/(x_{0}-x_{-1})$, $\lambda_0 = (a_1x_{-1}+b_1)/(x_{-1}-x_0)$ and
\begin{equation}
\lambda_{\nu-1} = a_{\nu} - a_{\nu-1}
\end{equation}
for $\nu = 2, 3, \dots, \zeta$.
\end{lem}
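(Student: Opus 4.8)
The plan is to exploit two structural facts: on $[0,1]$ the two left-most bases are affine, while crossing each interior knot contributes only a slope jump. This reduces the claim to a slope-matching computation (for existence of the stated coefficients) together with a linear-independence argument (for uniqueness).

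First I would record the key observation behind equation 2.5: because $x_{-1} < x_0 \le 0$, every $x \in [0,1]$ satisfies $x - x_{-1} > 0$ and $x - x_0 \ge 0$, so $\sigma(x - x_{-1}) = x - x_{-1}$ and $\sigma(x - x_0) = x - x_0$ on the whole interval. Hence the candidate
\[
g(x) := \sum_{j=-1}^{\zeta-1}\lambda_{j}\,\sigma(x - x_{j})
\]
is, on each subinterval $I_\nu$, an affine function whose slope increases by exactly $\lambda_k$ as $x$ passes each interior knot $x_k$ with $k \ge 1$; since every $\sigma$ is continuous, $g \in \mathfrak{S}_1(\Delta)$ automatically.

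For existence I would verify that the prescribed coefficients reproduce $\mathcal{S}$. The slope of $g$ on $I_1$ is $\lambda_{-1} + \lambda_0$, which collapses to $a_1$ upon substituting the stated formulas for $\lambda_{-1}$ and $\lambda_0$; on $I_\nu$ for $\nu \ge 2$ the slope is $a_1 + \sum_{k=1}^{\nu-1}\lambda_k = a_1 + \sum_{k=1}^{\nu-1}(a_{k+1}-a_k) = a_\nu$ by telescoping, using equation 2.9. Thus $g$ and $\mathcal{S}$ share a slope on every piece. Matching them at the single point $x=0$, where $g(0) = -\lambda_{-1}x_{-1} - \lambda_0 x_0 = b_1 = \mathcal{S}(0)$ (the middle equality being precisely the defining relation for $\lambda_{-1},\lambda_0$), and invoking continuity forces $g \equiv \mathcal{S}$ on $[0,1]$.

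Finally, for uniqueness I would establish linear independence of the one-sided bases on $[0,1]$. Suppose $\sum_{j=-1}^{\zeta-1}\mu_{j}\,\sigma(x - x_{j}) \equiv 0$. Comparing slopes across each interior knot $x_k$ forces the jump $\mu_k = 0$ for $k = 1,\dots,\zeta-1$, leaving $\mu_{-1}(x - x_{-1}) + \mu_0(x - x_0) \equiv 0$ on $I_1$; equating its slope and constant term to zero yields a $2\times 2$ system with determinant $x_{-1} - x_0 \ne 0$, whence $\mu_{-1} = \mu_0 = 0$. The delicate point—and the crux of the lemma—is exactly this requirement of two distinct left knots: a single left basis would determine the slope but not an independent intercept of the first affine piece, so the choice $x_{-1} < x_0 \le 0$ in equation 2.5 is precisely what makes the $\zeta+1$ bases match the $(\zeta+1)$-dimensional space $\mathfrak{S}_1(\Delta)$ and renders the representation unique.
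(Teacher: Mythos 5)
Your proof is correct, but it is organized differently from the paper's. The paper proves existence by a forward recurrence: it fixes $s_1$ on $I_1$, then constructs each successive piece by $s_{\nu} = s_{\nu-1} + (a_{\nu}-a_{\nu-1})\sigma(x - x_{\nu-1})$, observing that continuity at $x_{\nu-1}$ is automatic and that the free choice of $a_{\nu}$ makes every admissible continuation reachable; the coefficients $\lambda_{-1},\lambda_0$ come from the same $2\times 2$ system you use, but invoked to manufacture $s_1$ on $[0,x_1]$. You instead verify the stated formula globally: slope jumps of $\sigma(x-x_k)$ across interior knots plus telescoping give slope agreement on every piece, and matching the value at $x=0$ (equivalent to the paper's linear system) pins down the function. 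The more substantive difference is uniqueness. The paper's proof leaves the word ``unique'' essentially implicit — each $\lambda_{\nu-1}$ is forced by the recurrence and the $2\times 2$ system is nonsingular, but no explicit argument is made — whereas you prove it cleanly via linear independence of the $\zeta+1$ one-sided bases (vanishing slope jumps kill $\mu_1,\dots,\mu_{\zeta-1}$, then the determinant $x_{-1}-x_0 \ne 0$ kills $\mu_{-1},\mu_0$), together with the matching dimension count for $\mathfrak{S}_1(\Delta)$. So your version gives a more complete justification of the uniqueness claim; the paper's recurrence form, on the other hand, is deliberately chosen because equation 2.12 is precisely the relation that is generalized to the multivariate setting (item 2 of section 2.3 and theorem 2), a role your slope-telescoping argument would not serve as directly.
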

\begin{proof}
Suppose that $s_1 = a_1x + b_1$ on $I_1$ has been given and its construction method would be discussed later. Then
\begin{equation}
s_2 = s_1 + (a_2 - a_1)\sigma(x - x_1),
\end{equation}
which is continuous with $s_1$ at knot $x = x_1$. Because $a_2$ can be arbitrarily set, any $s_2$ continuous with $s_1$ at $x_1$ can be expressed in the form of equation 2.10. Similarly, the recurrence form
\begin{equation}
s_{\nu} = s_{\nu-1} + (a_{\nu} - a_{\nu-1})\sigma(x - x_{\nu-1})
\end{equation}
holds for all $\nu = 2, 3, \dots, \zeta$, yielding $\lambda_{\nu-1} = a_{\nu} - a_{\nu-1}$ of equation 2.9, that is,
\begin{equation}
s_{\nu} = s_{\nu-1} + \lambda_{\nu-1}\sigma(x - x_{\nu-1}).
\end{equation}

To the production of $s_1$, by solving the linear equations
\begin{equation}
\begin{cases}
\  \ \ \ \ \lambda_{-1} + \lambda_{0} = a_1
\\
-x_{-1}\lambda_{-1} - x_0\lambda_{0} = b_1
\end{cases}
\end{equation}
derived from $\lambda_{-1}\sigma(x - x_{-1}) + \lambda_{0}\sigma(x - x_0) = a_1x + b_1$ for $x \in [0, x_1]$, we obtain $\lambda_{-1} = (a_1x_{0}+b_1)/(x_{0}-x_{-1})$ and $\lambda_0 = (a_1x_{-1}+b_1)/(x_{-1}-x_0)$.
\end{proof}

\begin{prp}
Any continuous linear spline $\mathcal{S}(x)$ of equation 2.6 with $\zeta$ linear pieces can be realized by a two-layer ReLU network $\mathfrak{N}$ whose hidden layer has $\Theta = \zeta + 1$ units, with infinitely many solutions.
\end{prp}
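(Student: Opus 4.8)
The plan is to read the network off directly from the one-sided basis expansion already established in lemma 1, and then to isolate the genuine source of non-uniqueness. By lemma 1, every continuous linear spline $\mathcal{S}(x)$ of equation 2.6 admits the expansion of equation 2.8,
\[
\mathcal{S}(x) = \sum_{j=-1}^{\zeta-1} \lambda_j\, \sigma(x - x_j).
\]
First I would interpret each summand as one hidden unit of a two-layer network $\mathfrak{N}$: the unit indexed by $j$ is given input-to-hidden weight $1$ and bias $-x_j$, so that it outputs $\sigma(x - x_j)$, and the single linear output unit forms the combination with output weight $\lambda_j$ and zero output bias. Since the index $j$ runs over $-1, 0, 1, \dots, \zeta-1$, this is exactly $\Theta = \zeta + 1$ hidden units, and by equation 2.8 the resulting $\mathfrak{N}$ reproduces $\mathcal{S}(x)$ on $[0,1]$. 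This settles realizability with the stated unit count; note that no minimality is asserted, so producing one such network of the prescribed size suffices.

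The substantive part is the ``infinitely many solutions'' clause. The knots $x_1, \dots, x_{\zeta-1}$ are fixed by $\mathcal{S}$ itself, but the two auxiliary knots $x_{-1}$ and $x_0$ entering the basis of equation 2.4 are free subject only to the constraint $x_{-1} < x_0 \le 0$ of equation 2.5. For each admissible pair $(x_{-1}, x_0)$, lemma 1 yields a valid network whose first two hidden units have biases $-x_{-1}$ and $-x_0$ and output weights $\lambda_{-1} = (a_1 x_0 + b_1)/(x_0 - x_{-1})$, $\lambda_0 = (a_1 x_{-1} + b_1)/(x_{-1} - x_0)$, while the units indexed by $0 \le j \le \zeta-1$ are unaffected. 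Because there is a continuum of admissible pairs and distinct pairs force distinct biases $-x_{-1}, -x_0$, these assignments are genuinely distinct parameter vectors all realizing the same $\mathcal{S}(x)$; hence $\mathfrak{N}$ has infinitely many solutions.

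I do not expect a serious obstacle here, since the result is essentially a repackaging of lemma 1; the two points that must be checked carefully are minor. First, for every admissible pair the units $j=-1,0$ must jointly reproduce $s_1 = a_1 x + b_1$ on $I_1 = [0, x_1]$, which is immediate because $x_{-1} < x_0 \le 0 < x$ there forces both ReLUs active and the resulting identity is exactly the linear system of equation 2.13 whose solution is the stated $\lambda_{-1}, \lambda_0$. Second, one should confirm that varying $(x_{-1}, x_0)$ does not secretly produce the same network: it does not, as the bias $-x_0$ already sweeps a continuum of values. I would close by remarking that positive homogeneity of the ReLU gives a further, independent supply of solutions even with the knots held fixed, since replacing the pair $(\text{weight } 1,\ \text{output } \lambda_j)$ of any unit by $(c,\ \lambda_j/c)$ with bias $-c x_j$ leaves the output unchanged for every $c > 0$.
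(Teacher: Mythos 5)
Your proof is correct, and its first half (reading one hidden unit off each one-sided basis function of equation 2.8, giving $\Theta=\zeta+1$) is exactly the paper's construction. Where you diverge is the justification of ``infinitely many solutions.'' The paper parametrizes the $j$th unit as $\sigma(w_jx+b_j)$ with $w_j>0$ and $x_j=-b_j/w_j$, and obtains infinitude from the scale freedom in each unit: any $w_j>0$ with $b_j=-w_jx_j$ determines the same knot, and the output weight $\lambda_j$ is corrected by the factor $1/w_j$ via positive homogeneity of $\sigma$ --- which is precisely the observation you relegate to your closing remark. Your primary mechanism is instead the freedom in placing the auxiliary knots $x_{-1}<x_0\le 0$ of equation 2.5, each admissible pair yielding distinct biases and hence a distinct parameter vector realizing the same $\mathcal{S}(x)$. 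Both mechanisms are sound, and they are complementary: the paper's rescaling argument shows non-uniqueness unit-by-unit even with every knot held fixed, while your auxiliary-knot argument shows that even the geometric data of the solution (the knot set itself) is not determined by $\mathcal{S}$. Since you verify solvability of the system of equation 2.13 for every admissible pair, your argument is complete; combined with your homogeneity remark it in fact subsumes the paper's proof.
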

\begin{proof}
Let $\sigma(w_jx + b_j)$ for $j = -1, 0, \dots, \zeta-1$ be the output of the $j$th unit of the hidden layer of $\mathfrak{N}$. Then $w_jx + b_j = 0$ determines a knot $x_j = -b_j / w_j$ if $w_j \ne 0$. To the case of lemma 1, $w_j$ should be greater than 0. The parameters $w_j$ and $b_j$ can be set according to the associated knot and have infinitely many solutions. If $w_j \ne 1$, the corresponding weight $\lambda_j$ of equation 2.8 should be changed accordingly by multiplying a scale factor.
\end{proof}

Throughout this paper, the function-approximation error is measured by a 2-norm distance. A $C^1$-function $f: [0, 1] \to \mathbb{R}$ is the one whose derivative with respect to $x$ is continuous, which locally approximates a line at arbitrary $x_0 \in [0, 1]$ within a sufficiently small neighborhood of $x_0$.

\begin{thm}[Approximation to $C^1$-functions]
A two-layer ReLU network $\mathfrak{N}$ can approximate an arbitrary $C^1$-function $f: [0, 1] \to \mathbb{R}$ to any desired accuracy, in terms of a continuous linear spline $S(x) \in \mathfrak{S}_1(\Delta)$ realized by the one-sided bases of equation 2.4. If $S(x)$ has $\zeta$ linear pieces, then the number of the units of the hidden layer of $\mathfrak{N}$ required is $\Theta = \zeta + 1$.
\end{thm}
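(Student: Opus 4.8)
The plan is to reduce the approximation of a $C^1$-function to the construction of an interpolating continuous linear spline and then invoke Proposition~1 to realize that spline by a ReLU network. First I would fix a target accuracy $\varepsilon > 0$ and exploit the defining property of a $C^1$-function stated just before the theorem: since $f'$ is continuous on the compact interval $[0,1]$, it is uniformly continuous, so $f$ is locally well-approximated by its tangent lines with a modulus controlled uniformly across $[0,1]$. Concretely, I would partition $[0,1]$ by knots $0 < x_1 < \cdots < x_{\zeta-1} < 1$ into $\zeta$ subintervals and define $S(x)$ to be the piecewise linear interpolant of $f$ at the breakpoints $0, x_1, \dots, x_{\zeta-1}, 1$. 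This $S$ lies in $\mathfrak{S}_1(\Delta)$ by construction, since it is linear on each $I_i$ and continuous at every knot.

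Next I would bound the approximation error on each subinterval. On a subinterval of length $h_i$, the linear interpolant of a $C^1$-function differs from $f$ by an amount governed by the oscillation of $f'$ over that subinterval; a standard Taylor/mean-value estimate gives a pointwise bound of the form $|f(x) - S(x)| \le h_i \cdot \omega(h_i)$, where $\omega$ is the modulus of continuity of $f'$. By choosing the mesh fine enough, i.e.\ taking $\max_i h_i$ sufficiently small, uniform continuity of $f'$ forces $\omega(h_i)$ to be arbitrarily small, so the uniform error $\sup_{x \in [0,1]} |f(x) - S(x)|$ can be driven below any prescribed threshold. Since the $2$-norm distance on $[0,1]$ is dominated by the supremum norm, $\|f - S\|_2 \le \sup_{x}|f(x)-S(x)|$, this controls the stated $2$-norm error as well.

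Finally, once $S(x)$ is a continuous linear spline with $\zeta$ linear pieces, I would apply Proposition~1 directly: it guarantees that $S$ can be realized exactly by a two-layer ReLU network whose hidden layer has $\Theta = \zeta + 1$ units, using the one-sided bases of equation~2.4, and this construction admits infinitely many parameter choices. Combining the interpolation error bound with the exact realization yields a network $\mathfrak{N}$ approximating $f$ to the desired accuracy with exactly $\zeta+1$ hidden units, establishing both assertions of the theorem.

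The main obstacle I expect is making the error estimate genuinely uniform rather than merely pointwise. The phrase ``locally approximates a line at arbitrary $x_0$'' is a local statement, so the real work is upgrading it to a global bound: one must either invoke uniform continuity of $f'$ on the compact $[0,1]$ (which is the cleanest route) or else choose the knot spacing adaptively so that each subinterval is short enough relative to the local oscillation of $f'$. Care is also needed at the endpoints $0$ and $1$ and in checking that the count of linear pieces, $\zeta$, corresponds correctly to the hidden-unit count $\zeta+1$ promised by Proposition~1, matching the two extra bases indexed by $j = -1, 0$ that encode the affine function $s_1$ on the first subinterval.
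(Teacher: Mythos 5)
Your proposal is correct and takes essentially the same route as the paper: the paper's entire proof is one sentence reducing the theorem to Proposition 1 (``the location and number of the knots can be arbitrarily set''), and your argument is exactly that reduction, with the spline-approximation step (piecewise linear interpolation plus uniform continuity of $f'$ on the compact interval) spelled out rigorously rather than left implicit. Your version merely supplies the error estimate and the endpoint/unit-count bookkeeping that the paper takes for granted.
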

\begin{proof}
Because the location and number of the knots in $[0,1]$ can be arbitrarily set, the conclusion follows from proposition 1.
\end{proof}

\subsection{Ideas for Generalization}
Two main ideas arise from the proof of the preceding results, which will be generalized in sections 3 and 4:
\begin{itemize}
\item[(a)] The knots of equation 2.1 can be arranged in the strict partial order of equation 2.2, such that all the one-sided bases could exert their influences in one direction and each subinterval has its own distinguished activated unit to shape its linear function.

\item[(b)] The adjacent continuous linear pieces whose subdomains are separated by a knot has a simple relation of equation 2.12, contributing to a construction method of the weights of the one-sided bases.
\end{itemize}

\section{Strict Partial Order of Knots}
This section generalizes the knots and their ``less than'' relation on an one-dimensional line to the higher-dimensional case. In this paper, when referring to the $n$-dimensional space $\mathbb{R}^n$ or the universal set $U = [0, 1]^n$, if the range of interger $n$ is not explicitly defined, we tacitly assume that $n \ge 1$.

\subsection{Definition of Strict Partial Orders}
In space $\mathbb{R}^n$, a region is part of $\mathbb{R}^n$ separated by a set of $n-1$-dimensional hyperplanes and we here give its rigorous definition. The concepts of ``limit point'' and ``closure'' are from mathematical analysis \citep*{Rudin1976}. A limit point $p$ of a set $D$ is a point whose every neighborhood contains a point $q \ne p$ with $q \in D$. Denote by $D'$ the set of the limit points of $D$. The closure of $D$ is the set $\bar{D} = D \cup D'$. In \citet*{Stanley2012}'s definition, a region is an open set without containing its boundary and this will affect the definition of a continuous function on more than one region. That's why we introduce the above two concepts to modify \citet*{Stanley2012}'s definition as follows.

\begin{dfn}[Region]
A region of $\mathbb{R}^n$ is the closure of a connect component of $\mathbb{R}^n - \mathcal{H}$, where $\mathcal{H}$ is a set of $n-1$-dimensional hyperplanes.
\end{dfn}

The output $\sigma(\boldsymbol{w}^T\boldsymbol{x} + b)$ of a ReLU determines an $n-1$-dimensional hyperplane $\boldsymbol{w}^T\boldsymbol{x} + b = 0$ of $\mathbb{R}^n$, denoted by $l$, which divides $\mathbb{R}^n$ into two parts $l^+$ and $l^0$, representing the nonzero (positive)-output and zero-output regions, respectively. The notations $l^+$ and $l^0$ will be used throughout the rest of this paper. For convenience, we sometimes call $\sigma(\boldsymbol{w}^T\boldsymbol{x} + b)$ the output of hyperplane $l$.

\begin{figure}[!t]
\captionsetup{justification=centering}
\centering
\includegraphics[width=2.3in, trim = {4.3cm 2.6cm 4cm 3.0cm}, clip]{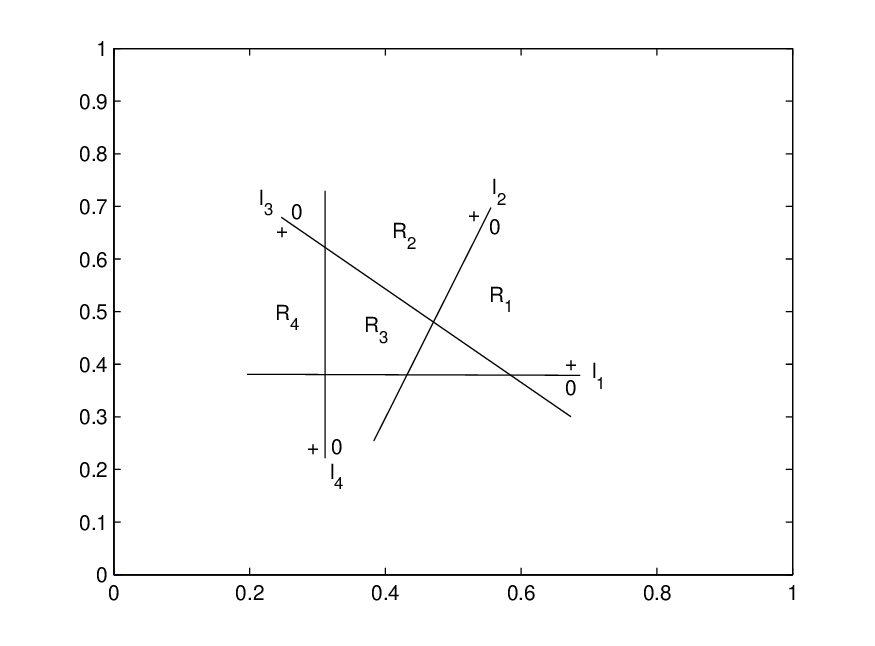}
\caption{Strict partial order of knots.}
\label{Fig.1}
\end{figure}

\begin{dfn}[Knot]
A knot of $\mathbb{R}^n$ is an $n-1$-dimensional hyperplane that partitions $\mathbb{R}^n$ into two parts. The term ``$n-1$-knot'' is to emphasize the dimensionality $n-1$.
\end{dfn}

\begin{dfn}[Strict partial order of knots]
Denote by $\mathcal{H} = \{l_i: i = 1, 2, \dots, \zeta\}$ a set of $n-1$-dimensional hyperplanes of $\mathbb{R}^n$ for $n \ge 2$. Let $\mathcal{R} = \{R_i, l_i^+, l_i^0: i = 1, 2, \dots, \zeta\}$, where $R_i$'s are from the regions formed by $\mathcal{H}$. Suppose that the following condition
\begin{equation}
\begin{aligned}
\mathscr{F}(\mathcal{R}) := \{R_{1} \subset l_{1}^+, & \ R_{\nu}: \bigcup_{j = 1}^{\nu-1}R_{j} \subseteq l_{\nu}^0, \ R_{\nu} \subset \bigcap_{\mu = 1}^{\nu}l_{\mu}^+, \  q_{\nu} = (R_{\nu} \cap R_{\nu-1}) \subseteq l_{\nu}, \\&
\ \dim(q_{\nu}) = n-1,\  \nu = 2, 3, \dots, \zeta\},
\end{aligned}
\end{equation}
is satisfied, where $\dim$ denotes dimensionality. Then if $i_1 < i_2$ for $1 \le i_1, i_2 \le \zeta$, we say that
\begin{equation}
l_{i_1} \prec l_{i_2}.
\end{equation}
For completeness of the theory, if there exists only one single hyperplane $l_1$, we formally write $l_1 \prec l_{\infty}$. Under equation 3.1, $R_i$'s for all $i = 1, 2, \dots, \zeta$ are called ordered regions. Write $\mathscr{P} := l_1 \prec l_2 \prec \dots l_{\zeta}$ or $\mathscr{P}: = (\mathcal{H}, \prec)$ and we sometimes simply call $\mathscr{P}$ an \textsl{order}.
\end{dfn}

\noindent
\textbf{Example}. In Figure \ref{Fig.1}, we have $l_1 \prec l_2 \prec l_3 \prec l_4$.

\begin{prp}
The relation $\prec$ of equation 3.2 is a strict partial order, and is equivalent to ``$<$''(less than) of real numbers in the sense that both the relationships between $R_i$'s for $i = 1, 2, \dots, \zeta$ and the influences of the units of $l_i$'s on $R_i$'s in terms of equation 3.1 are the same as the one-dimensional case manifested by equations 2.2 and 2.5---that is, $\mathscr{F}(\mathcal{I})$ of equation 3.1 also holds for the one-dimensional input, where $\mathcal{I} = \{I_{i}, x_i^+, x_i^0: i = 1, 2, \dots, \zeta\}$ in which $I_{i} = (x_i, x_{i + 1}]$ and $x_i$ is from equation 2.2.
\end{prp}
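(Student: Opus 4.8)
The plan is to treat the two assertions separately: first that $\prec$ is a strict partial order, and then that it is the faithful higher-dimensional analogue of ``$<$''. For the first assertion, I would read off directly from Definition 3 that $l_{\nu} \prec l_{\mu}$ holds exactly when $\nu < \mu$ as integers, once the structural condition $\mathscr{F}(\mathcal{R})$ of equation 3.1 has fixed the indexing of the hyperplanes. Because ``$<$'' on the index set $\{1, \dots, \zeta\}$ is irreflexive ($\nu < \nu$ is false), transitive, and hence asymmetric, the relation $\prec$ inherits all three properties verbatim, which is all that is required. The only thing to verify here is that the indexing produced by $\mathscr{F}(\mathcal{R})$ is well defined, i.e.\ that the nesting $R_i \subset \bigcap_{\mu=1}^{i} l_{\mu}^+$ together with $\bigcup_{j=1}^{\nu-1} R_j \subseteq l_{\nu}^0$ pins the order of the $l_i$ down unambiguously.

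For the equivalence with ``$<$'', I would specialize equation 3.1 to $n = 1$ and check that $\mathscr{F}(\mathcal{I})$ holds for the one-dimensional data. Here each knot $x_i$ is a point, and the ReLU output $\sigma(x - x_i)$ forces $x_i^+ = \{x : x > x_i\}$ and $x_i^0 = \{x : x \le x_i\}$; I take the ordered regions to be the closures $\overline{I_i}$ of the intervals $I_i = (x_i, x_{i+1}]$. The three clauses of equation 3.1 then reduce to elementary facts about the chain of equation 2.2: the nesting clause $I_i \subset \bigcap_{\mu=1}^{i} x_{\mu}^+$ holds because $x_{\mu} \le x_i < x$ for every $x \in I_i$ and every $\mu \le i$; the zero-region clause $\bigcup_{j=1}^{\nu-1} I_j \subseteq x_{\nu}^0$ holds because that union telescopes to $(x_1, x_{\nu}] \subseteq (-\infty, x_{\nu}]$; and the interface $q_{\nu} = \overline{I_{\nu}} \cap \overline{I_{\nu-1}}$ equals the single point $\{x_{\nu}\}$, so that $\dim(q_{\nu}) = 0 = n-1$. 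Hence $\mathscr{F}(\mathcal{I})$ holds and the induced order on the $x_i$ is exactly ``$<$''. I would also point out that the nesting clause is precisely the statement that unit $x_{\mu}$ is active on every interval to its right, reproducing the one-directional action of the one-sided bases in equation 2.5, which is the ``influence'' part of the claimed equivalence.

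The main obstacle I expect is bookkeeping with open versus closed sets rather than any deep difficulty. Since Definition 1 builds regions as closures while the $I_i$ are half-open, I must be careful that the shared boundary point $x_{\nu}$ lands in $x_{\nu}^0$ and not $x_{\nu}^+$, and that passing to closures gives $q_{\nu}$ the correct dimension $n-1$. A secondary point of care is reconciling the indexing of $\mathcal{I}$ used in the statement ($I_i = (x_i, x_{i+1}]$ with $x_i$ from equation 2.2) with the intervals of Section 2, and confirming that the formal convention $l_1 \prec l_{\infty}$ for a single knot matches the degenerate one-dimensional case of a lone point.
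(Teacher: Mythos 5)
Your proposal is correct and follows essentially the same route as the paper: the strict-partial-order property is inherited from ``$<$'' on the subscripts of the $l_i$'s, and the equivalence is established by checking that the clauses of $\mathscr{F}(\mathcal{I})$ (positive-side nesting, zero-side containment, and the interface $\bar{I}_{\nu} \cap \bar{I}_{\nu-1} = \{x_{\nu}\}$ of dimension $n-1 = 0$) hold in the one-dimensional case. Your clause-by-clause verification is in fact more explicit than the paper's largely descriptive argument, and your attention to the open/closed boundary bookkeeping addresses a point the paper glosses over.
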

\begin{proof}
A strict partial order \citep*{Davey2002} is a relation that is not reflexive, antisymmetric and transitive. The relation $\prec$ is defined based on the ``less than'' relation $<$ of the subscripts of the $l_i$'s of equation 3.1. Because $<$ is a strict partial order, so is $\prec$.

$\mathscr{F}(\mathcal{I})$ describes the relationship that each interval $I_{i} = (x_i, x_{i + 1}]$ is on the left side of point $x_j$ for $j \ge i+1$ (i.e., $I_i \subset \bigcap_{j = i+1}^{\zeta}x_j^0$) and is on the right side of $x_{k}$ for $ k \le i$ (i.e., $I_i \subset \bigcap_{k = 1}^{i}x_k^+$). $\mathscr{F}(\mathcal{R})$ represents the similar meaning for regions, for which the ``right side'' and ``left side''of $l_i$ are substituted by $l_i^+$ and $l_i^0$, respectively, and the term $q_{\nu} = (R_{i} \cap R_{i-1}) \subseteq l_i$ with $\dim(q_{\nu}) = n-1$ is the generalized version of $\bar{I}_i \cap \bar{I}_{i-1} = x_i$.
\end{proof}

\subsection{Construction of Strict Partial Orders}
\begin{dfn}[Region of the universal set]
A region $R$ of $U = [0, 1]^n$ with respect to a set $\mathcal{H}$ of hyperplanes is the intersection of $U$ and a region $R'$ derived from $\mathcal{H}$, provided that $R \ne \emptyset$ and $\dim{R} = n$; that is, $R = U \cap R'$. Since $U$ itself is a region produced by some hyperplanes, $R$ is also a region. We also say that $R$ is a region of $U$ formed by $\mathcal{H}$.
\end{dfn}

\begin{prp}
To $U = [0, 1]^n$, there exists a set $\mathcal{H}$ of $n-1$-dimensional hyperplanes $l_i$'s for $i = 1, 2, \dots, \zeta$ to form a strict partial order of equation 3.1, with the ordered regions $R_i$'s of $U$ with respect to $\mathcal{H}$ satisfying $\bigcup_{i = 1}^{\zeta}R_i = U$.
\end{prp}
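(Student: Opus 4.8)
The plan is to settle the existence claim by an explicit construction that is the direct higher-dimensional counterpart of the ordered knots $0 < x_1 < \cdots < x_{\zeta-1} < 1$ of equation 2.2: I will cut the cube $U$ with a family of mutually parallel hyperplanes, all sharing one normal direction, so that the numerical order of their offset levels plays exactly the role that ``$<$'' played on the line. Concretely, take the normal $\boldsymbol{w} = (1, 0, \dots, 0)^T$, so that $\boldsymbol{w}^T\boldsymbol{x} = x_1$ and this functional sweeps out $[0,1]$ on $U$. Fix offsets
\[ c_1 < 0 < c_2 < c_3 < \cdots < c_{\zeta} < 1, \]
and let $l_i$ be the hyperplane $x_1 = c_i$ produced by the ReLU output $\sigma(x_1 - c_i)$, so that $l_i^+ = \{x_1 > c_i\}$, $l_i^0 = \{x_1 \le c_i\}$, and $\mathcal{H} = \{l_i : i = 1, \dots, \zeta\}$ has $\zeta$ members. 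The single offset $c_1 < 0$ pushes $l_1$ strictly below $U$, supplying the ``lower'' hyperplane whose positive side swallows all of $U$; the remaining $l_2, \dots, l_\zeta$ genuinely slice the interior of the cube and carve it into exactly $\zeta$ nonempty $n$-dimensional slabs.

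I would then declare those slabs to be the ordered regions. Setting $c_{\zeta+1} := 1$, put
\[ R_i = \{\boldsymbol{x} \in U : c_i \le x_1 \le c_{i+1}\}, \qquad i = 1, 2, \dots, \zeta, \]
each nonempty and $n$-dimensional by the spacing of the offsets, hence a bona fide region of $U$. Because the subintervals $[c_i, c_{i+1}]$ tile $[0,1]$ along the $x_1$-axis, $\bigcup_{i=1}^{\zeta} R_i = U$ is immediate, which is the covering assertion. For the three clauses of $\mathscr{F}(\mathcal{R})$ in equation 3.1 I would argue as follows. If $\boldsymbol{x} \in R_i$ then $x_1 \ge c_i \ge c_\mu$ for every $\mu \le i$, whence $R_i \subset \bigcap_{\mu=1}^{i} l_\mu^+$. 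Next, $\bigcup_{j=1}^{\nu-1} R_j = \{\boldsymbol{x} \in U : x_1 \le c_\nu\} \subseteq l_\nu^0$. Finally, $R_{\nu-1}$ and $R_\nu$ abut precisely on $\{x_1 = c_\nu\}$, so $q_\nu = R_\nu \cap R_{\nu-1} = \{\boldsymbol{x} \in U : x_1 = c_\nu\} \subseteq l_\nu$, and since $0 < c_\nu < 1$ for every $\nu \ge 2$ this common boundary is a genuine facet of the cube, giving $\dim(q_\nu) = n-1$. All clauses holding, $l_1 \prec l_2 \prec \cdots \prec l_\zeta$ is the required strict partial order with the $R_i$ as its ordered regions.

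I expect the construction itself to be routine; the only points demanding care are bookkeeping, not depth. The first is the boundary convention: the clause $R_i \subset \bigcap_{\mu=1}^i l_\mu^+$ is phrased with the open positive sides, whereas $R_i$, being a closure, retains its lower face $\{x_1 = c_i\}$, which is exactly the shared facet $q_i$ already handled by the last clause; this containment must therefore be read modulo common boundaries, just as in the one-dimensional correspondence $\mathscr{F}(\mathcal{I})$ established earlier, and the strict inequality $c_1 < 0$ is what lets $R_1 \subset l_1^+$ hold with no such caveat. The second is the matching of $\zeta$ hyperplanes against $\zeta$ regions, which forces exactly one hyperplane to sit outside $U$; placing $l_1$ strictly below the cube is the cleanest way to achieve it and is the geometric echo of the auxiliary knot $x_0 \le 0$ used for the first linear piece in Section 2. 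The single quantitative requirement that could actually break the argument is $0 < c_\nu < 1$ for the cutting levels, which alone secures $\dim(q_\nu) = n-1$; everything else follows from the ordering of the offsets.
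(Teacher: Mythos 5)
Your construction is exactly the paper's second method---translating a hyperplane through $U$ in $\zeta-1$ steps while keeping $l_1$ strictly below the cube so that $U \subset l_1^+$---worked out in explicit coordinates; the paper merely sketches this (together with a rotation variant for $n=2$) and asserts without verification that it generalizes to $n \ge 3$. Your proof is correct and in fact more careful than the paper's own, including the observation that $R_i \subset l_i^+$ can only be read modulo the shared facet $q_i$ since the regions are closures, a boundary convention the paper itself adopts tacitly at the start of section 4.2.
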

\begin{proof}
In the two-dimensional case of $n = 2$, it is easy to imagine two kinds of solutions. Let $U \subset l_1^+$ and then rotate $l_1$ around any point of $l_1$ counterclockwise or clockwise to form $l_{\nu}$'s for $\nu = 2, 3, \dots, \zeta$, each of which intersects $U$ at more than one point. Then a region $R_{\mu}$ between $l_{\mu}$ and $l_{\mu + 1}$ for $\mu = 1, 2, \dots, \zeta-1$ together with $R_\zeta \subset l_{\zeta}^+ \cap U$ could fulfil equation 3.1 and simultaneously $\bigcup_{i = 1}^{\zeta}R_i = U$. The other way is to translate $l_1^+$ to pass though $U$ by $\zeta-1$ steps, with each intersecting $U$ at more than one point, which can also form $\zeta$ regions satisfying equation 3.1 whose union is $U$. The second method can be generalized to the $n$-dimensional case for $n \ge 3$.
\end{proof}

\begin{prp}
Under some notations of proposition 3, more than one strict partial order $\mathscr{P}_i = (H_i, \prec)$ for $i = 1, 2, \dots, \psi$ with $\psi \ge 2$ could simultaneously exist, with the constraint that $\bigcup_{i = 1}^{\psi}H_i = \mathcal{H}$, $H_{\nu} \cap H_{\mu} = \emptyset$ for $1 \le \nu, \mu \le \psi$ and $\nu \ne \mu$, and that $\bigcup_{i = 1}^{\psi}\bigcup_{j = 1}^{\phi_i}R_j^{(i)}\subseteq U$, where $\{R_j^{(i)}: j = 1, 2, \dots, \phi_i = |H_i|\}$ is the set of the ordered regions of $U$ formed by $H_i$.
\end{prp}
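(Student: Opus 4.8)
The plan is to give a constructive existence proof built directly on Proposition 3. Since that proposition already shows a single family of hyperplanes can be arranged into a strict partial order whose ordered regions fill $U$, the entire task reduces to exhibiting a partition of $\mathcal{H}$ into $\psi$ disjoint subfamilies, each of which independently forms such an order.

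First I would construct the $\psi$ families by the translation method of Proposition 3, using distinct orientations. Fix $\psi \ge 2$ pairwise non-parallel normal vectors $\boldsymbol{w}^{(1)}, \dots, \boldsymbol{w}^{(\psi)}$ of $\mathbb{R}^n$, which is possible whenever $n \ge 2$ (the regime of Definition 3). For each $i$, translate a hyperplane with normal $\boldsymbol{w}^{(i)}$ across $U$ exactly as in the second construction of Proposition 3, obtaining a family $H_i = \{l_j^{(i)}: j = 1, \dots, \phi_i\}$ with $|H_i| = \phi_i$ that satisfies equation 3.1 and whose ordered regions $R_j^{(i)}$ cover $U$. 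Each $H_i$ then carries a strict partial order $\mathscr{P}_i = (H_i, \prec)$ by Proposition 3, and I set $\mathcal{H} = \bigcup_{i=1}^{\psi} H_i$.

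Next I would verify the three constraints. Disjointness is immediate: the hyperplanes inside a single family all share the normal $\boldsymbol{w}^{(i)}$, so a member of $H_\nu$ and a member of $H_\mu$ with $\nu \ne \mu$ have non-parallel normals and cannot coincide, giving $H_\nu \cap H_\mu = \emptyset$; and $\bigcup_{i=1}^{\psi} H_i = \mathcal{H}$ holds by definition. The region constraint is likewise automatic, since by Definition 4 each $R_j^{(i)}$ is a region of $U$ and hence $R_j^{(i)} \subseteq U$, whence $\bigcup_{i=1}^{\psi}\bigcup_{j=1}^{\phi_i} R_j^{(i)} \subseteq U$.

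The step I expect to need the most care is not the construction, which is essentially a restatement of Proposition 3, but the verification that each subfamily $H_i$ satisfies the ordering condition $\mathscr{F}$ of equation 3.1 in isolation, i.e. without interference from the hyperplanes of the other families. This is exactly where choosing parallel hyperplanes within each family pays off: along the common normal $\boldsymbol{w}^{(i)}$ the offsets reduce to a one-dimensional increasing sequence, so the inclusions $\bigcup_{j=1}^{\nu-1} R_j^{(i)} \subseteq (l_\nu^{(i)})^0$ and $R_\nu^{(i)} \subset \bigcap_{\mu=1}^{\nu} (l_\mu^{(i)})^+$ hold for $H_i$ precisely as in Proposition 3, independently of the other orders.
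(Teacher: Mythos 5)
Your proposal is correct as an existence argument for the statement as written, but it takes a genuinely different route from the paper. The paper never leaves the single parallel family of Proposition 3's translation method: it picks an interior hyperplane $l_{\mu}$ (with $2 \le \mu \le \zeta-1$) and converts each $l_{\kappa}$ for $\kappa \le \mu$ into its negative form $-\boldsymbol{w}_{\kappa}^T\boldsymbol{x} - b_{\kappa} = 0$, exchanging positive- and zero-output sides; this splits the one chain into two oppositely directed orders $l_{\mu} \prec l_{\mu-1} \prec \dots \prec l_2$ and $l_{\mu+1} \prec l_{\mu+2} \prec \dots \prec l_{\zeta}$, whose ordered regions are pairwise disjoint with union $U - R_{\mu}$, a proper subset of $U$. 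You instead take $\psi$ pairwise non-parallel normal directions and run Proposition 3 once per direction, so each order covers $U$ on its own and the three constraints are verified almost trivially (your construction even satisfies $\bigcup_i H_i = \mathcal{H}$ more cleanly than the paper's, whose flipped $l_1$ ends up in neither order). What your route buys is simplicity, exact coverage ($=U$ rather than $U - R_{\mu}$), and an arbitrary number $\psi \ge 2$ of orders in one stroke. What it gives up is disjointness of the ordered regions across orders: every slab of $H_{\nu}$ intersects every slab of $H_{\mu}$ in a full-dimensional set. The proposition's statement does not demand that disjointness, so your proof stands; but disjointness is exactly the phenomenon the paper's example is staged to exhibit, since in Section 6 (Theorem 7) coexisting orders are required to have disjoint ordered-region sets --- each region's linear piece must be governed by a single order --- and the sign-flipping device of the paper's proof is the seed of the negative forms and two-sided bases of Lemma 7 and Theorem 8. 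So your construction proves Proposition 4 but would not serve as a prototype for the later function-construction machinery, which is the role the paper's (self-admittedly incomplete) example is meant to play.
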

\begin{proof}
For example, based on the proof of proposition 3, arbitrarily select $l_{\mu}$ for $2 \le \mu \le \zeta-1$. Exchange the positive- and zero-output regions of each $l_{\kappa}$ for $1 \le \kappa \le \mu$ by converting its equation $\boldsymbol{w}_{\kappa}^T\boldsymbol{x} + b_{\kappa} = 0$ into $-\boldsymbol{w}_{\kappa}^T\boldsymbol{x} - b_{\kappa} = 0$. Then two independent strict partial orders are formed, including $l_{\mu} \prec l_{\mu-1} \prec \dots \prec l_2$ and $l_{\mu + 1} \prec l_{\mu+2} \prec \dots \prec l_{\zeta}$. The union of the ordered regions of the two strict partial orders is $U - R_{\mu}$, a subset of $U$.
\end{proof}

\begin{rmk}
The existence proof of this proposition is far from complete in constructing more than one strict partial order. This conclusion is better to be regarded as the description of a more general phenomenon and we'll give more examples in section 6.
\end{rmk}

\section{Approximation over One Strict Partial Order}
This section first establishes the relationship between the linear pieces of a piecewise linear function of a two-layer ReLU network (theorem 2) analogous to equation 2.12, after which arbitrary piecewise linear function on a set of ordered regions can be constructed (theorem 3), completing the generalization of section 1.
\subsection{Axiomatic Foundation}
\begin{dfn}[A two-layer ReLU network]
The output of a two-layer ReLU network $\mathfrak{N}$ with $n$-dimensional input is defined as
\begin{equation}
y = \sum_{i = 1}^m\lambda_i\sigma(\boldsymbol{w}_i^T\boldsymbol{x} + b_i),
\end{equation}
where $\sigma(x) = \max\{0, x\}$ is the activation function of a ReLU. In equation 4.1, all the ReLUs $\mathscr{U}_i$'s form the hidden layer of $\mathfrak{N}$ and $\lambda_i$ is the \textsl{output weight} of the $i$th one. The $n$-dimensional input space is denoted by $\mathbb{R}^n$.
\end{dfn}

\begin{ax}
A ReLU $\mathscr{U}_i$ or its associated hyperplane $\mathcal{L}_i$ with equation $\boldsymbol{w}_i^T\boldsymbol{x} + b_i = 0$ of the hidden layer of $\mathfrak{N}$ could influence half of the input space $\mathbb{R}^n$, denoted by $\mathcal{L}_i^+$, through its nonzero positive output, and has no impact on the other half $\mathcal{L}_i^0$ in terms of its zero output.
\end{ax}

\begin{ax}
The output of a ReLU $\mathscr{U}_i$ on $\mathcal{L}_i^+$ is a linear function $y_i = \boldsymbol{w}_i^T\boldsymbol{x} + b_i$.
\end{ax}

\begin{dfn}[Continuity at a knot]
A function $f: \mathbb{R}^n \to \mathbb{R}$ is said to be continues at knot $\mathcal{L}$, if
\begin{equation}
\lim_{\boldsymbol{x} \to \mathcal{L}_+}f(\boldsymbol{x}) = \lim_{\boldsymbol{x} \to \mathcal{L}_0}f(\boldsymbol{x}) = f(\boldsymbol{x}_{\mathcal{L}}),
\end{equation}
where $\boldsymbol{x} \to \mathcal{L}_+$ and $\boldsymbol{x} \to \mathcal{L}_0$ mean that $\boldsymbol{x}$ approaches knot $\mathcal{L}$ from the parts $\mathcal{L}^+$ and $\mathcal{L}^0$, respectively, through an arbitrary one-dimensional line $\phi$ satisfying $\phi \cap \mathcal{L} \ne \emptyset$ and $\phi \not\subset \mathcal{L}$, and $\boldsymbol{x}_{\mathcal{L}}$ is the point $\phi \cap \mathcal{L}$. If equation 4.2 doesn't hold, we say that $f(\boldsymbol{x})$ is discontinuous at knot $\mathcal{L}$.
\end{dfn}

\subsection{Correlation between Linear Pieces}
\begin{lem}
Let $\mathscr{L}$ and $l_{\nu}$ be an $n-1$- and $\nu$-dimensional hyperplane of $\mathbb{R}^n$, respectively, where $1 \le \nu \le n-1$ and $n \ge 2$. If $l_{\nu} \nsubseteq \mathscr{L}$ and $l_{\nu} \cap \mathscr{L} \ne \emptyset$, then their intersection $l = \mathscr{L} \cap l_{\nu}$ is a $\nu-1$-dimensional hyperplane.
\end{lem}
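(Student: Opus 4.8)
The plan is to reduce the intersection to the solution set of a single affine equation restricted to $l_{\nu}$ and then count dimensions. Since $\mathscr{L}$ is an $n-1$-dimensional hyperplane, I would write it as the zero set of one affine equation, $\mathscr{L} = \{\boldsymbol{x} \in \mathbb{R}^n: \boldsymbol{w}^T\boldsymbol{x} + b = 0\}$ with $\boldsymbol{w} \ne \boldsymbol{0}$. Since $l_{\nu}$ is a $\nu$-dimensional hyperplane, I would fix a point $\boldsymbol{p} \in l_{\nu}$ together with an $n \times \nu$ matrix $\boldsymbol{V}$ of full column rank $\nu$ whose columns span the direction space of $l_{\nu}$, so that $l_{\nu} = \{\boldsymbol{p} + \boldsymbol{V}\boldsymbol{t}: \boldsymbol{t} \in \mathbb{R}^{\nu}\}$ and the map $\boldsymbol{t} \mapsto \boldsymbol{p} + \boldsymbol{V}\boldsymbol{t}$ is an affine isomorphism onto $l_{\nu}$.

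Substituting this parametrization into the equation of $\mathscr{L}$, a point $\boldsymbol{p} + \boldsymbol{V}\boldsymbol{t}$ lies in $l = \mathscr{L} \cap l_{\nu}$ if and only if $(\boldsymbol{V}^T\boldsymbol{w})^T\boldsymbol{t} = -(b + \boldsymbol{w}^T\boldsymbol{p})$, a single linear equation in the $\nu$ parameters $\boldsymbol{t}$ with coefficient vector $\boldsymbol{V}^T\boldsymbol{w} \in \mathbb{R}^{\nu}$. The key step, which I expect to be the only real content of the lemma, is to show that the two hypotheses jointly force this equation to be nondegenerate, i.e. $\boldsymbol{V}^T\boldsymbol{w} \ne \boldsymbol{0}$. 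If instead $\boldsymbol{V}^T\boldsymbol{w} = \boldsymbol{0}$, the equation collapses to $0 = -(b + \boldsymbol{w}^T\boldsymbol{p})$, which is satisfied either by every $\boldsymbol{t}$ or by none; the assumption $l_{\nu} \cap \mathscr{L} \ne \emptyset$ excludes the empty case, forcing every $\boldsymbol{t}$ to be a solution and hence $l_{\nu} \subseteq \mathscr{L}$, contradicting $l_{\nu} \nsubseteq \mathscr{L}$. Thus $\boldsymbol{V}^T\boldsymbol{w} \ne \boldsymbol{0}$, and this is precisely where both hypotheses get used: nonemptiness rules out $\mathscr{L}$ missing $l_{\nu}$, and non-containment rules out $\mathscr{L}$ swallowing $l_{\nu}$, leaving genuine transversal cutting.

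With a nonzero coefficient vector, the solution set of $(\boldsymbol{V}^T\boldsymbol{w})^T\boldsymbol{t} = -(b + \boldsymbol{w}^T\boldsymbol{p})$ is a $\nu-1$-dimensional affine subspace of the parameter space $\mathbb{R}^{\nu}$. Because $\boldsymbol{V}$ has full column rank, the affine isomorphism $\boldsymbol{t} \mapsto \boldsymbol{p} + \boldsymbol{V}\boldsymbol{t}$ carries this solution set bijectively and dimension-preservingly onto $l$, so $l$ is a $\nu-1$-dimensional hyperplane, as claimed. Everything after the nondegeneracy step is routine linear algebra, so I would state the dimension count briefly; an alternative packaging would invoke the affine dimension formula $\dim(\mathscr{L} \cap l_{\nu}) = \dim \mathscr{L} + \dim l_{\nu} - \dim(\vec{\mathscr{L}} + \vec{l_{\nu}})$ valid when the intersection is nonempty, but I prefer the explicit single-equation argument above since it keeps the role of each hypothesis fully visible and matches the paper's habit of describing hyperplanes through their defining equations.
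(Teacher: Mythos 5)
Your proposal is correct and follows essentially the same route as the paper's proof: parametrize $l_{\nu}$ affinely, substitute into the defining equation of $\mathscr{L}$ to get a single linear equation in the $\nu$ parameters, use the two hypotheses to rule out the degenerate case where the coefficient vector $\boldsymbol{V}^T\boldsymbol{w}$ (the paper's $t_i = \boldsymbol{w}^T\boldsymbol{r}_i$) vanishes, and conclude the solution set is $\nu-1$-dimensional. The only cosmetic difference is that the paper finishes by explicitly solving for one parameter $\alpha_{\mu}$ to exhibit a parametric equation of the intersection, whereas you invoke the dimension of the solution set of a nondegenerate affine equation together with the rank of $\boldsymbol{V}$; both are the same dimension count.
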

\begin{proof}
We write the parametric equation of $l_{\nu}$ as
\begin{equation}
\boldsymbol{x} = \boldsymbol{x}_O + \sum_{i = 1}^{\nu}\alpha_{i}\boldsymbol{r}_{i}.
\end{equation}
The equation of $\mathscr{L}$ is $\boldsymbol{w}^T\boldsymbol{x} + b = 0$. Substituting the equation of $l_{\nu}$ into that of $\mathscr{L}$ yields $\sum_{i = 1}^{\nu}t_{i}\alpha_{i} = -(\boldsymbol{w}^T\boldsymbol{x}_O + b)$, where $t_i = \boldsymbol{w}^T\boldsymbol{r}_i$. Not all of $t_i$'s are zero; for otherwise, $\boldsymbol{w} \perp l_{\nu}$, implying $l_{\nu} \parallel \mathscr{L}$ or $l_{\nu} \subset \mathscr{L}$, which is a contradiction. Then select one $t_{\mu} \ne 0$ for $1 \le \mu \le \nu$ such that
\begin{equation}
\alpha_{\mu} = \big(-(\boldsymbol{w}^T\boldsymbol{x}_O + b) - \sum_{i \ne \mu}t_{i}\alpha_{i}\big)/t_{\mu}.
\end{equation}
By equations 4.3 and 4.4, we obtain the equation $\boldsymbol{x} = \boldsymbol{x}'_O + \sum_{i \ne \mu}\alpha_{i}\boldsymbol{r}'_{i}$ of $l = \mathscr{L} \cap l_{\nu}$, where $\boldsymbol{x}'_O = \boldsymbol{x}_O -(\boldsymbol{w}^T\boldsymbol{x}_O + b)/t_{\mu}\cdot\boldsymbol{r}_{\mu}$ and $\boldsymbol{r}'_{i} = \boldsymbol{r}_{i} - t_i/t_{\mu}\cdot\boldsymbol{r}_{\mu}$, which is a $\nu - 1$-dimensional hyperplane.
\end{proof}

\begin{figure}[!t]
\captionsetup{justification=centering}
\centering
\subfloat[One-dimensional case.]{\includegraphics[width=2.3in, trim = {4.0cm 2.5cm 3.5cm 1.5cm}, clip]{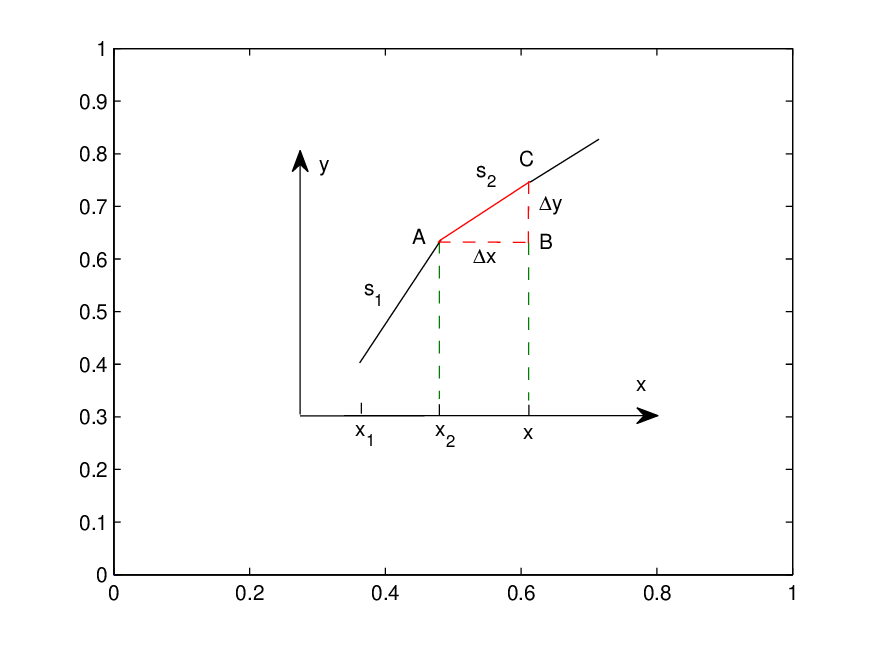}} \quad \quad \quad \quad
\subfloat[Two-dimensional case.]{\includegraphics[width=2.8in, trim = {3.0cm 1.5cm 1.6cm 1.0cm}, clip]{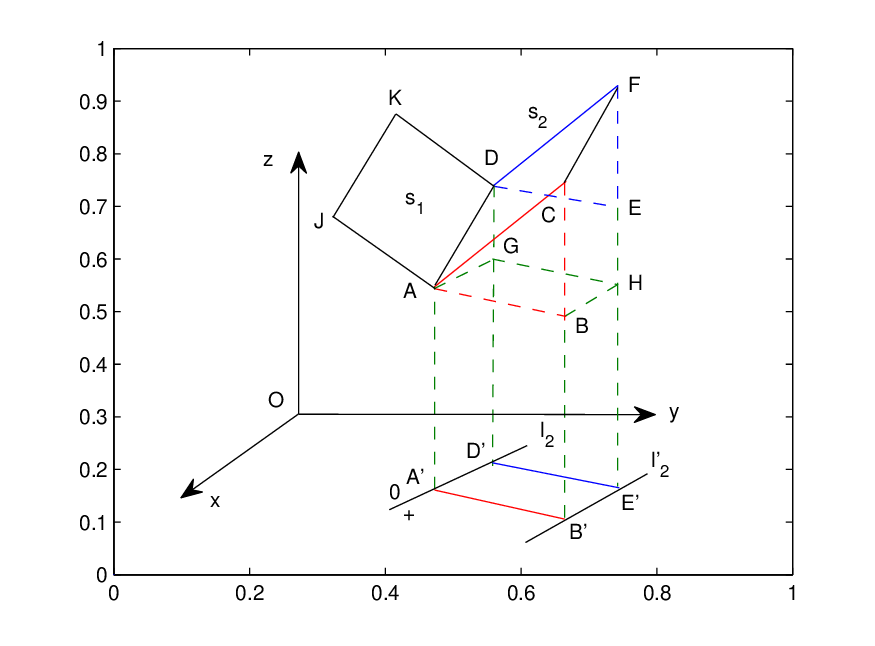}}
\caption{Correlation between linear pieces.}
\label{Fig.2}
\end{figure}

In definition 1, we introduced the closure operation of a set. When constructing a continuous piecewise linear function on some regions $R_i$'s for $i = 1, 2, \dots, \zeta$, the function values on the boundary of $R_i$ (see later definition 19) could be easily set. For example, in the following theorem 2, $f(\boldsymbol{x})$ on $R_1 \cap R_2$ can be simply classified into the case of $R_1$. Thus, for simplicity of descriptions, unless otherwise stated, we will not mention the boundary case hereafter.

\begin{thm}
Denote by $R_1$ and $R_2$ the two regions of $\mathbb{R}^n$ separated by knot $\mathscr{L}$ whose equation is $\boldsymbol{w}^T\boldsymbol{x} + b = 0$, with $R_1 = \mathscr{L}^0$ and $R_2 = \mathscr{L}^+$. Suppose that $f: (R_1 \cup R_2) \to \mathbb{R}$ is a piecewise linear function with two pieces and that the linear function
\begin{equation}
s_1 = \boldsymbol{w}_1^T\boldsymbol{x} + b_1
\end{equation}
for $\boldsymbol{x} \in R_1$ has been fixed. Then any continuous $f(\boldsymbol{x})$ can be expressed in this form
\begin{equation}
f(\boldsymbol{x}) = s_1(\boldsymbol{x}) + \lambda\sigma(\boldsymbol{w}^T\boldsymbol{x} + b),
\end{equation}
where the parameter $\lambda$ can determine arbitrary linear function $s_2 = \boldsymbol{w}_2^T\boldsymbol{x} + b_2$ on $R_2$ that is continuous with $s_1$ at knot $\mathscr{L}$. Conversely, if a piecewise linear function $f(\boldsymbol{x})$ is in the form of equation 4.6, it is continuous at $\mathscr{L}$. Equation 4.6 is equivalent to
\begin{equation}
s_2(\boldsymbol{x}) = s_1(\boldsymbol{x}) + \lambda\sigma(\boldsymbol{w}^T\boldsymbol{x} + b)
\end{equation}
for $\boldsymbol{x} \in R_2$. To a certain $s_2$, the parameter $\lambda$ of equation 4.7 is unique, if the equation $\boldsymbol{w}^T\boldsymbol{x} + b = 0$ of the knot is fixed; that is, the map between $\lambda$ and $s_2$ is bijective.
\end{thm}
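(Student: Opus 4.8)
The plan is to reduce the whole statement to one linear-algebra fact: an affine function that vanishes on the hyperplane $\mathscr{L}$ must be a scalar multiple of its defining form $\boldsymbol{w}^T\boldsymbol{x}+b$. First I would record how the extra term behaves on each region. On $R_1=\mathscr{L}^0$ we have $\boldsymbol{w}^T\boldsymbol{x}+b\le 0$, so $\sigma(\boldsymbol{w}^T\boldsymbol{x}+b)=0$ and equation 4.6 collapses to $f=s_1$, matching the prescribed piece there (Axioms 1 and 2). On $R_2=\mathscr{L}^+$ we have $\boldsymbol{w}^T\boldsymbol{x}+b\ge 0$, so $\sigma(\boldsymbol{w}^T\boldsymbol{x}+b)=\boldsymbol{w}^T\boldsymbol{x}+b$ and equation 4.6 becomes $f=s_1+\lambda(\boldsymbol{w}^T\boldsymbol{x}+b)=(\boldsymbol{w}_1+\lambda\boldsymbol{w})^T\boldsymbol{x}+(b_1+\lambda b)$, an affine function $s_2$ on $R_2$. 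This is already the equivalence of equations 4.6 and 4.7, with $\boldsymbol{w}_2=\boldsymbol{w}_1+\lambda\boldsymbol{w}$ and $b_2=b_1+\lambda b$.

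Next I would translate continuity at the knot into a vanishing condition. Since $s_1$ and $s_2$ are affine, hence continuous, and each attains the limiting value at the intersection point of a transversal line with $\mathscr{L}$, the definition of continuity at a knot shows that $f$ is continuous at $\mathscr{L}$ precisely when $s_1(\boldsymbol{x})=s_2(\boldsymbol{x})$ for every $\boldsymbol{x}\in\mathscr{L}$. Equivalently, the affine function $g(\boldsymbol{x}):=s_2(\boldsymbol{x})-s_1(\boldsymbol{x})=(\boldsymbol{w}_2-\boldsymbol{w}_1)^T\boldsymbol{x}+(b_2-b_1)$ vanishes identically on the set $\{\boldsymbol{x}:\boldsymbol{w}^T\boldsymbol{x}+b=0\}$.

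The core step is to prove that any such $g$ equals $\lambda(\boldsymbol{w}^T\boldsymbol{x}+b)$ for a unique scalar $\lambda$. Since $g$ is constant (zero) along every direction lying inside $\mathscr{L}$, its gradient $\boldsymbol{w}_2-\boldsymbol{w}_1$ is orthogonal to all those directions and therefore parallel to the normal $\boldsymbol{w}$, which is nonzero because $\mathscr{L}$ is a genuine $n-1$-knot; thus $\boldsymbol{w}_2-\boldsymbol{w}_1=\lambda\boldsymbol{w}$ for some $\lambda$. Evaluating $g$ at any point of $\mathscr{L}$, where $\boldsymbol{w}^T\boldsymbol{x}=-b$, then forces $b_2-b_1=\lambda b$, so $g=\lambda(\boldsymbol{w}^T\boldsymbol{x}+b)$. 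This simultaneously establishes existence (every $s_2$ continuous with $s_1$ is reached) and pins down $\lambda$.

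Finally I would assemble the stated claims. The converse is immediate: if $f$ has the form 4.6, then by the first paragraph $s_2-s_1=\lambda(\boldsymbol{w}^T\boldsymbol{x}+b)$ vanishes on $\mathscr{L}$, so $f$ is continuous there. For bijectivity with the knot fixed, surjectivity onto the continuous pieces $s_2$ is the existence just shown, and injectivity holds because $\lambda\mapsto\lambda(\boldsymbol{w}^T\boldsymbol{x}+b)$ is injective when $\boldsymbol{w}\ne\boldsymbol{0}$: distinct $\lambda$ produce distinct gradients $\boldsymbol{w}_1+\lambda\boldsymbol{w}$, hence distinct $s_2$. The one place demanding genuine care—the main obstacle—is the gradient/normal argument promoting ``vanishes on a codimension-one affine set'' to ``proportional to the defining form.'' This is the higher-dimensional replacement for the trivial one-dimensional fact that two lines agreeing at a point differ by a multiple of $(x-x_0)$, and it is exactly where the hypothesis that $\mathscr{L}$ is a true hyperplane (so $\boldsymbol{w}\ne\boldsymbol{0}$) is essential.
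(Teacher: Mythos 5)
Your proof is correct, and it takes a genuinely different route from the paper. The paper proves Theorem 2 by an extended geometric construction: it first handles the one-dimensional case with a right triangle (obtaining $\lambda = \tan\theta_2 - \tan\theta_1$), then the two-dimensional case by building planes $B'A'A$ orthogonal to the knot, reducing to the one-dimensional picture inside each such plane, and proving via congruent triangles that the resulting coefficient $\lambda_A$ is the same constant at every point of $\mathscr{L}^+$; the case $n \ge 3$ is then handled by checking that the needed Euclidean facts (unique perpendicular, point-to-hyperplane distance, parallel-hyperplane distance, plane-hyperplane intersection) survive in higher dimension. You instead collapse the entire statement onto one linear-algebra lemma: an affine function $g$ vanishing identically on the hyperplane $\{\boldsymbol{w}^T\boldsymbol{x}+b=0\}$ has gradient orthogonal to every direction in the hyperplane, hence parallel to $\boldsymbol{w}$, so $g = \lambda(\boldsymbol{w}^T\boldsymbol{x}+b)$ with $\lambda$ unique since $\boldsymbol{w}\neq\boldsymbol{0}$. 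That argument is uniform in the dimension, avoids the case split entirely, and is easier to verify rigorously than the paper's synthetic geometry (where, e.g., the congruence and parallelism claims carry implicit bookkeeping). What the paper's longer route buys is the explicit geometric reading of the output weight $\lambda = \mathfrak{C}(\tan\theta_2 - \tan\theta_1)$ with $\mathfrak{C} = 1/\Vert\boldsymbol{w}\Vert_2$, i.e., $\lambda$ as an ``intrinsic geometry'' quantity built from the dihedral angles of the two pieces; this interpretation is reused later (item 4 of section 9) and would not fall out of your algebraic derivation without extra work, although your identity $\boldsymbol{w}_2-\boldsymbol{w}_1=\lambda\boldsymbol{w}$ encodes the same information in coordinate form.
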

\begin{proof}
We first prove the converse conclusion. Let $\boldsymbol{x}_0 \in \mathscr{L}$ be an arbitrary point of $\mathscr{L}$. Then
\begin{equation}
\begin{aligned}
\lim_{\alpha \to 0^+}f(\boldsymbol{x}_0 &+ \alpha \boldsymbol{d}) = \lim_{\alpha \to 0^-}f(\boldsymbol{x}_0 + \alpha \boldsymbol{d}) = f(\boldsymbol{x}_0) \\
&= s_1(\boldsymbol{x}_0) + \lambda\sigma(\boldsymbol{w}^T\boldsymbol{x}_0 + b)\\
&= \boldsymbol{w}_1^T\boldsymbol{x}_0 + b_1,
\end{aligned}
\end{equation}
where $\boldsymbol{d}$ is any direction of a line $l$ that intersects $\mathscr{L}$ with $l \cap \mathscr{L} \ne l$, and $\boldsymbol{w}_1$ and $b_1$ are from equation 4.5, which proves the continuity of $f(\boldsymbol{x})$ at $\mathscr{L}$.

The equivalence of equations 4.6 and 4.7 is obvious, since they can be deduced from each other. Thus, it's sufficient to prove equation 4.7. The remaining proof is composed of three parts, corresponding to the one-, two- and $n$-dimensional cases for $n \ge 3$, respectively. The first two parts provide the intuitive idea to be generalized to a higher-dimensional input space.

Throughout the proof, we tacitly assume that the representation of an angle between two lines or vectors is under the definition of the two-dimensional Cartesian coordinate system. For convenience, the notations of subdomains $R_1$ and $R_2$ of $f(\boldsymbol{x})$ are sometimes substituted with $\mathscr{L}^0$ and $\mathscr{L}^+$, respectively.

\textbf{One-dimensional input}. As shown in Figure \ref{Fig.2}a, to a fixed linear function $s_1(x) = a_1x + b_1$ on $[x_1, x_2]$, we use a geometric approach different from lemma 1 to prove that any $s_2$ continuous with $s_1$ at $x_2$ can be expressed as $s_2 = a_2x_2 + b_2 = s_1 + \lambda\sigma(x - x_2)$. Let $y_2 = s_1(x_2)$. In Figure \ref{Fig.2}a, $\Delta x = x - x_2$. By the right triangle $\Delta ABC$, to any $x \in (x_2, +\infty)$, we have
\begin{equation}
s_2(x) = y_2 + \Delta y = y_2 + a_2\Delta x = y_2 + a_2\sigma(x - x_2),
\end{equation}
where $a_2 = \tan\angle CAB$. Because
\begin{equation}
y_2 = s_1(x-\Delta x) = s_1(x) - a_1\Delta x.
\end{equation}
Equations 4.9 and 4.10 give
\begin{equation}
s_2 = s_1 + \lambda\Delta x = s_1 + \lambda\sigma(x - x_2),
\end{equation}
where $\lambda = a_2 - a_1$, the same as equation 2.10. Let $\theta_1$ be the angle between $s_1$ and $x$-axis and $\theta_2 = \angle CAB$. Then $\lambda$ of equation 4.11 can be expressed as
\begin{equation}
\lambda(\theta_1, \theta_2) = \tan\theta_2 - \tan\theta_1.
\end{equation}

\textbf{Two-dimensional input}. The proof is illustrated by an example of Figure \ref{Fig.2}b, in which two linear pieces $s_1$ and $s_2$ are connected at knot $l_2$. The notation of three non-collinear points (e.g., $xOy$ or $B'A'A$) without prefix ``$\Delta$'' or ``$\angle$'' denotes the plane formed by them. Throughout the proof, the term ``projection'' means \textsl{orthogonal projection}, which involves the foot of a perpendicular to a line or hyperplane passing through a point. For instance, $A'$ is the projection of $A$ on $xOy$, due to $AA' \perp xOy$.

By an example, the notations $AD$, $\bar{A}D$ and $\vec{A}D$ correspond to the segment, line and ray associated with points $A$ and $D$, respectively. Line $\bar{A}D = s_1 \cap s_2$ and its projection on plane $xOy$ is the knot $l_2$ denoted by equation
\begin{equation}
\boldsymbol{w}_2^T\boldsymbol{x} + b_2 = 0.
\end{equation}
We first examine the function value $s_2(B')$ on point $B' \in R_2 = l_2^+$, with $B'$ subject to $B'A' \perp l_2$. Translate $l_2$ into a new position $l'_2$ passing through $B'$---that is, $l_2' \parallel l_2$ and $B' \in l_2'$. By the construction of Figure \ref{Fig.2}b, we have $BB' \perp xOy$ and $AB \parallel A'B'$. Then $AB \perp BC$ or $\Delta ABC$ is a right triangle.

Now we make an analogy between the two red triangles of Figures \ref{Fig.2}a and \ref{Fig.2}b, respectively. Let $s^{(A)}_2 = \vec{A}C = (s_2 \cap B'A'A)$ be a linear function defined on ray $\vec{A'}B' \subset xOy$, corresponding to $s_2$ of Figure \ref{Fig.2}a. Denote by $s^{(A)}_1 = \vec{A}J = s_1 \cap B'A'A$, which is the counterpart of $s_1$ of Figure \ref{Fig.2}a.

We construct a two-dimensional coordinate system denoted by $\mathcal{C}_{A'}$ in plane $B'A'A$ analogous to the one of Figure \ref{Fig.2}a, with $A'$, $\bar{A'}B'$ and $\bar{A'}A$ bing the origin, $x$- and $y$-axis, respectively. Under the two-dimensional coordinate system $\mathcal{C}_{A'}$, let $x_A \in \vec{A'}B'$ be an arbitrary point of the positive $x$-axis. Point $x_A$ also has a vector representation $\boldsymbol{x}_A$ in the original three-dimensional coordinate system, for which there exists an one-to-one correspondence
\begin{equation}
\mathscr{F}(x_A) = \boldsymbol{x}_A.
\end{equation}
Due to geometric relationships $A'A \parallel Oz$ and $A'B' \subset xOy$, we also have
\begin{equation}
s_i(\boldsymbol{x}_A) = s^{(A)}_i(x_A)
\end{equation}
for $i = 1, 2$.

Then in plane $B'A'A$, to any point $B'$ on ray $\vec{A'}B'$, appling the one-dimensional results of equations 4.11 and 4.12 yields
\begin{equation}
s^{(A)}_2(x_A) = s^{(A)}_1(x_A) + \alpha(\theta_{A_1}, \theta_{A_2})\Delta x_A,
\end{equation}
where
\begin{equation}
\Delta x_A = A'B' = AB
\end{equation}
and
\begin{equation}
\alpha(\theta_{A_1}, \theta_{A_2}) = \tan\theta_{A_2} - \tan\theta_{A_1},
\end{equation}
in which $\theta_{A_2} = \angle BAC$ and $\theta_{A_1}$ is the angle between $s^{(A)}_1$ and $AB$.

Note that in equation 4.17, by Figure \ref{Fig.2}b, $A'B'$ equals the distance from point $B'$ to line $l_2$, that is,
\begin{equation}
A'B' = \frac{|\boldsymbol{w}_2^T\boldsymbol{x}_A + b_2|} {\Vert \boldsymbol{w}_2 \Vert_2} = \mathfrak{C}\sigma(\boldsymbol{w}_2^T\boldsymbol{x}_A + b_2),
\end{equation}
where $\boldsymbol{x}_A$ is the coordinate vector of $B'$ and
\begin{equation}
\mathfrak{C} = 1/\Vert \boldsymbol{w}_2 \Vert_2
\end{equation}
is a constant determined by $l_2$ of equation 4.13, in which 2-norm $\Vert \boldsymbol{x} \Vert_2 = \big(\sum_{i=1}^nx_i^2\big)^{1/2}$ is used. Equations from 4.15 to 4.20 give
\begin{equation}
s_2(\boldsymbol{x}_A) = s_1(\boldsymbol{x}_A) + \lambda_A\sigma(\boldsymbol{w}_2^T\boldsymbol{x}_A + b_2),
\end{equation}
where
\begin{equation}
\boldsymbol{x}_A \in \vec{A'}B'
\end{equation}
and
\begin{equation}
\lambda_A = \mathfrak{C}\alpha(\theta_{A_1}, \theta_{A_2})
\end{equation}
with $\alpha(\theta_{A_1}, \theta_{A_2})$ from equation 4.18, which is similar to equation 4.12 of the one-dimensional case.

Next, we investigate another point $E'$ on $l_2'$ of Figure \ref{Fig.2}b. Similar to point $B'$, the following conditions are satisfied in Figure \ref{Fig.2}b: $DD'\perp xOy$, $D'E' \perp l_2$, $FE' \perp xOy$, $DE \parallel GH \parallel D'E'$, and $AG \parallel BH$. Then $\Delta DEF$ is congruent to $\Delta ABC$, with $\angle DEF$ as a right angle, and thus
\begin{equation}
DE = AB.
\end{equation}
Because $A'B' \perp l_2$ and $AB \parallel A'B'$, together with the conditions above, we get
\begin{equation}
DE \parallel AB.
\end{equation}

Let $s^{(D)}_1 =  s_1 \cap E'D'D = \vec{D}K$ be a linear function defined on $\bar{D'}E' \cap l_2^0$ and $s^{(D)}_2 = s_2 \cap E'D'D = \vec{D}F$. Denote by $\theta_{D_1}$ the angle between $s^{(D)}_1$ and $DE$. Write $\theta_{D_2} = \angle EDF$. In plane $E'D'D$, applying the method of equation 4.21, we have
\begin{equation}
s_2(\boldsymbol{x}_D) = s_1(\boldsymbol{x}_D) + \lambda_D\sigma(\boldsymbol{w}_2^T\boldsymbol{x}_D + b_2),
\end{equation}
where
\begin{equation}
\boldsymbol{x}_D \in \vec{D'}E'
\end{equation}
and
\begin{equation}
\lambda_D = \mathfrak{C}\alpha(\theta_{D_1}, \theta_{D_2})
\end{equation}
in which
\begin{equation}
\alpha(\theta_{D_1}, \theta_{D_2}) = \tan\theta_{D_2} - \tan\theta_{D_1}.
\end{equation}

We now prove
\begin{equation}
\theta_{D_i} = \theta_{A_i}
\end{equation}
for $i = 1, 2$. In fact, $E'D'D \parallel B'A'A$ implies
\begin{equation}
(s^{(D)}_1 = E'D'D \cap s_1) \parallel (s^{(A)}_1 = B'A'A \cap s_1),
\end{equation}
together with $DE \parallel AB$ resulting in $\theta_{D_1} = \theta_{A_1}$. Equation $\theta_{D_2} = \theta_{A_2}$ holds due to congruence of $\Delta DEF$ and $\Delta ABC$.

Equations 4.23, 4.28 and 4.30 imply that
\begin{equation}
\lambda_D = \lambda_A := \lambda
\end{equation}
is a constant in equations 4.21 and 4.26.

As shown in Figure \ref{Fig.2}b, because $E'$ is arbitrarily selected from $l'_2$, the parameter $\lambda$ remains a constant at each point of $l'_2$ and this conclusion holds for any translated $l_2'$ satisfying $l_2' \parallel l_2$ and $l_2' \subset l_2^+$. To the other dimension, when fixing $A'$ and changing $B'$ along the ray $\vec{A'}B'$, $\lambda$ is also a constant as the one-dimensional case. When $A'$ runs over all the points of $l_2$, the set of $\boldsymbol{x}_A$ of equation 4.21 can include each point of $l_2^+$, that is,
\begin{equation}
l_2^+ \subseteq \{\boldsymbol{x}_A: A' \in l_2, \boldsymbol{x}_A \in \vec{A'}B'\}.
\end{equation}
Equations 4.21, 4.32 and 4.33 prove the two-dimensional result
\begin{equation}
s_2(\boldsymbol{x}) = s_1(\boldsymbol{x}) + \lambda\sigma(\boldsymbol{w}^T\boldsymbol{x} + b)
\end{equation}
for $\boldsymbol{x} \in l_2^+$ of equation 4.7.

Finally, we prove the uniqueness of parameter $\lambda$ for a certain $s_2$ of equation 4.34. The proof is by the further explanation of the geometric meaning of $\lambda$. To equation 4.30, let $\theta_{D_1} = \theta_{A_1} := \Theta$, which is a constant for the fixed $s_1$. And let $\theta_{D_2} = \theta_{A_2} := \theta$, which is a variable changing with different $s_2$. Thus, by equations 4.28, 4.29 and 4.32, we can write
\begin{equation}
\lambda = \mathscr{G}(\theta) = \mathfrak{C}(c - \tan\theta),
\end{equation}
where $\mathfrak{C}$ and $c = \tan\Theta$ are both constants, for which $\lambda$ is determined only by $\theta$.

By the example of Figure \ref{Fig.2}b, $\theta = \angle CAB$. Because the direction of $AB$ is fixed, $\theta$ varies according to the direction of
\begin{equation}
\vec{A}C = s_2 \cap B'A'A,
\end{equation}
where plane $B'A'A$ is also fixed. So different $s_2$ leads to different $\theta$, contributing to different $\lambda$. This proves the one-to-one correspondence between $s_2$ and $\lambda$.

\textbf{Case of input dimensionality $n \ge 3$}. Although a geometric object of $\mathbb{R}^n$ cannot be visualized as above, we can use algebraic ways to generalize the preceding results. The main idea is to reduce the problem to the one-dimensional case by constructing two-dimensional planes embedded in $\mathbb{R}^n$, analogous to the method of $\mathbb{R}^2$. To achieve this goal, we should ensure that some related concepts, relationships or conclusions of $\mathbb{R}^3$ are applicable to $\mathbb{R}^{n+1}$. They are listed below along with the short arguments, emphasizing their irrelevance to the input dimensionality.

\begin{itemize}
\item[\Romannum{1}.] There exists a unique perpendicular line $\Lambda_P$ to an $n$-dimensional hyperplane $l \subset \mathbb{R}^{n+1}$ passing through a certain point $P$. Proof: Letting $\boldsymbol{w}^T\boldsymbol{x} + b = 0$ be the equation of $l$, the vector $\boldsymbol{w}$ perpendicular to $l$ is unique up to a scale factor. We can regard $\boldsymbol{w}$ as the direction of an one-dimensional line embedded in $\mathbb{R}^{n+1}$. The line passing through point $P$ with the direction $\boldsymbol{w}$ is unique, which is the $\Lambda_P$.

\item[\Romannum{2}.] The distance from a point $\boldsymbol{x}_0$ to an $n$-dimensional hyperplane $l$ with equation $\boldsymbol{w}^T\boldsymbol{x} + b = 0$ of $\mathbb{R}^{n+1}$ is
\begin{equation}
\mathscr{D} = \frac{|\boldsymbol{w}^T\boldsymbol{x}_0 + b|} {\Vert \boldsymbol{w} \Vert_2}.
\end{equation}
Proof: Let $\boldsymbol{x}_0'$ be the orthogonal-projection point of $\boldsymbol{x}_0$ on $l$. Then $\boldsymbol{x}_0 - \boldsymbol{x}_0' = \mathscr{D} \cdot \boldsymbol{w} / \Vert \boldsymbol{w} \Vert_2$, which follows
\begin{equation}
\boldsymbol{w}^T(\boldsymbol{x}_0 - \boldsymbol{x}_0') = \mathscr{D} \cdot \boldsymbol{w}^T\boldsymbol{w} / \Vert \boldsymbol{w} \Vert_2.
\end{equation}
Since $\boldsymbol{x}_0' \in l$, $\boldsymbol{w}^T\boldsymbol{x}_0' + b = 0$, and this together with the equation 4.38 imply equation 4.37.

\item[\Romannum{3}.] The distance between two parallel $n$-dimensional hyperplanes $l \parallel l'$ of $\mathbb{R}^{n+1}$ is a constant, equal to the distance between $l$ and any point of $l'$. Proof: Denote by $\boldsymbol{w}^T\boldsymbol{x} + b = 0$ the hyperplane of $l$. Because $l' \parallel l$, the equation of $l'$ can be expressed as $\boldsymbol{w}^T\boldsymbol{x} + b + t = 0$, where $t$ is a constant. The distance from $\boldsymbol{x}' \in l'$ to $l$ is
\begin{equation}
d(\boldsymbol{x}') = \frac{|\boldsymbol{w}^T\boldsymbol{x}' + b|} {\Vert \boldsymbol{w} \Vert_2} = |t| / \Vert \boldsymbol{w} \Vert_2,
\end{equation}
which is also a constant.

\item[\Romannum{4}.] Let $p$ be a plane embedded in $\mathbb{R}^{n+1}$ and $l \subset \mathbb{R}^{n+1}$ an $n$-dimensional hyperplane satisfying $p \nsubseteq l$ and $p \cap l \ne \emptyset$. Then their intersection $q = p \cap l$ is an one-dimensional line. Proof: This conclusion is by lemma 2.
\end{itemize}

We continue to use the notations of the two-dimensional example of Figure \ref{Fig.2}b but regard the vectors and some geometric objects as being $n$- or $n-1$-dimensional instead. The main idea is to construct the right triangles as the red one of Figure \ref{Fig.2}a in a higher-dimensional space analogous to Figure \ref{Fig.2}b. Please bear in mind the example of Figure \ref{Fig.2}b and pay attention to the similarity as well as the difference between the two- and $n$-dimensional cases.

Let $\mathfrak{R} := \mathbb{R}^n$ on which $f(\boldsymbol{x})$ is defined and $\mathfrak{R}_z := \mathbb{R}^{n+1}$ be the space containing $f(\boldsymbol{x})$. Both of $s_1$ and $s_2$ are $n$-dimensional. Then $k_2 = s_1 \cap s_2$ and the corresponding knot $l_2$ are $n-1$-dimensional. Arbitrarily select a point $A \in k_2$ and suppose that its projection on $\mathfrak{R}$ is $A'$. Translate $l_2$ into $l'_2 \subset l_2^+$ with $l_2' \parallel l_2$. A line passing $A'$ perpendicular to $l_2$ intersects $l_2'$ at $B'$. Construct a line $BB'$ perpendicular to $\mathfrak{R}$, where the point $B$ is subject to $AB \parallel \mathfrak{R}$. Write $C = \bar{B}B' \cap s_2$. Up to now, we complete the construction of plane $B'A'A$ and a right triangular $\Delta ABC$ in $\mathfrak{R}_z$ as those of Figure \ref{Fig.2}b, both of which are embedded in $\mathfrak{R}_z$.

The dimensionality of the intersection of plane $B'A'A$ with $s_1$ or $s_2$ is one. Let $s^{(A)}_1 = B'A'A \cap s_1$ and $s^{(A)}_2 = B'A'A \cap s_2 = \vec{A}C$. Then in terms of $s^{(A)}_1$ and $s^{(A)}_2$, similar to the case of the two-dimensional input, the one-dimensional method could be applied in $B'A'A$, through which we obtain
\begin{equation}
s_2(\boldsymbol{x}_A) = s_1(\boldsymbol{x}_A) + \lambda_A\sigma(\boldsymbol{w}_2^T\boldsymbol{x}_A + b_2),
\end{equation}
where
\begin{equation}
\boldsymbol{x}_A \in \vec{A'}B' \subset \mathbb{R}^n,
\end{equation}
in which $A'$ is fixed and $B'$ changes along the ray $\vec{A'}B'$.

As the position of $A$ changes, we can similarly construct other right triangles as $\Delta ABC$. Although in a higher-dimensional space, the mechanism is investigated in the planes embedded in $\mathfrak{R}_z$ as $B'A'A$. Due to the construction process and conclusion \Romannum{3} above, the relationship between the different right triangles is the same as that of the three-dimensional space of Figure \ref{Fig.2}b. So
\begin{equation}
\lambda_A = \lambda
\end{equation}
is a constant for different $A$. Then for a fixed $l_2'$, we have
\begin{equation}
s_2(\boldsymbol{x}_{B'}) = s_1(\boldsymbol{x}_{B'}) + \lambda\sigma(\boldsymbol{w}_2^T\boldsymbol{x}_{B'} + b_2),
\end{equation}
where
\begin{equation}
\boldsymbol{x}_{B'} \in l_2' \subset \mathbb{R}^n,
\end{equation}
which corresponds to the arbitrary change of $B'$ in $l_2'$.

The fact that
\begin{equation}
l_2^+ \subseteq \{\boldsymbol{x}_A: A' \in l_2, \boldsymbol{x}_A \in \vec{A'}B'\},
\end{equation}
together with equations 4.40, 4.42 and 4.43, lead to the final conclusion of equation 4.7.

The proof of the uniqueness of $\lambda$ of equation 4.7 with respect to a certain $s_2$ for $n \ge 3$ is similar to that of the two-dimensional input. The intersection of an $n$-dimensional hyperplane with a plane is a line as well. So if we regard $s_2$ of equation 4.36 as an $n$-dimensional hyperplane, equations 4.35 and 4.36 still hold. We now prove that different $n$-dimensional $s_2$ results in different angle $\theta$ of equation 4.35. The key point is that $k_2 = s_2 \cap s_1$ has been fixed whose dimensionality is $n-1$. The subspace $k_2 \subset s_2$ has spanned the $n-1$ dimensions of $s_2$ and $\bar{A}C = s_2 \cap B'A'A$ could contribute to the remaining one. Thus, $k_2$ and $\bar{A}C$ determine $s_2$. Assume that $s_2^{(1)}$ and $s_2^{(2)}$ are two distinct instantiations of $s_2$. Then both of them pass through $k_2$. Write $\bar{A}C^{(i)} =  s_2^{(i)} \cap B'A'A$ for $i = 1, 2$. If $\bar{A}C^{(1)} = \bar{A}C^{(2)}$, then $s_2^{(1)} =  s_2^{(2)}$, which is a contradiction. So in combination with equation 4.35, the uniqueness of $\lambda$ for a certain $s_2$ is proved.

The similarity of the proofs between the $n$- and two-dimensional input spaces lies in equation 4.40, through which both of them can be reduced to the one-dimensional case. The difference is manifested by the dimensionality of $l_2'$ of equation 4.44, which are 1 and $n-1$, respectively. This completes the proof.
\end{proof}

\begin{rmk}
We summarize an intuitive rationale for theorem 2. Given a fixed $s_1$ and the knot $\mathscr{L}$, a continuous $s_2$ with $s_1$ at $\mathscr{L}$ yields $(k_2 = s_1 \cap s_2) \subset s_2$. Because $k_2$ is $n-1$-dimensional and $s_2$ is $n$-dimensional, the information of $n-1$ dimensions of $s_2$ has been provided and only one dimension left for the span of $s_2$. So it is natural for the conclusion of theorem 2 that one parameter $\lambda$ is sufficient to determine the whole $s_2$.
\end{rmk}

\begin{cl}[Continuity property]
A piecewise linear function $f: \mathbb{R}^n \to \mathbb{R}$ realized by a two-layer ReLU network $\mathfrak{N}$ is continuous.
\end{cl}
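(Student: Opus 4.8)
The plan is to reduce the global continuity of $f$ to the single-knot continuity already secured by theorem 2. Since $f$ is realized by $\mathfrak{N}$, it is the output $y=\sum_{i=1}^m\lambda_i\sigma(\boldsymbol{w}_i^T\boldsymbol{x}+b_i)$ of equation 4.1. First I would record that the $m$ hyperplanes $\mathcal{L}_i:\boldsymbol{w}_i^T\boldsymbol{x}+b_i=0$ attached to the hidden units partition $\mathbb{R}^n$ into finitely many regions in the sense of definition 1. On the interior of any one region the activation pattern of every ReLU is constant, so by axioms 1 and 2 each summand $\lambda_i\sigma(\boldsymbol{w}_i^T\boldsymbol{x}+b_i)$ equals either $0$ or the affine function $\lambda_i(\boldsymbol{w}_i^T\boldsymbol{x}+b_i)$ throughout that interior. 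Hence $f$ coincides there with a single linear piece and is trivially continuous, and continuity only needs to be verified on the exceptional set $\bigcup_{i=1}^m\mathcal{L}_i$.

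Second, I would treat a point $\boldsymbol{x}_0$ lying on exactly one hyperplane, say $\mathcal{L}_j$, and on no other. Choosing a neighborhood $N$ of $\boldsymbol{x}_0$ small enough to meet none of the hyperplanes $\mathcal{L}_i$ with $i\ne j$, every such unit keeps a fixed sign of $\boldsymbol{w}_i^T\boldsymbol{x}+b_i$ on $N$, so the partial sum $s_1:=\sum_{i\ne j}\lambda_i\sigma(\boldsymbol{w}_i^T\boldsymbol{x}+b_i)$ is affine on $N$. On $N$ we then have $f=s_1+\lambda_j\sigma(\boldsymbol{w}_j^T\boldsymbol{x}+b_j)$, which is precisely the form of equation 4.6 with knot $\mathcal{L}_j$ and fixed linear piece $s_1$. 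The converse statement of theorem 2 now applies verbatim and yields continuity of $f$ at $\mathcal{L}_j$, hence at $\boldsymbol{x}_0$, in the sense of definition 6.

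Third, I would dispose of the points lying on two or more hyperplanes simultaneously. Here theorem 2 does not apply directly, because the local expansion of $f$ contains more than one active knot and the single-knot reduction of the previous step breaks down; this is the step I expect to be the main obstacle. The cleanest resolution is to fall back on the elementary fact that each summand $\lambda_i\sigma(\boldsymbol{w}_i^T\boldsymbol{x}+b_i)$ is the composition of the continuous affine map $\boldsymbol{x}\mapsto\boldsymbol{w}_i^T\boldsymbol{x}+b_i$ with the continuous function $\sigma$, followed by scaling, so that the finite sum $f$ is continuous at every point of $\mathbb{R}^n$, including these codimension-$\ge 2$ intersections. Alternatively, one may argue by density: $f$ is already known to be continuous on the complement of this lower-dimensional set, and the two-sided limits of $f$ along any transversal line through such a point agree because they are assembled from the already-established single-knot limits, so definition 6 is met there as well.

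Finally I would assemble the pieces: continuity on region interiors, continuity across single knots via theorem 2, and continuity at multiple-knot intersections together give continuity of $f$ at every point of $\mathbb{R}^n$, which is the assertion of the corollary. I would also note that definition 6, phrased through limits along arbitrary transversal lines, coincides with ordinary topological continuity at points of the knot, and it is this equivalence that licenses gluing the local verifications into a single global continuity statement.
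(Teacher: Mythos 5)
Your proof is correct, but it is organized differently from the paper's. The paper's entire proof is a one-line induction: ``repeated application of equation 4.6'' --- i.e., the units are added one at a time, and each addition preserves continuity by the converse part of theorem 2 (whose argument only needs the previously built function to be continuous, not linear). You instead run a pointwise case analysis: region interiors, points on exactly one hyperplane (where you invoke theorem 2 after absorbing the inactive units into an affine $s_1$ --- a clean local reduction the paper never spells out), and points on intersections of two or more hyperplanes. Your honest flagging of the multi-knot case is the most valuable part of the comparison: this is exactly the situation the paper's terse induction glosses over, and your resolution --- that each summand $\lambda_i\sigma(\boldsymbol{w}_i^T\boldsymbol{x}+b_i)$ is a composition of continuous maps, so the finite sum is continuous everywhere --- is airtight. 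It is worth noting, though, that this elementary observation by itself proves the whole corollary in one line, rendering your first two steps (and, arguably, the paper's appeal to theorem 2) logically superfluous; what the theorem-2 route buys in exchange is not economy but interpretive content, namely that continuity across each knot is exactly the single-parameter gluing relation of equation 4.6, which is the structural fact the rest of the paper builds on. Your secondary ``density'' argument for the multi-knot case is the one weak spot --- limits along transversal lines through a codimension-$\ge 2$ point are not literally ``assembled from single-knot limits'' without further work --- but since your primary argument there is complete, nothing is missing.
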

\begin{proof}
This conclusion is by the repeated application of equation 4.6 when there is more than one unit in the hidden layer of $\mathfrak{N}$.
\end{proof}

\begin{dfn}[Adjacent linear pieces]
Let $g: \mathbb{R}^n \to \mathbb{R}$ be a continuous piecewise linear function with $\zeta$ linear pieces $g_i$'s for $i = 1, 2, \dots, \zeta$. If $g_{\nu}$ and $g_{\mu}$ for $1 \le \nu, \mu \le \zeta$ and $\nu \ne \mu$ share an $n-1$-dimensional hyperplane, we say that they are adjacent. Denote by $b_{i\tau}$'s for $\tau = 1, 2, \dots, \rho_i$ the $n-1$-dimensional hyperplanes shared by $g_i$ and its adjacent linear pieces $g_{n_{i{\tau}}}$'s, where $1 \le n_{i{\tau}} \le \zeta$. Each $b_{i\tau}$ corresponds to a knot on $\mathbb{R}^n$ and we call it a knot of $g_i$.
\end{dfn}

\begin{figure}[!t]
\captionsetup{justification=centering}
\centering
\includegraphics[width=1.8in, trim = {5.0cm 3.0cm 4.2cm 2.0cm}, clip]{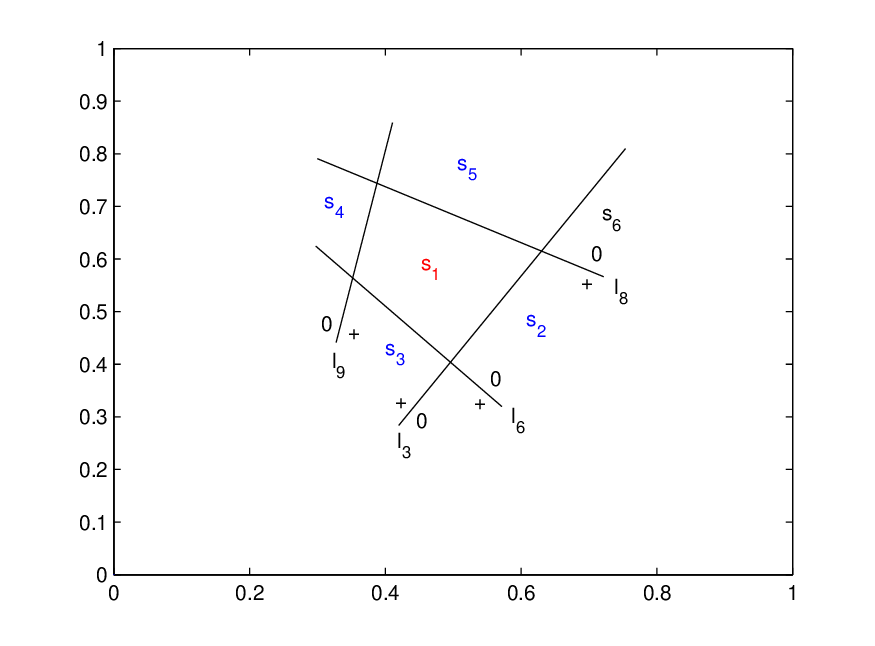}
\caption{Multiple representations of a linear piece.}
\label{Fig.3}
\end{figure}

\begin{cl}[Multiple representations of a linear piece]
Let $\mathfrak{N}$ be a two-layer network whose hidden layer has $m$ units $\mathscr{U}_{i}$'s for $i = 1, 2, \dots, m$. Suppose that each $\mathscr{U}_i$ corresponds to a unique knot $l_i$ in the $n$-dimensional input space $\mathbb{R}^n$. Denote by $\mathcal{S}: \mathbb{R}^n \to \mathbb{R}$ the piecewise linear function realized by $\mathfrak{N}$, which is continuous by corollary 1. Suppose that $\mathcal{S}(\boldsymbol{x})$ is composed of $\zeta$ linear pieces $s_j$'s for $j = 1, 2, \dots, \zeta$ on regions $r_j$'s, respectively. Write
\begin{equation}
\mathcal{S}(\boldsymbol{x}) = \sum_{i = 1}^m\lambda_i\sigma(\boldsymbol{w}_i^T\boldsymbol{x} + b_i)
\end{equation}
by definition 5. Suppose that each linear piece $s_j$ has $\rho_j$ adjacent ones, denoted by $s_{n_{j\nu}}$ for $\nu = 1, 2, \dots, \rho_j$ and $1 \le n_{j\nu} \le \zeta$, each of which shares an $n-1$-dimensional hyperplane with $s_j$ and corresponds to a distinct knot $l_{n_{j\nu}'}$ for $1 \le n_{j\nu}' \le m$.

Then to equation 4.46, $s_j(\boldsymbol{x})$ of $\mathcal{S}(\boldsymbol{x})$ has $\rho_j$ different independent representations, with each being either of the following two possible forms. One is
\begin{equation}
s_j(\boldsymbol{x}) = \lambda_{n_{j\nu}'}\sigma(\boldsymbol{w}_{n_{j\nu}'}^T\boldsymbol{x} + b_{n_{j\nu}'}) + s_{n_{j\nu}}(\boldsymbol{x}),
\end{equation}
where $s_{n_{j\nu}}(\boldsymbol{x})$ is the $\nu$th adjacent piece of $s_{j}$, if
\begin{equation}
r_j \subset l_{n_{j\nu}'}^+ \ \text{and} \ r_{n_{j\nu}} \subset l_{n_{j\nu}'}^0,
\end{equation}
where $r_j$ and $r_{n_{j\nu}}$ are the regions on which $s_j$ and $s_{n_{j\nu}}$ are defined, respectively, and are separated by the knot $l_{n_{j\nu}'}$ of unit $\mathscr{U}_{n_{j\nu}'}$. In equation 4.47, the parameter $\lambda_{n_{j\nu}'}$ uniquely determines $s_j$ based on $s_{n_{j\nu}}$. The other is
\begin{equation}
s_j(\boldsymbol{x}) = -\lambda_{n_{j\nu}'}\sigma(\boldsymbol{w}_{n_{j\nu}'}^T\boldsymbol{x} + b_{n_{j\nu}'}) + s_{n_{j\nu}}(\boldsymbol{x}),
\end{equation}
provided that
\begin{equation}
r_j \subset l_{n_{j\nu}'}^0 \ \text{and} \ r_{n_{j\nu}} \subset l_{n_{j\nu}'}^+.
\end{equation}

On the other hand, according to each unit $\mathscr{U}_k$ for $k = 1, 2, \dots, m$, equation 4.46 can be written as
\begin{equation}
s_{n_{k\mu}}(\boldsymbol{x}) = \lambda_{k}\sigma(\boldsymbol{w}_{k}^T\boldsymbol{x} + b_{k}) + s_{n_{k\mu}'}(\boldsymbol{x})
\end{equation}
for $\mu = 1, 2, \dots, \phi_k$, where $s_{n_{k\mu}}$ and $s_{n_{k\mu}'}$ for $1 \le n_{k\mu}, n_{k\mu}' \le \zeta$ are adjacent linear pieces sharing knot $l_{k}$ of $\mathscr{U}_k$ and $\phi_k$ is the number of the associated adjacent pairs, provided that $r_{n_{k\mu}} \subset l_{k}^+$ and $r_{n_{k\mu}'} \subset l_{k}^0$; or
\begin{equation}
s_{n_{k\mu}}(\boldsymbol{x}) = -\lambda_{k}\sigma(\boldsymbol{w}_{k}^T\boldsymbol{x} + b_{k}) + s_{n_{k\mu}'}(\boldsymbol{x}),
\end{equation}
if $r_{n_{k\mu}} \subset l_{k}^0$ and $r_{n_{k\mu}'} \subset l_{k}^+$.
\end{cl}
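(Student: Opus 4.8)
The plan is to reduce the entire statement to repeated applications of theorem 2, one application per adjacent pair of linear pieces, and then to identify the unique parameter that theorem 2 supplies with the network output weight of the single unit whose knot is shared by the pair. The one structural fact that makes this identification possible is the hypothesis that each unit $\mathscr{U}_i$ owns a unique knot $l_i$: since distinct units carry distinct hyperplanes, the $n-1$-dimensional hyperplane shared by $s_j$ and an adjacent piece $s_{n_{j\nu}}$ is the knot of exactly one unit, namely $\mathscr{U}_{n_{j\nu}'}$, and crossing it flips the activation status of that unit alone.

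First I would fix one adjacent pair $(s_j, s_{n_{j\nu}})$ separated by the knot $l_{n_{j\nu}'}$ with equation $\boldsymbol{w}_{n_{j\nu}'}^T\boldsymbol{x} + b_{n_{j\nu}'} = 0$. By corollary 1 the realized function $\mathcal{S}$ is continuous, so the two pieces meet continuously across $l_{n_{j\nu}'}$. Theorem 2 then applies verbatim to the restriction of $\mathcal{S}$ to $r_j \cup r_{n_{j\nu}}$: holding the piece on the $l_{n_{j\nu}'}^0$ side fixed, the piece on the $l_{n_{j\nu}'}^+$ side is recovered by adding a single term $\lambda\,\sigma(\boldsymbol{w}_{n_{j\nu}'}^T\boldsymbol{x} + b_{n_{j\nu}'})$, and that parameter $\lambda$ is unique. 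Reading off which of $r_j$ and $r_{n_{j\nu}}$ lies in $l_{n_{j\nu}'}^+$ then produces the two sign cases: condition 4.48 gives the additive form 4.47 directly, while condition 4.50 puts the neighbor on the positive side and, after rearranging the theorem-2 identity, gives the subtractive form 4.49.

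The key remaining step, and the only nontrivial one, is to show that the abstract $\lambda$ delivered by theorem 2 is precisely the network weight $\lambda_{n_{j\nu}'}$. Here I would expand $\mathcal{S}$ via equation 4.46 in a small neighborhood of a relative-interior point $\boldsymbol{x}_0 \in l_{n_{j\nu}'}$ chosen to avoid every other knot; such a point exists because the remaining knots meet $l_{n_{j\nu}'}$ only in lower-dimensional sets. On that neighborhood every unit other than $\mathscr{U}_{n_{j\nu}'}$ keeps a constant activation status and hence contributes the same affine term on both sides, so the piece occupying $l_{n_{j\nu}'}^+$ exceeds the piece occupying $l_{n_{j\nu}'}^0$ by exactly $\lambda_{n_{j\nu}'}(\boldsymbol{w}_{n_{j\nu}'}^T\boldsymbol{x} + b_{n_{j\nu}'})$. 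Matching this against the theorem-2 identity and invoking the uniqueness of $\lambda$ forces $\lambda = \lambda_{n_{j\nu}'}$, which completes the identification. I expect this to be the main obstacle, chiefly in justifying cleanly that no second unit changes activation across the shared hyperplane, which is exactly where the uniqueness-of-knots hypothesis is consumed.

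Finally I would assemble the count and the dual form. Each of the $\rho_j$ adjacent pieces of $s_j$ is separated from $s_j$ by a distinct knot, so the construction above yields $\rho_j$ representations of $s_j$, each built from a different unit and therefore mutually independent; this establishes 4.47 through 4.50. The dual statements 4.51 and 4.52 are the same identity reorganized by unit rather than by piece: fixing a unit $\mathscr{U}_k$ and running over the $\phi_k$ adjacent pairs that share its knot $l_k$, the identical theorem-2 argument expresses the piece on $l_k^+$ as $\lambda_k\,\sigma(\boldsymbol{w}_k^T\boldsymbol{x} + b_k)$ plus the piece on $l_k^0$, with the sign again dictated by orientation. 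No new ideas are needed for this reindexing.
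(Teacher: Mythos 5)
Your proposal is correct, and its skeleton is the same as the paper's: both proofs reduce the statement to theorem 2 applied to each adjacent pair, both hinge on the observation that only the unit whose knot separates the pair changes activation status across the shared face, both obtain the two sign cases 4.47/4.49 by orientation and rearrangement, and both get the dual forms 4.51/4.52 by reindexing over units rather than pieces. The difference is in how the key step is executed. The paper argues at the level of whole regions through the worked two-dimensional example of Figure 3: the units active on both $r_1$ and $r_2$ (such as $l_8$, $l_9$) have their affine contributions ``embedded'' into the expression of the neighboring piece, the units inactive on both (such as $l_6$) are discarded, and the example is then declared to contain all the general principles. You instead work in general dimension $n$ and localize: you pick a relative-interior point of the shared $n-1$-dimensional face avoiding every other knot (legitimate, since distinct hyperplanes meet $l_{n_{j\nu}'}$ in sets of dimension at most $n-2$, which cannot cover the face), expand equation 4.46 on a small ball around it so that all other units contribute identical affine terms on both sides, and then invoke the uniqueness clause of theorem 2 to force the abstract parameter $\lambda$ to equal the network weight $\lambda_{n_{j\nu}'}$. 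This buys precisely what the paper leaves informal: a proof, rather than an assertion by example, that no second unit flips across the shared hyperplane --- which, as you correctly flag, is exactly where the hypothesis that each unit owns a distinct knot is consumed --- and an explicit identification of the theorem-2 parameter with the trained output weight instead of reading it off the picture. The paper's version is shorter and gives the geometric intuition that later sections (6 and 7) reuse; yours is the one that would survive in dimensions where no figure can be drawn.
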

\begin{proof}
The proof is by theorem 2. Let us first see an example of Figure \ref{Fig.3} when $n = 2$. In Figure \ref{Fig.3}, each $s_j$ for $j = 1, 2, \dots, 6$ is a linear function on region $r_j$, and the four lines $l_{n_{1\nu}'}$ for $\nu = 1, 2, 3, 4$ are derived from the associated units of the hidden layer of a two-layer ReLU network $\mathcal{N}$, with $n_{11}' = 8$, $n_{12}' = 6$, $n_{13}' = 3$ and $n_{14}' = 9$.

We now examine the relationship between the adjacent linear functions $s_1$ and $s_2$. Among the four lines that shape the region $r_1$, $l_3$ separates $r_1$ from $r_2$ with $r_1 \subset l_3^+$ and $r_2 \subset l_3^0$; $l_8$ and $l_9$ could influence both $r_1$ and $r_2$; $l_6$ can be ignored since $(r_1 \cup r_2) \subset l_6^0$. To a solution for $s_1$ and $s_2$ via network $\mathcal{N}$, the influence of $l_8$ and $l_9$ on $r_1$ can be embedded in the expression of $s_2$, resulting in $s_1$ as a functional of $s_2$. There are two possible cases for the remaining lines not shown in Figure \ref{Fig.3} of the other units. The first is analogous to $l_8$ and $l_9$ whose effects can be combined into $s_2$. The second is similar to $l_6$, having no impact on both $s_1$ and $s_2$. Thus, we have
\begin{equation}
s_1(\boldsymbol{x}) = \lambda_3\sigma(\boldsymbol{w}_3^T\boldsymbol{x} + b_3) + s_2(\boldsymbol{x}),
\end{equation}
which is the case of equation 4.47 when $r_1 \subset l_3^+$ and $r_2 \subset l_3^0$. Equation 4.53 can also be written as
\begin{equation}
s_2(\boldsymbol{x}) = -\lambda_3\sigma(\boldsymbol{w}_3^T\boldsymbol{x} + b_3) + s_1(\boldsymbol{x})
\end{equation}
for the case of equation 4.49 with $r_2 \subset l_3^0$ and $r_1 \subset l_3^+$.

Similarly, by the method of equation 4.53, we have another two expressions of $s_1$, including $s_1 = \lambda_9\sigma(\boldsymbol{w}_9^T\boldsymbol{x} + b_9) + s_4$ and $s_1 = \lambda_8\sigma(\boldsymbol{w}_8^T\boldsymbol{x} + b_8) + s_5$. The principle of equation 4.54 yields $s_1 = -\lambda_6\sigma(\boldsymbol{w}_6^T\boldsymbol{x} + b_6) + s_3$. Thus, the total number of the expressions of $s_1$ is 4, equal to the number of the adjacent linear pieces of $s_1$.

Note that $s_5$ and $s_6$ in Figure \ref{Fig.3} have the same relationship $s_5(\boldsymbol{x}) = \lambda_3\sigma(\boldsymbol{w}_3^T\boldsymbol{x} + b_3) + s_6(\boldsymbol{x})$ as $s_1(\boldsymbol{x}) = \lambda_3\sigma(\boldsymbol{w}_3^T\boldsymbol{x} + b_3) + s_2(\boldsymbol{x})$ of $s_1$ and $s_2$, due to their sharing the same knot $l_3$, which is an instantiation of equation 4.51. And equation 4.52 can be similarly obtained as equation 4.54. This example of Figure \ref{Fig.3} contains all the general principles of corollary 2.
\end{proof}

\begin{rmk}
This corollary is a basic property of piecewise linear functions of two-layer ReLU networks and can help us to find the solution of output weights especially under a complex partition. It will be used in sections 6 and 7.
\end{rmk}

\subsection{Realization of Continuous Linear Splines}
\begin{lem}
Suppose that a region $R \subset \mathbb{R}^n$ is formed by a set $\mathcal{H} = \{l_i: i = 1, 2, \dots, m\}$ of $n-1$-dimensional hyperplanes derived from the ReLUs of the hidden layer of a two-layer network $\mathfrak{N}$, and that $R \subset \bigcap_{i = 1}^{m}l_i^+$ with $m \ge n+1 $. They any linear function on $R$ could be realized by $\mathfrak{N}$, provided that the rank of the size $(n+1) \times m$ matrix
\begin{equation}
\mathcal{W} = \begin{bmatrix}
\boldsymbol{w}_1 & \boldsymbol{w}_2 & \cdots & \boldsymbol{w}_m \\
b_1 & b_2 & \cdots & b_m
\end{bmatrix}
\end{equation}
is $n+1$, where $\boldsymbol{w}_i$ and $b_i$ come from the the equation $\boldsymbol{w}_i^T\boldsymbol{x} + b_i = 0$ of $l_i$.
\end{lem}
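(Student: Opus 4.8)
The plan is to exploit the fact that on $R$ every hidden unit is active, so the network output restricted to $R$ collapses to a single linear function whose coefficients depend linearly on the output weights $\lambda_i$; realizing an arbitrary linear function then reduces to solving a linear system whose coefficient matrix is exactly $\mathcal{W}$. Here the hyperplanes $l_i$ (hence all $\boldsymbol{w}_i,b_i$) are regarded as fixed by $\mathcal{H}$, and the only free quantities are the output weights $\lambda_1,\dots,\lambda_m$.

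First I would invoke Axiom 2 together with the hypothesis $R \subset \bigcap_{i=1}^{m}l_i^+$: since every point of $R$ lies in the positive-output region $l_i^+$ of each hyperplane, every ReLU satisfies $\sigma(\boldsymbol{w}_i^T\boldsymbol{x}+b_i)=\boldsymbol{w}_i^T\boldsymbol{x}+b_i$ on $R$. Substituting this into the network definition (equation 4.1) gives, for $\boldsymbol{x}\in R$,
\[
y=\sum_{i=1}^{m}\lambda_i\bigl(\boldsymbol{w}_i^T\boldsymbol{x}+b_i\bigr)=\Bigl(\sum_{i=1}^{m}\lambda_i\boldsymbol{w}_i\Bigr)^{\!T}\boldsymbol{x}+\sum_{i=1}^{m}\lambda_i b_i,
\]
which is linear in $\boldsymbol{x}$. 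Thus $\mathfrak{N}$ realizes a prescribed target $g(\boldsymbol{x})=\boldsymbol{a}^T\boldsymbol{x}+c$ on $R$ precisely when the weights satisfy $\sum_i\lambda_i\boldsymbol{w}_i=\boldsymbol{a}$ and $\sum_i\lambda_i b_i=c$; stacking these $n+1$ scalar conditions, this is the single matrix equation $\mathcal{W}\boldsymbol{\lambda}=\boldsymbol{t}$, where $\boldsymbol{\lambda}=(\lambda_1,\dots,\lambda_m)^T$ and $\boldsymbol{t}$ is the $(n+1)$-vector obtained by stacking $\boldsymbol{a}$ and $c$.

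Then I would finish by linear algebra. The coefficient matrix of this system is exactly $\mathcal{W}$, of size $(n+1)\times m$. Under the hypotheses $m\ge n+1$ and $\mathrm{rank}(\mathcal{W})=n+1$, the matrix has full row rank, so its columns span all of $\mathbb{R}^{n+1}$; hence $\mathcal{W}\boldsymbol{\lambda}=\boldsymbol{t}$ admits a solution $\boldsymbol{\lambda}$ for every right-hand side $\boldsymbol{t}$, and therefore every linear $g$ on $R$ is realizable. When $m>n+1$ the solution set is an affine subspace of dimension $m-(n+1)$, accounting for the infinitely many realizing weight vectors.

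There is no genuine obstacle here: the entire content is the observation that simultaneous activity of all units turns the ReLU sum into a plain linear function, after which surjectivity of a full-row-rank matrix concludes the argument. The only point needing a little care is to record that $\mathrm{rank}(\mathcal{W})=n+1$, being equal to the number of rows, is exactly the full-row-rank condition making the map $\boldsymbol{\lambda}\mapsto\mathcal{W}\boldsymbol{\lambda}$ onto $\mathbb{R}^{n+1}$, which is what guarantees solvability for an arbitrary target linear function.
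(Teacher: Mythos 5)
Your proposal is correct and is essentially the paper's own argument: both reduce the problem, via the fact that all units are active on $R$, to the linear system $\mathcal{W}\boldsymbol{\lambda} = \boldsymbol{b}$ and then invoke the rank condition to guarantee a solution. Your write-up is in fact slightly more careful than the paper's, since you make explicit that $\mathrm{rank}(\mathcal{W}) = n+1$ means full row rank and hence surjectivity of $\boldsymbol{\lambda} \mapsto \mathcal{W}\boldsymbol{\lambda}$, a step the paper leaves implicit.
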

\begin{proof}
Let $y = \boldsymbol{w}^T\boldsymbol{x} + b$ be the linear function on $R$ to be realized. By equation 4.1, the output function of $\mathfrak{N}$ for region $R$ is $\sum_{i = 1}^{m}\lambda_i(\boldsymbol{w}_i^T\boldsymbol{x} + b_i)$. The goal is
\begin{equation}
\sum_{i = 1}^{m}\lambda_i(\boldsymbol{w}_i^T\boldsymbol{x} + b_i) = \boldsymbol{w}^T\boldsymbol{x} + b,
\end{equation}
where $\lambda_i$'s are the unknowns to be solved. Equation 4.56 yields a system of linear equations
\begin{equation}
\mathcal{W}\boldsymbol{\lambda} = \boldsymbol{b},
\end{equation}
where $\mathcal{W}$ is the matrix of equation 4.55, $\boldsymbol{\lambda}$ is a $m \times 1$ vector whose entries are $\lambda_i$'s, and $\boldsymbol{b} = [\boldsymbol{w}^ T, b]^T$. Note that in equation 4.56, the number of the unknowns satisfies $m \ge n + 1$. Thus if $\text{rank}(\mathcal{W}) = n + 1$, we can always find a solution of $\boldsymbol{\lambda}$ to fulfil equation 4.56.
\end{proof}

\begin{dfn}[Linear-output matrix \citep*{Huang2022}]
The matrix $\mathcal{W}$ of equation 4.55 for region $R$ is called the linear-output matrix of $R$ with respect to $\mathcal{H}$.
\end{dfn}

\begin{lem}
Let $l_i$ for $i = 1, 2, \dots, n+1$ be an $n-1$-dimensional hyperplane of $\mathbb{R}^n$ for $n \ge 2$, whose equation is $\boldsymbol{w}_i^T\boldsymbol{x} + b_i = 0$. Suppose that there exists a region $\mathscr{R} \subseteq \bigcap_{i = 1}^{n+1}l_i^+$. Then on the basis of $l_i$'s, a nonsingular matrix
\begin{equation}
\mathcal{W}' = \begin{bmatrix}
\boldsymbol{w}'_{1} & \boldsymbol{w}'_{2} & \cdots & \boldsymbol{w}'_{n+1} \\
b'_{1} & b'_2 & \cdots & b'_{n+1}
\end{bmatrix}
\end{equation}
can be constructed, where $\boldsymbol{w}'_{i}$ and $b'_i$ are the parameters of an $n-1$-dimensional hyperplane $l'_{i}$ with equation ${\boldsymbol{w}_i'}^T\boldsymbol{x} + b'_i = 0$, which can approach $l_i$ as precisely as possible (with $l_1' = l_1$) such that $\mathscr{R} \subset l_i'^{+}$ for all $i$ or $\mathscr{R} \subseteq \bigcap_{i = 1}^{n+1}l_i'^+$.
\end{lem}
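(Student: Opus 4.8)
The plan is to treat the construction as a purely linear-algebraic perturbation problem about the columns of $\mathcal{W}'$, subject to geometric side constraints. Writing $\boldsymbol{c}_i = [\boldsymbol{w}_i^T, b_i]^T \in \mathbb{R}^{n+1}$ for the given data and $\boldsymbol{c}'_i = [{\boldsymbol{w}'_i}^T, b'_i]^T$ for the perturbed data, nonsingularity of $\mathcal{W}'$ is exactly linear independence of $\boldsymbol{c}'_1, \ldots, \boldsymbol{c}'_{n+1}$ in $\mathbb{R}^{n+1}$, while ``$l'_i$ approaches $l_i$'' and ``$\mathscr{R} \subset l_i'^+$'' force each $\boldsymbol{c}'_i$ to stay near $\boldsymbol{c}_i$ inside a containment-preserving set. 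So I would reduce the lemma to showing that the tuples giving independence form a set that meets every neighborhood of the admissible, containment-respecting perturbations, with $\boldsymbol{c}'_1 = \boldsymbol{c}_1$ held fixed.

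For the independence part I would argue by genericity. Since $l_1$ is a genuine hyperplane, $\boldsymbol{w}_1 \ne 0$, hence $\boldsymbol{c}_1 \ne 0$; keeping $\boldsymbol{c}'_1 = \boldsymbol{c}_1$, the map $P(\boldsymbol{c}'_2, \ldots, \boldsymbol{c}'_{n+1}) = \det[\boldsymbol{c}_1, \boldsymbol{c}'_2, \ldots, \boldsymbol{c}'_{n+1}]$ is a polynomial that is not identically zero (complete $\boldsymbol{c}_1$ to a basis to witness a nonzero value), so its zero locus is a proper algebraic subvariety of $\mathbb{R}^{n(n+1)}$, closed with empty interior and Lebesgue measure zero. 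Equivalently, and more constructively, I would build the columns one at a time: given independent $\boldsymbol{c}_1, \boldsymbol{c}'_2, \ldots, \boldsymbol{c}'_{k-1}$, their span is a proper subspace $V_{k-1}$ of dimension $k-1 \le n < n+1$, so an arbitrarily small perturbation of $\boldsymbol{c}_k$ lands outside $V_{k-1}$; after $n+1$ steps all columns are independent. Either way, nonsingularity is attainable with perturbations of any prescribed smallness, leaving the first column untouched.

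For the containment part I would use that $\{\boldsymbol{x} : \boldsymbol{w}^T\boldsymbol{x} + b > 0\}$ depends on $(\boldsymbol{w}, b)$ through an open condition. I would first shift each offset outward, replacing $b_i$ by $b_i + \epsilon_i$ with small $\epsilon_i > 0$, which enlarges $l_i^+$ and upgrades the hypothesis $\mathscr{R} \subseteq l_i^+$ to a strict $\mathscr{R} \subset l_i'^+$ with a margin; for the fixed hyperplane $l'_1 = l_1$ the strictness $\mathscr{R} \subset l_1^+$ is already supplied by the hypothesis. Then, treating $\mathscr{R}$ as compact, the set $C$ of perturbations $(\boldsymbol{c}'_2, \ldots, \boldsymbol{c}'_{n+1})$ with $\mathscr{R} \subset l_i'^+$ for all $i$ is open and contains a full neighborhood of the outward-shifted configuration. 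Intersecting $C$ with the complement of the measure-zero singular locus of the previous step yields an open dense subset of $C$; any point of it gives hyperplanes $l'_i$ simultaneously as close to $l_i$ as desired, keeping $\mathscr{R}$ strictly inside every positive side and making $\mathcal{W}'$ nonsingular, which is the required matrix.

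The hard part will be the containment when $\mathscr{R}$ is unbounded, which is precisely the regime forced when the original normals $\boldsymbol{w}_i$ are linearly dependent (the only case where perturbing the normals, not just the offsets, is unavoidable). Tilting $\boldsymbol{w}_i$ by any amount can push a faraway part of an unbounded $\mathscr{R}$ across $l'_i$, so the naive ``a small perturbation preserves an open condition'' reasoning fails at infinity. I would address this by restricting the admissible perturbation directions of each normal to those nonincreasing along every recession direction of $\mathscr{R}$ (so $\mathscr{R}$ stays on the positive side at infinity) and checking that this cone of admissible directions is still large enough to escape the span $V_{k-1}$ at each inductive step; if instead $\mathscr{R}$ is intended to be a region of the compact $U = [0,1]^n$, this obstacle disappears and the openness argument above applies verbatim.
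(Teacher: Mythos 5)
Your proposal is correct, and its ``more constructively'' variant is in substance the paper's own proof: the paper builds the columns $\boldsymbol{r}_i = [\boldsymbol{w}_i^T, b_i]^T$ one at a time, keeps $\boldsymbol{r}_1$ untouched (matching $l_1' = l_1$), and whenever the next column lies in the span of the accepted ones it adds a perturbation $\epsilon\mathscr{N}_{k-1}$ with $\mathscr{N}_{k-1}$ orthogonal to that span --- exactly your inductive escape from the proper subspace $V_{k-1}$ --- and then simply asserts that for $\epsilon$ small enough, $\mathscr{R}\subset l_i^+$ ``could lead to'' $\mathscr{R}\subset l_i'^+$. What you add is concentrated precisely where the paper is thinnest. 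Your genericity packaging (the determinant's zero locus is a proper algebraic subvariety, hence closed, nowhere dense, and measure zero) is an equivalent but cleaner route to nonsingularity; more importantly, your margin-plus-compactness argument for the containment (shift each $b_i$ outward by $\epsilon_i$ to create a uniform margin, note that containment over a compact $\mathscr{R}$ is an open condition on $(\boldsymbol{w}_i', b_i')$, then intersect that open set with the dense nonsingular locus) turns the paper's one-sentence claim into an actual proof. You are also right that this claim genuinely fails for unbounded $\mathscr{R}$: an arbitrarily small tilt of a normal can lose containment at infinity. The paper never acknowledges this; it is harmless in context only because the lemma is invoked (e.g., in the proof of theorem 3) for regions of the bounded universal set $U=[0,1]^n$, which is exactly the escape you note at the end. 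One minor imprecision: unboundedness of $\mathscr{R}$ is not ``precisely'' the degenerate-normals regime --- $\bigcap_i l_i^+$ can be unbounded even when the $\boldsymbol{w}_i$ span $\mathbb{R}^n$, and conversely $\mathscr{R}$ may be carved out by additional hyperplanes and remain bounded --- but this side remark does not affect the validity of your argument.
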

\begin{proof}
The proof is constructive and uses an inductive method.

\textbf{Step 1}. Write $\boldsymbol{r}_1 = [\boldsymbol{w}_1^T, b_1]^T$, an $(n+1) \times 1$ vector of $n+1$-dimensional space. If $\boldsymbol{r}'_2 = [\boldsymbol{w}_2^T, b_2]^T$ is unparallel to $\boldsymbol{r}_1$, write $\boldsymbol{r}_2 = \boldsymbol{r}'_2$; otherwise, let
\begin{equation}
\boldsymbol{r}_2 = \boldsymbol{r}'_2 + \epsilon\mathscr{N}_1,
\end{equation}
where $\mathscr{N}_1 \perp \boldsymbol{r}_1$ and $\epsilon$ is a positive number that could be arbitrarily small. Then $\boldsymbol{r}_2 \nparallel \boldsymbol{r}_1$.

\textbf{Step 2}. If $\boldsymbol{r}'_3 = [\boldsymbol{w}_3^T, b_3]^T \notin \boldsymbol{r}_1 \oplus \boldsymbol{r}_2$, where ``$\oplus$'' represents linear-combination operation, let $\boldsymbol{r}_3 = \boldsymbol{r}'_3$. Otherwise, add a perturbation as
\begin{equation}
\boldsymbol{r}_3 = \boldsymbol{r}'_3 + \epsilon\mathscr{N}_2,
\end{equation}
where $\mathscr{N}_2 \perp \boldsymbol{r}_1 \oplus \boldsymbol{r}_2$ and $\epsilon$ is defined as in equation 4.59, resulting in  $\boldsymbol{r}_3 \notin \boldsymbol{r}_1 \oplus \boldsymbol{r}_2$.

\textbf{Inductive Step}. Suppose that we have constructed $\boldsymbol{r}_{k-1} \notin \boldsymbol{r}_1 \oplus \boldsymbol{r}_2 \oplus \dots \oplus \boldsymbol{r}_{k-2}$ for $3 \le k \le n+1$. If $\boldsymbol{r}'_{k} = [\boldsymbol{w}_{k}^T, b_{k}]^T \notin \boldsymbol{r}_1 \oplus \boldsymbol{r}_2 \oplus \dots \oplus \boldsymbol{r}_{k-1}$, let $\boldsymbol{r}_{k} = \boldsymbol{r}'_{k}$; otherwise, perturb it to
\begin{equation}
\boldsymbol{r}_{k} = \boldsymbol{r}'_{k} + \epsilon\mathscr{N}_{k-1},
\end{equation}
where $\mathscr{N}_{k-1} \perp \boldsymbol{r}_1 \oplus \boldsymbol{r}_0 \oplus \dots \oplus \boldsymbol{r}_{k-1}$ , such that $\boldsymbol{r}_{k} \notin \boldsymbol{r}_1 \oplus \boldsymbol{r}_0 \oplus \dots \oplus \boldsymbol{r}_{k-1}$.

Repeat the inductive step until $k = n + 1$. Let
\begin{equation}
\begin{bmatrix}
{\boldsymbol{w}_i'}^T, b_i'
\end{bmatrix}^T = \boldsymbol{r}_{i}
\end{equation}
for $i = 1, 2, \dots, n+1$. Then the hyperplane $l_i'$ with equation ${\boldsymbol{w}_i'}^T\boldsymbol{x} + b_i' = 0$ is constructed. By the construction process, among $\boldsymbol{r}_i$'s, each of them is not the linear combination of the remaining ones, and thus matrix $\mathcal{W}'$ of equation 4.58 is nonsingular.

When $\epsilon$ in equation 4.61 is sufficiently small, $l_i'$ could approximate $l_i$ to arbitrary desired accuracy, such that $\mathscr{R} \subset l_i^+$ could lead to $\mathscr{R} \subset l_i'^+$ for all $i$. This completes the proof.
\end{proof}

\begin{dfn}[Continuous linear spline]
Denote by $\Delta = \{l_{\nu}: \nu = 1, 2, \dots, \zeta\}$ a set of $n-1$-dimensional hyperplanes of $\mathbb{R}^n$ satisfying
\begin{equation}
l_1 \prec l_2 \prec \cdots \prec l_{\zeta}
\end{equation}
of equation 3.1, with $R_{\nu}$'s as the associated ordered regions of $U = [0, 1]^n$ formed by $\Delta$. Let $\mathfrak{U} = \bigcup_{\nu = 1}^{\zeta}R_{\nu}$ and then $\mathfrak{U} \subseteq U$. Write
\begin{equation}
\begin{aligned}
\mathfrak{S}_n(\Delta; \mathfrak{U})  =  \{s: s(\boldsymbol{x})=s_{\nu}(\boldsymbol{x}) \in \mathcal{P}_2 & \ \ \text{for} \ \boldsymbol{x} \in R_{\nu}, s_{\nu}(l_{\nu+1}) = s_{\nu+1}(l_{\nu+1}) \ \text{for} \ \nu \ne \zeta, \\
& \nu = 1, 2, \dots, \zeta\},
\end{aligned}
\end{equation}
where $\mathcal{P}_2$ is the set of the linear functions on $\mathbb{R}^n$. Then each $s(\boldsymbol{x}) \in \mathfrak{S}_n(\Delta; \mathfrak{U})$ is a continuous piecewise linear function and we call it a continuous linear spline on $\mathfrak{U}$.
\end{dfn}

\begin{dfn}[One-sided bases]
With some notations from definition 9, let
\begin{equation}
\mathfrak{B}_1 = \{\rho_{i}(\boldsymbol{x}) = \sigma(\boldsymbol{w}_{i}^T\boldsymbol{x} + b_{i}): -\xi \le i \le \zeta, \ n-1 \le \xi\},
\end{equation}
where $\rho_{i}(\boldsymbol{x})$ determines a knot $l_{i}$ of $\boldsymbol{w}_{i}^T\boldsymbol{x} + b_{i} = 0$ and $l_{\nu}$ for $\nu = 1, 2, \dots, \zeta$ is from equation 4.63. Suppose that
\begin{equation}
R_{\nu} \subset l^+_{\mu}
\end{equation}
for all $\nu$'s and all $\mu = -\xi, -\xi+1, \dots, 0$ and that the rank of the linear-output matrix for $R_1$ is $n+1$. Then we call $\mathfrak{B}_1$ a set of the one-sided bases of $\mathfrak{S}_n(\Delta; \mathfrak{U})$ of equation 4.64.

Each of $\rho_{\mu}(\boldsymbol{x})$'s whose associated hyperplane satisfies equation 4.66, together with $\rho_1(\boldsymbol{x})$ fulfilling $R_{\nu} \subset l_1^+$ for all $\nu$, are called a \textsl{global basis} (or \textsl{unit}) of $\mathfrak{S}_n(\Delta; \mathfrak{U})$ (or $\mathfrak{N}$), and the remaining ones are called \textsl{local basis} (or \textsl{unit}), where $\mathfrak{N}$ is the two-layer ReLU network whose units of the hidden layer are from $\mathfrak{B}_1$.
\end{dfn}

\begin{thm}[Approximation over a single strict partial order]
Any continuous linear spline $\mathcal{S}(\boldsymbol{x}) \in \mathfrak{S}_n(\Delta; \mathfrak{U})$ of equation 4.64 with $\zeta$ linear pieces could be realized by a two-layer ReLU network $\mathfrak{N}$ in terms of the one-sided bases of equation 4.65, whose hidden layer has
\begin{equation}
\Theta \ge \zeta + n
\end{equation}
units.
\end{thm}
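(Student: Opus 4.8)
The plan is to reproduce the one-dimensional construction of Lemma 1 and Proposition 1, now using the strict partial order $l_1 \prec l_2 \prec \cdots \prec l_\zeta$ of equation 4.63 to play the role that the ordering $x_1 < \cdots < x_{\zeta-1}$ played on the line. The partial order arranges the ordered regions so that each local unit acts in a single direction: from equation 3.1 the hyperplane $l_\nu$ satisfies $\bigcup_{j<\nu} R_j \subseteq l_\nu^0$ and $R_i \subset l_\nu^+$ for every $i \ge \nu$, so that $\rho_\nu(\boldsymbol{x}) = \sigma(\boldsymbol{w}_\nu^T\boldsymbol{x}+b_\nu)$ vanishes on $R_1,\dots,R_{\nu-1}$ and is an affine function on $R_\nu,\dots,R_\zeta$. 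This is exactly the one-sided behaviour of $\sigma(x-x_\nu)$, and it lets me assemble $\mathcal{S}$ one piece at a time.

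First I would settle the base case, the analogue of producing $s_1$ in Lemma 1. On $R_1$ only the global bases are active, because every local $\rho_\mu$ with $\mu\ge 2$ is zero there. Taking $\xi=n-1$ supplies the $n+1$ global units $\rho_{-(n-1)},\dots,\rho_0,\rho_1$. The partial order already guarantees $R_\nu\subset l_1^+$ for every $\nu$, and the $n$ auxiliary hyperplanes $l_{-(n-1)},\dots,l_0$ are free to place: choosing them with generic normals and large offsets makes them positive on the whole bounded set $U=[0,1]^n$, hence on every $R_\nu$, and a small perturbation in the spirit of Lemma 4 then renders the linear-output matrix of $R_1$—the $(n+1)\times(n+1)$ matrix $\mathcal{W}'$ of equation 4.58—nonsingular while preserving positivity, since positivity on the compact $U$ is stable under small perturbations. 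Lemma 3 now lets me solve the square system $\sum_{i=-(n-1)}^{1}\lambda_i(\boldsymbol{w}_i^T\boldsymbol{x}+b_i)=s_1(\boldsymbol{x})$ for the global output weights $\lambda_{-(n-1)},\dots,\lambda_1$.

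Next I would run the recursion for $\nu=2,\dots,\zeta$. On $R_\nu$ the active bases are the global ones together with $\rho_2,\dots,\rho_\nu$, so the accumulated output there coincides with the affine extension of $s_{\nu-1}$ before $\rho_\nu$ is added. The shared boundary $q_\nu=R_{\nu-1}\cap R_\nu\subseteq l_\nu$ has $\dim q_\nu=n-1$ by equation 3.1, so it affinely spans $l_\nu$; since $\mathcal{S}\in\mathfrak{S}_n(\Delta;\mathfrak{U})$ forces $s_{\nu-1}$ and $s_\nu$ to agree on $q_\nu$, the two affine pieces in fact agree on the entire knot $l_\nu$, which is precisely the continuity hypothesis of Theorem 2. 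Applying Theorem 2 with $s_{\nu-1}$ fixed then furnishes a unique $\lambda_\nu$ with $s_\nu(\boldsymbol{x})=s_{\nu-1}(\boldsymbol{x})+\lambda_\nu\sigma(\boldsymbol{w}_\nu^T\boldsymbol{x}+b_\nu)$, the multivariate analogue of $\lambda_{\nu-1}=a_\nu-a_{\nu-1}$ in equation 2.9. Because $\rho_\nu$ vanishes on $R_1,\dots,R_{\nu-1}$, introducing it leaves the already-realized pieces undisturbed.

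Collecting the weights, the network output $\sum_{i=-\xi}^{\zeta}\lambda_i\sigma(\boldsymbol{w}_i^T\boldsymbol{x}+b_i)$ equals $s_\nu$ on each $R_\nu$ and therefore realizes $\mathcal{S}$ on $\mathfrak{U}$, while the number of hidden units is $\zeta+\xi+1\ge\zeta+n$, with equality at $\xi=n-1$, giving equation 4.67. I expect the main obstacle to be the base case rather than the recursion: one must simultaneously keep the $n$ auxiliary global hyperplanes on the positive side of every ordered region and make their linear-output matrix with $l_1$ nonsingular, which is exactly where the boundedness of $U$ and the perturbation argument of Lemma 4 are indispensable. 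A secondary point needing care is the dimension count $\dim q_\nu=n-1$, since it is what upgrades the spline's continuity on the boundary piece $q_\nu$ to continuity on the full knot $l_\nu$ and thereby makes Theorem 2 directly applicable at each step.
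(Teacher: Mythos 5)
Your proposal is correct and follows essentially the same route as the paper's proof: realize $s_1$ on $R_1$ by $n+1$ global units made positive on all ordered regions (using boundedness of $\mathfrak{U}$) and made nonsingular via the perturbation of Lemma 4 so that Lemma 3 applies, then run the recursion $s_\nu = s_{\nu-1} + \lambda_\nu\sigma(\boldsymbol{w}_\nu^T\boldsymbol{x}+b_\nu)$ with the unique $\lambda_\nu$ from Theorem 2, the strict partial order guaranteeing that each local unit vanishes on all earlier regions, for a total of $(n+1)+(\zeta-1)=\zeta+n$ units. Your extra observation that $\dim q_\nu = n-1$ upgrades agreement on $R_{\nu-1}\cap R_\nu$ to agreement on the whole knot $l_\nu$ (so Theorem 2 applies verbatim) is a small point the paper leaves implicit, and your handling of $\Theta > \zeta+n$ via $\xi > n-1$ matches the paper's use of Lemma 3 with redundant global hyperplanes.
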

\begin{proof}
We first prove the $\Theta = \zeta + n$ case. The proof is composed of two parts due to different construction methods. The first is the implementation of a linear function on $R_1$ and the second is for the remaining regions.

Because $\mathfrak{U} \subseteq U$ is a bounded set, we can always find $n$ hyperplanes $l'_{\kappa}$ for $\kappa = 1, 2, \dots, n$ satisfying $R_{\nu} \subset l_{\kappa}'^+$ for all $\nu$ and $\kappa$. Then through lemma 4, construct $\mathcal{L}_{\tau}$'s for $\tau = 1, 2, \dots, n+1$ on the basis of $\mathcal{L}'_{\tau}$, where $\mathcal{L}'_1 = l_1$ and $\mathcal{L}'_{\kappa + 1} = l'_{\kappa}$, such that the linear-output matrix formed by $\mathcal{L}_{\tau}$'s is nonsingular. Noting that $\mathcal{L}_1$ is not necessarily perturbed, so the integrity of $R_1$ is ensured during the operation of lemma 4. When $\epsilon$ of equation 4.61 is sufficiently small, $R_{\nu} \subset \mathcal{L}_{\tau}^+$ for all $\tau$. By lemma 3, any linear function including $s_1 = \boldsymbol{w}_1^T\boldsymbol{x} + b_1$ on $R_1$ could be realized by the $n+1$ units associated with $\mathcal{L}_{\tau}$'s.

The linear function $s_{\nu} = \boldsymbol{w}_{\nu}^T\boldsymbol{x} + b_{\nu}$ for $\nu = 2, 3, \dots, \zeta$ on $R_{\nu}$ is implemented by the recurrence formula
\begin{equation}
s_{\nu}(\boldsymbol{x}) = s_{\nu-1}(\boldsymbol{x}) + \lambda_{\nu}\sigma(\boldsymbol{w}_{\nu}^T\boldsymbol{x} + b_{\nu}),
\end{equation}
according to equation 4.7 and the strict partial order of $l_{\nu}$'s, where $\boldsymbol{w}_{\nu}^T\boldsymbol{x} + b_{\nu}$ is from the equation $\boldsymbol{w}_{\nu}^T\boldsymbol{x} + b_{\nu} = 0$ of $l_{\nu}$, and $s_1(\boldsymbol{x})$ has been previously constructed. By theorem 2, to any $s_{\nu}(\boldsymbol{x})$ continuous with $s_{\nu-1}(\boldsymbol{x})$ at knot $l_{\nu}$, if the equation $\boldsymbol{w}_{\nu}^T\boldsymbol{x} + b_{\nu} = 0$ of $l_{\nu}$ is fixed, there exists a unique parameter $\lambda_{\nu}$ producing it, which can be easily obtained by arbitrary one point of $s_{\nu}(\boldsymbol{x})$ that is not on $s_{\nu} \cap s_{\nu-1}$. As mentioned at the beginning of section 4.2, the function values on $R_{\nu - 1} \cap R_{\nu}$ are produced by $s_{\nu-1}$, analogous to the interval $(x_{j-1}, x_j]$ of section 2.1, since $R_{\nu - 1} \cap R_{\nu} \subset l_{\nu-1}^+$.

By the construction process of $s_1$ and $s_{\nu}$'s, the total number of the units required is $\Theta = (n+1) + (\zeta-1) = \zeta + n$. The case of $\Theta > \zeta + n$ is due to lemma 3. By lemma 3, when the number of the units for $s_1$ is greater than $n+1$, it can still be realized, provided that the rank of the linear-output matrix for $R_1$ is $n+1$. On the basis of the nonsingular $(n+1) \times (n+1)$ matrix $\mathcal{W}'$ of equation 4.58, we can add arbitrary number $m$ of hyperplanes in terms of $\mathcal{W} = [\mathcal{W}', \mathcal{W}'']$ whose rank is $n+1$, and each added hyperplane $\mathscr{L}_{\mu}$ for $\mu = 1, 2, \dots, m$ satisfies $R_{\nu} \subset \mathscr{L}_{\mu}^+$ for all $\mu$ and all $\nu = 1, 2, \dots, \zeta$. This proves the case of $\Theta > \zeta + n$.
\end{proof}

\section{Two-Sided Bases}
In the learning process, the parameters of a two-layer ReLU network are randomly initialized and iterated by the back-propagation algorithm. The training solution may not be of the one-sided bases of the preceding sections. For instance, to the univariate case, when positive and negative input weights simultaneously exist, the associated units would not be of the one-sided type. To interpret the training solution, we enlarge the solution space by the extension to two-sided bases, through which the solutions for one-dimensional input, including the training ones, could be completely understood (corollary 3). Experimental validation of section 8 will further demonstrate the necessity of this extension.

\subsection{Preliminaries}
\begin{dfn}[Two-sided bases]
This concept is based on the one-sided bases of definition 10. In all the cases that follow, we assume that the rank of the linear-output matrix for $R_1$ of equation 4.66 is $n+1$. On the basis of the one-sided bases of equation 4.65, add some new elements in $\mathfrak{B}_1$ as
\begin{equation}
\begin{aligned}
\mathfrak{B}^{(1)}_2 = \{\rho_{i}(\boldsymbol{x}), \rho'_{\tau}&(\boldsymbol{x}) = \sigma(-(\boldsymbol{w}_{i_{\tau}}^T\boldsymbol{x} + b_{i_{\tau}})): -\xi \le i \le \zeta, \\& 2 \le i_{\tau } \le \zeta, \tau = 1, 2, \dots, \beta\},
\end{aligned}
\end{equation}
where $1 \le \beta \le \zeta-1$ and $\xi \ge -1$ (similarly for $\beta$ and $\xi$ of equations 5.2 and 5.3 below). Support that $\xi + \beta + 1 \ge n$. We call the elements of $\mathfrak{B}_2^{(1)}$ the added two-sided bases of $\mathfrak{S}_n(\Delta; \mathfrak{U})$.

Write
\begin{equation}
\begin{aligned}
\mathfrak{B}^{(2)}_2 = \{\rho_{i}(\boldsymbol{x})&, \rho'_{\tau}(\boldsymbol{x}) = \sigma(-(\boldsymbol{w}_{i_{\tau}}^T\boldsymbol{x} + b_{i_{\tau}})): -\xi \le i \le \zeta, \\ &  i \ne i_{\tau}, 2 \le i_{\tau } \le \zeta, \tau = 1, 2, \dots, \beta\},
\end{aligned}
\end{equation}
with $\xi + 1 \ge n$ satisfied, whose elements are called the substituted two-sided bases of $\mathfrak{S}_n(\Delta; \mathfrak{U})$.

Let
\begin{equation}
\begin{aligned}
\mathfrak{B}^{(3)}_2 = \{\rho_{i}(\boldsymbol{x}), \rho'_{\tau}(\boldsymbol{x})& = \sigma(-(\boldsymbol{w}_{i_{\tau}}^T\boldsymbol{x} + b_{i_{\tau}})): -\xi < i \le \zeta, I \ne \emptyset \subset \{i_{\tau}\}, \\ &i \notin I, 2 \le i_{\tau} \le \zeta, \tau = 1, 2, \dots, \beta \}
\end{aligned}
\end{equation}
and the condition $\xi + \beta - |I| + 1 \ge n$ is satisfied. We call the elements of $\mathfrak{B}^{(3)}_2$ the compound two-sided bases of $\mathfrak{S}_n(\Delta; \mathfrak{U})$.

Any one of the above $\mathfrak{B}^{(1)}_2$, $\mathfrak{B}^{(2)}_2$ and $\mathfrak{B}^{(3)}_2$ is called a set of the two-sided bases of $\mathfrak{S}_n(\Delta; \mathfrak{U})$ and can be denoted by the simplified notion $\mathfrak{B}_2$. Each $\rho'_{\tau}(\boldsymbol{x})$ and $\rho_{i}(\boldsymbol{x})$ is called a \textsl{negative} and \textsl{positive} base, respectively, with the associated units respectively called \textsl{negative} and \textsl{positive} unit. A knot that corresponds to both negative and positive units is called a bidirectional knot.

The concepts of global and local bases (units) can be similarly defined as those of the one-sided case of definition 10.
\end{dfn}

\begin{rmk}
The reason of introducing the three types of two-sided bases will be explained in the proof of theorem 4 of sections 5.2.
\end{rmk}

\noindent
\textbf{Example}. Figure \ref{Fig.4} provides an intuitive example of the tree types of two-sided bases whose detailed description will be given in the proof of theorem 4.

\begin{lem}
Notations being as in lemma 3, suppose that $m > n+1$ and that among the $m$ hyperplanes whose positive-output regions include $R$, there are $\tau$ ones for $m - \tau \ge n+1$ whose output weights are fixed to be a constant, forming a subset $C \subseteq \mathcal{H}$, where $\mathcal{H}$ is the set of the $m$ hyperplanes. Let
\begin{equation}
\mathcal{H}' = \mathcal{H}-C
\end{equation}
and $m' = |\mathcal{H}'|$. Then $m' \ge n+1$. If the rank of the linear-output matrix $\mathcal{W}'$ of the region $R$ with respect to $\mathcal{H}'$ is $n+1$, any linear function on $R$ could be realized by the corresponding two-layer network $\mathfrak{N}$.
\end{lem}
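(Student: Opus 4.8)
The plan is to reduce the statement directly to Lemma 3. The first claim, $m' \ge n+1$, is immediate: since $C \subseteq \mathcal{H}$ with $|C| = \tau$, we have $m' = |\mathcal{H}'| = m - \tau$, which is at least $n+1$ by the standing hypothesis $m - \tau \ge n+1$.

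For the realization claim, I would begin from the observation underlying Lemma 3, namely that $R \subset \bigcap_{i=1}^{m}l_i^+$ forces every ReLU to be active on $R$, so the output of $\mathfrak{N}$ restricted to $R$ equals $\sum_{i=1}^{m}\lambda_i(\boldsymbol{w}_i^T\boldsymbol{x}+b_i)$. I would then split this sum according to the partition $\mathcal{H} = C \cup \mathcal{H}'$. The terms indexed by $C$ carry the prescribed constant output weights, so their contribution is a fixed affine function $g(\boldsymbol{x}) = \sum_{l_i \in C}\lambda_i(\boldsymbol{w}_i^T\boldsymbol{x}+b_i)$ on $R$, independent of the remaining unknowns.

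Given a target linear function $y = \boldsymbol{w}^T\boldsymbol{x} + b$ on $R$, the realization condition becomes $\sum_{l_j \in \mathcal{H}'}\lambda_j(\boldsymbol{w}_j^T\boldsymbol{x}+b_j) = y - g$. Since $g$ is fixed and $y$ ranges over all linear functions, the right-hand side $y - g$ is again an arbitrary linear function on $R$. I would then apply Lemma 3 to the subsystem formed by $\mathcal{H}'$ alone: its cardinality satisfies $m' \ge n+1$, and by hypothesis the rank of its linear-output matrix $\mathcal{W}'$ is $n+1$, so Lemma 3 produces free weights $\{\lambda_j : l_j \in \mathcal{H}'\}$ solving the reduced equation. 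Adding back the fixed $C$-terms reproduces $y$, which completes the argument.

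Because this is essentially a reduction, I expect no serious obstacle. The only point requiring care is to verify that absorbing the fixed contribution $g$ into the target leaves an \emph{arbitrary} linear function, so that the ``any linear function'' hypothesis of Lemma 3 genuinely transfers to the reduced problem on $\mathcal{H}'$; once that is checked, the conclusion follows verbatim from Lemma 3.
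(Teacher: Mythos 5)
Your proposal is correct and follows essentially the same route as the paper: the paper's proof likewise moves the fixed weighted outputs of the units in $C$ to the right-hand side, obtaining a reduced target ${\boldsymbol{w}'}^T\boldsymbol{x} + b'$, and then invokes the argument of Lemma 3 for the subsystem $\mathcal{H}'$ with $m' \ge n+1$ and $\mathrm{rank}(\mathcal{W}') = n+1$. Your extra remark that $y - g$ is still an arbitrary linear function is exactly the (implicit) point that makes the paper's reduction work.
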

\begin{proof}
In this case, equation 4.56 becomes
\begin{equation}
\sum_{\nu = 1}^{m'}\lambda_{i_{\nu}}(\boldsymbol{w}_{i_{\nu}}^T\boldsymbol{x} + b_{i_{\nu}}) = {\boldsymbol{w}'}^T\boldsymbol{x} + b',
\end{equation}
where hyperplane $l_{i_{\nu}}$ for $1 \le i_{\nu} \le m$ with equation $\boldsymbol{w}_{i_{\nu}}^T\boldsymbol{x} + b_{i_{\nu}} = 0$ comes from $\mathcal{H}'$, and the right-hand side ${\boldsymbol{w}'}^T\boldsymbol{x} + b'$ is obtained by the original ${\boldsymbol{w}}^T\boldsymbol{x} + b$ subtracting the weighted outputs of the units associated with the hyperplanes of $C$. The conclusion is obvious analogous to lemma 3.
\end{proof}

\begin{dfn}[Redundant hyperplane (unit)]
In lemma 3 or 5, at least $n+1$ units are sufficient to produce any linear function on $R$, for which we call each of the hyperplanes (units) except for arbitrary $n+1$ necessary ones a redundant hyperplane (unit).
\end{dfn}

\subsection{One-Dimensional Input}

\begin{figure}[!t]
\captionsetup{justification=centering}
\centering
\subfloat[Substituted case.]{\includegraphics[width=2.55in, trim = {4.2cm 3.8cm 1.6cm 1.4cm}, clip]{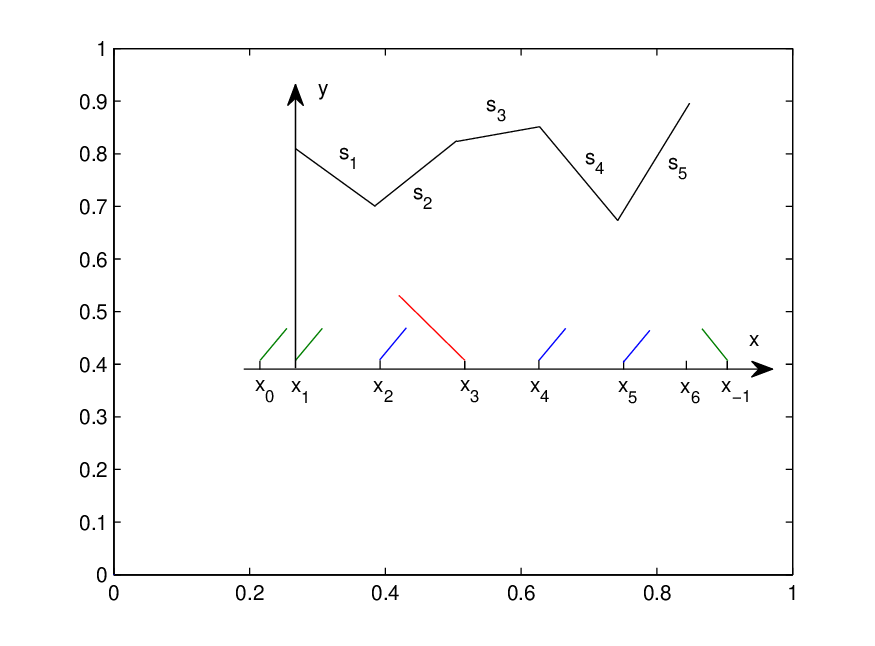}} \\
\subfloat[Added case.]{\includegraphics[width=2.7in, trim = {4.2cm 3.8cm 1.6cm 1.4cm}, clip]{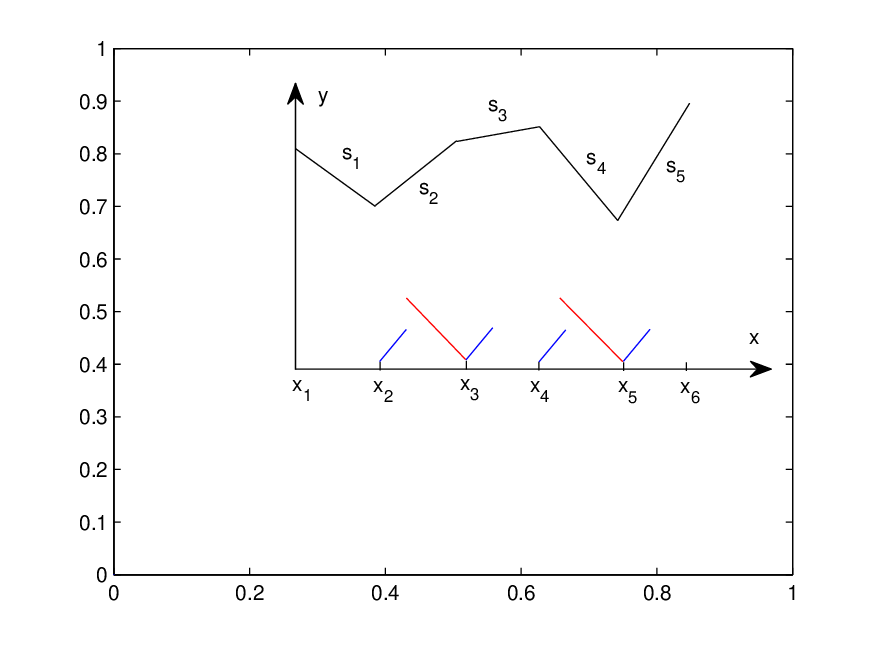}} \quad \quad
\subfloat[Compound case.]{\includegraphics[width=2.7in, trim = {4.2cm 3.8cm 1.6cm 1.4cm}, clip]{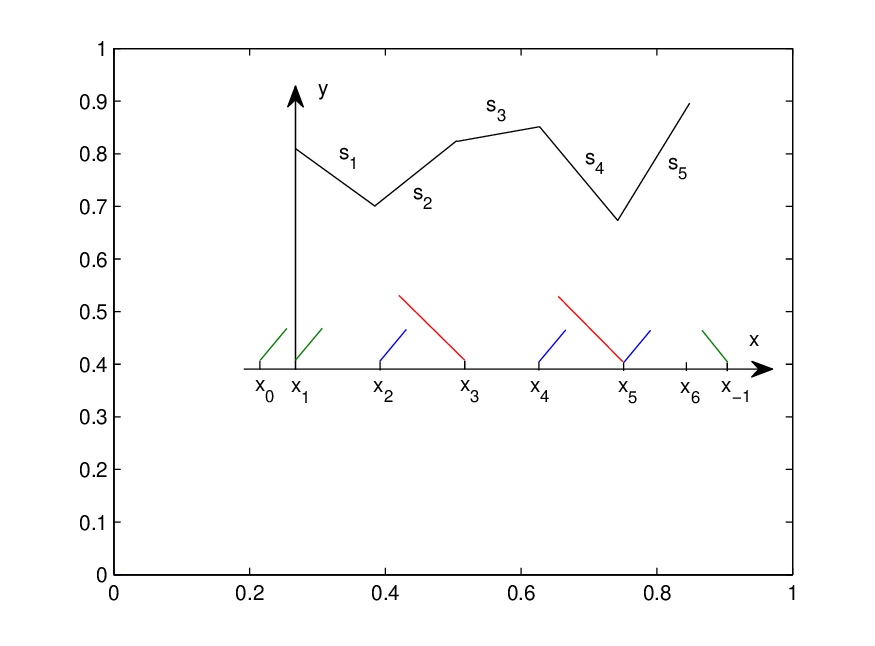}}
\caption{Two-sided bases.}
\label{Fig.4}
\end{figure}

\begin{thm}
Let $\mathcal{S}(x) \in \mathfrak{S}_1(\Delta)$ of equation 2.6 be an arbitrary continuous linear spline with $\zeta$ linear pieces. Then a two-layer ReLU network $\mathfrak{N}$ can realize $\mathcal{S}(x)$ in terms of any one of the three types of the two-sided bases of definition 11, with the hidden layer having
\begin{equation}
\Theta \ge \zeta + 1
\end{equation}
units.
\end{thm}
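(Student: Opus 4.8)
The plan is to reduce everything to the one-sided representation of Lemma 1 and then convert selected positive units into negative ones by the elementary ReLU identity
\[
\sigma(x - x_j) = (x - x_j) + \sigma(-(x - x_j)),
\]
which holds for every $x$ and rewrites a right-activating unit as an affine function plus a left-activating unit. First I would invoke Lemma 1 to write $\mathcal{S}(x) = \sum_{j=-1}^{\zeta-1}\lambda_j\sigma(x - x_j)$, in which the two units at $x_{-1}, x_0$ (active on all of $[0,1]$) realize the first affine piece $s_1 = a_1 x + b_1$ and each of the remaining $\zeta - 1$ units contributes the slope jump $\lambda_j = a_{j+1} - a_j$ at its knot $x_j$. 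The only two facts I will need are that the pair of global units can realize an arbitrary affine function on $R_1 = [0, x_1]$---guaranteed by the rank-$(n+1)$ (here rank-$2$) assumption of definition 11, equivalently by the nonsingular $2\times 2$ system for $\lambda_{-1},\lambda_0$ in Lemma 1---and that adding or subtracting an affine function never alters a slope jump.

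For the substituted bases $\mathfrak{B}^{(2)}_2$ I would, for each chosen index $i_{\tau}$, delete the positive unit $\lambda_{i_{\tau}}\sigma(x - x_{i_{\tau}})$ and insert the negative unit $\lambda_{i_{\tau}}\sigma(-(x - x_{i_{\tau}}))$; by the identity the two differ by the affine term $\lambda_{i_{\tau}}(x - x_{i_{\tau}})$, which I add back through the two global units. The slope jump at $x_{i_{\tau}}$ is unchanged and the unit count stays $\zeta + 1$. For the added bases $\mathfrak{B}^{(1)}_2$ I keep every positive unit and place an extra negative unit at each of the $\beta$ selected knots; at such a bidirectional knot I split the required jump by writing the combined contribution as $(\lambda_{i_{\tau}}^{+} + \lambda_{i_{\tau}}^{-})\sigma(x - x_{i_{\tau}}) - \lambda_{i_{\tau}}^{-}(x - x_{i_{\tau}})$ and imposing the single constraint $\lambda_{i_{\tau}}^{+} + \lambda_{i_{\tau}}^{-} = \lambda_{i_{\tau}}$ (infinitely many solutions), again folding the affine remainder into the global units. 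This uses $\beta$ extra units, giving $\Theta = \zeta + 1 + \beta > \zeta + 1$. The compound case $\mathfrak{B}^{(3)}_2$ is the simultaneous application of both mechanisms---substitution at the knots indexed by $I$ and addition at the others---so its unit count lies between the previous two and is $\ge \zeta + 1$.

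The key consistency check, which I expect to be the main obstacle, is that the affine remainders produced at all converted knots sum to a single affine function whose absorption by the fixed pair of global units is always possible and does not perturb any local slope jump. This is exactly where the rank condition of definition 11 is essential: because the global units span the two-dimensional space of affine functions on $R_1$ while remaining active throughout $[0,1]$, adjusting their weights merely resets the global affine baseline, and since affine functions carry no jumps the prescribed behavior $\lambda_j = a_{j+1}-a_j$ at every knot is preserved. Continuity of the resulting network function is automatic by corollary 1. Collecting the three cases together with their unit counts then yields $\Theta \ge \zeta + 1$.
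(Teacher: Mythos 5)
Your proof is correct, but it takes a genuinely different route from the paper's. You reduce everything to the one-sided representation of Lemma 1 and then apply the pointwise identity $\sigma(x-x_j)=(x-x_j)+\sigma(-(x-x_j))$, folding the affine remainders into the two global units via the same nonsingular $2\times 2$ system as in equation 2.13; this makes the realized network function \emph{identical} to the one-sided one, so correctness of all three cases (substituted, added, compound) follows from a single algebraic manipulation, and the unit counts $\zeta+1$, $\zeta+1+\beta$, and the intermediate compound count give inequality 5.6. The paper instead argues constructively and order-theoretically: it realizes the pieces sequentially via the correlation of theorem 2, handles a negative unit through the reverse recurrence $s_3=s_2-\lambda_3\sigma(-(x-x_3))$ (so its output weight is \emph{fixed} by the right-hand pieces and must be treated as a constant when producing $s_1$, which is why Lemma 5 on redundant/fixed-weight units is invoked), and, in the added case, shows that the free weights $\lambda_3^{(1)},\lambda_5^{(1)}$ of the negative units at two bidirectional knots can realize the initial piece $s_1$ \emph{without any global units}. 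What each approach buys: yours is shorter, unified, and makes exactness immediate, at the cost of always anchoring the construction on the two global units; the paper's proof is longer but exhibits solution patterns (in particular, realizations of $s_1$ by purely local units, and the bookkeeping for fixed-weight negative units) that are exactly the patterns later used to interpret training solutions in corollary 3 and section 8.1 (e.g., Figure 10b, where $s_1$ is produced by two non-global units). So your argument establishes the theorem as stated, but it does not by itself exhibit the no-global-unit realization that the paper's added-bases construction provides.
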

\begin{proof}
The proof is constructive with the aid of Figure \ref{Fig.4}, in which the examples of three kinds of two-sided bases are given. In Figure \ref{Fig.4}, the red long and blue short lines represent the negative and positive units, respectively; each green short one is of a global unit. The interval $[x_1, x_6]$ is the domain on which the linear spline is defined.

\textbf{\Romannum{1}. Substituted two-sided bases}. We first see the simplest substituted two-sided bases of Figure \ref{Fig.4}a. Note that the correlation between adjacent linear pieces of equation 4.7 can also be expressed as
\begin{equation}
\mathscr{S}_1(\boldsymbol{x}) = \mathscr{S}_2(\boldsymbol{x}) - \lambda\sigma(\boldsymbol{w}^T\boldsymbol{x} + b),
\end{equation}
where $\mathscr{S}_1$ and $\mathscr{S}_2$ correspond to $s_1$ and $s_2$ of equation 4.7, respectively; we change the two notions to avoid the confusion in this proof. To the example of Figure \ref{Fig.4}a, we thus have
\begin{equation}
s_3 = s_2 - \lambda_3\sigma(-(x - x_3)),
\end{equation}
where $\lambda_3$ can be determined by $s_2$ and $s_3$ similarly to lemma 1. Denote by $u_3$ and $l_3$ the unit and knot of $\sigma(-(x - x_3))$, respectively. Then $u_3$ could influence $l_3^+ = (-\infty, x_3)$.

Negative unit $u_3$, together with the global units of $u_{-1}$ at $x_{-1}$, $u_0$ at $x_0$ and $u_1$ at $x_1$, account for the production of $s_1$ on $[x_1, x_2]$. The output weight $\lambda_3$ of $u_3$ of equation 5.8 has been fixed for $s_3$ and hence we can only use $u_{-1}$, $u_0$ and $u_1$ to realize $s_1$.

In the one-dimensional case, it's easy to construct a linear-output matrix $\mathcal{W}$ for $[x_1, x_2]$ of $s_1$ whose rank is 2, without the need of the perturbation of lemma 4. For example, the parameters of $u_0$ and $u_1$ contribute to a submatrix
\begin{equation}
\mathcal{W}' = \begin{bmatrix}
1 &  1\\
-x_0 & -x_1
\end{bmatrix}
\end{equation}
of $\mathcal{W}$, which is nonsingular due to $x_0 \ne x_1$; the principle of this example is general for the case of the one-dimensional input. If the number of the units for $s_1$ is greater than 2, lemma 3 or 5 is required.

In the example of Figure \ref{Fig.4}a, $u_3$ can be regarded as a redundant unit owing to two reasons. First, if we assume that the linear function $s_2$ has been realized beforehand, the parameter $\lambda_3$ of $u_3$ is determined and thus the influence of $u_3$ on $s_1$ is of a type of redundant unit. Second, it is certainly true that $s_2$ can be produced before $s_3$, since for a fixed $\lambda_3$ the two global units can generate $s_1$ by lemma 5, leading to the implementation of $s_2$ due to their continuities.

The construction of the remaining linear functions except for $s_3$ is the same as lemma 1. Note that the influence of $u_3$ on $[x_1, x_3)$ can be embedded in the expression of $s_1$. For $x \ge x_3$, the accomplished $s_3$ becomes a new initial linear function for the succeeding ones and $u_3$ will have no impact on them. Compared with the strict partial order of one-sided bases, this is a disturbance but can be resolved. The case of more than one negative unit can be similarly analyzed.

\textbf{\Romannum{2}. Added two-sided bases}. In case \Romannum{1}, the output weight $\lambda_3$ of unit $u_3$ of Figure \ref{Fig.4}a cannot be freely adjusted for $s_1$ due to the constraint of equation 5.8, and the production of $s_1$ should rely on global units . This situation can be changed by simultaneously introducing positive and negative units at the same knot, which is called a bidirectional knot by definition 11. As shown in Figure \ref{Fig.4}b, there exist two units at $x_3$. One is positive (blue line) and the other is negative (red line). Then the relationship between $s_2$ and $s_3$ for $x \in (x_3, x_4]$ is
\begin{equation}
\begin{aligned}
s_3 = s_2 - &(-\lambda_3^{(1)}(x - x_3)) + \lambda_3^{(2)}\sigma(x - x_3)\\
&= s_2' + \lambda_3^{(2)}\sigma(x - x_3),
\end{aligned}
\end{equation}
where
\begin{equation}
s_2' = s_2 + \lambda_3^{(1)}(x - x_3)
\end{equation}
and $\lambda_3^{(1)}$ is from $s_2 = s_3 + \lambda_3^{(1)}\sigma(-(x - x_3))$ for $x \in (x_2, x_3]$ when only a negative unit exists. Notice that $s_2'$ is also continuous with $s_3$ at knot $x_3$, since
\begin{equation}
\lim_{x \to x_3^0}s_2'(x) = s_2(x_3) = \lim_{x \to x_3^+}s_3(x)
\end{equation}
by equations 5.10 and 5.11, where $x_3^0$ and $x_3^+$ are the left and right sides of $x_3$, respectively. Thus, the solution of $\lambda_3^{(2)}$ for $s_3$ exists.

By equation 5.10, the parameter $\lambda_3^{(2)}$ can be used to shape $s_3$, while $\lambda_3^{(1)}$ could be freely changed to produce $s_1$. Through this method, we can use local units to implement the initial linear function $s_1$, without introducing extra knots that the linear spline doesn't have or resorting to global units.

The parameters $\lambda_3^{(1)}$ and $\lambda_3^{(2)}$ are set as follows. If there's no negative unit at $x_3$, we have
\begin{equation}
s_3 = s_2 + \lambda_3\sigma(x - x_3) = s_2 + \lambda_3(x - x_3)
\end{equation}
for $x \in (x_3, x_4]$, where $\lambda_3$ can be obtained by lemma 1. Suppose that $\lambda_3^{(1)}$ has been set for $s_1$. By equation 5.10, we have
\begin{equation}
s_3 = s_2 + (\lambda_3^{(1)}+\lambda_3^{(2)})(x-x_3)
\end{equation}
for $x \in (x_3, x_4]$. Equations 5.13 and 5.14 imply
\begin{equation}
\lambda_3^{(2)} = \lambda_3 - \lambda_3^{(1)}.
\end{equation}

In Figure \ref{Fig.4}b, $x_5$ is similar to $x_3$ and also has negative and positive units. So we can use $\lambda_3^{(1)}$ and $\lambda_5^{(1)}$ to produce arbitrary $s_1$, without requiring any global unit. Two bidirectional knots are sufficient to yield a nonsingular linear-output matrix for $s_1$ by their negative units, analogous to equation 5.9.

Other linear functions can be produced by either equation 4.7 or 5.7. The distinction between cases \Romannum{2} and \Romannum{1} is that the former doesn't necessarily require global units to realize the initial $s_1$.

\textbf{\Romannum{3}. Compound two-sided bases}. The combination of cases \Romannum{1} and \Romannum{2} is the compound two-sided bases, with an example shown in Figure \ref{Fig.4}c. In this case, a knot that only has a negative unit and a bidirectional knot can simultaneously exist. Compared with cases \Romannum{1} and \Romannum{2}, this type doesn't introduce new mechanisms but only combines different kinds of units or knots.

The examples of Figure \ref{Fig.4} include all the general principles of this theorem. To the number of the units of the hidden layer required, in any of the three types of two-sided bases, the minimum is $\zeta + 1$, because $s_1$ needs at least two units and each of the other linear functions needs one, the same as the case of one-sided bases. So inequality 5.6 holds. This completes the proof.
\end{proof}

\subsection{General $n$-Dimensional Input}
As in section 4, in order for integrity, each conclusion of general input $\mathbb{R}^n$ in this section will include that of the one-dimensional input as a special case.
\begin{thm}[Principle of two-sided bases]
Notations being from theorem 3, any continuous linear spline $\mathcal{S}(\boldsymbol{x}) \in \mathfrak{S}_n(\Delta; \mathfrak{U})$ with $\zeta$ linear pieces can be realized by a two-layer ReLU network $\mathfrak{N}$, through any one of the three types of the two-sided bases of definition 11. The number of the units of the hidden layer of $\mathfrak{N}$ required satisfies
\begin{equation}
\Theta \ge \zeta + n.
\end{equation}
\end{thm}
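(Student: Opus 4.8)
The plan is to mirror the constructions of Theorems 3 and 4, upgrading the one-dimensional bookkeeping of Theorem 4 to the $n$-dimensional recurrence already validated in Theorem 3. The whole argument rests on the observation that Theorem 2 is orientation-free: its bijection between the weight $\lambda$ and the neighbouring linear piece holds for the knot $\mathscr{L}$ under either choice of positive side. Consequently a negative base $\rho'_{\tau}(\boldsymbol{x}) = \sigma(-(\boldsymbol{w}_{i_{\tau}}^T\boldsymbol{x}+b_{i_{\tau}}))$ is just Theorem 2 applied to the knot $l_{i_{\tau}}$ with its positive-output side taken to be $l_{i_{\tau}}^0$; it therefore connects the two pieces adjacent to $l_{i_{\tau}}$ exactly as a positive base does, only reversing which piece serves as reference. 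This is the content of the rewriting in equation 5.7, and it is the single device that lets negative units participate in the recurrence 4.68.

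First I would dispose of the initial piece $s_1$ on $R_1$, which is the genuinely new ingredient compared with Theorem 4. Since $R_1$ is the $\prec$-least ordered region, $R_1 \subset l_{\mu}^0$ for every $\mu \ge 2$, so every negative base sits with its positive-output side covering $R_1$, while every global base covers $R_1$ by definition 10. For the substituted family the condition $\xi+1 \ge n$ supplies $\xi+2 \ge n+1$ global bases over $R_1$; perturbing only the auxiliary globals (indices $\le 0$) via Lemma 4, so that $l_1$ and every genuine spline knot stay fixed, yields a nonsingular $(n+1)\times(n+1)$ linear-output submatrix, and Lemma 5 then realizes $s_1$ while the weight-fixed negative units count as redundant. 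For the added family the condition $\xi+\beta+1 \ge n$ counts the $\xi+2$ global bases together with the $\beta$ negative bases at distinct bidirectional knots, again at least $n+1$ units; since distinct knots contribute independent rows $[\boldsymbol{w}^T,b]^T$, perturbing the auxiliary globals alone is enough to reach rank $n+1$. The compound family combines both, its condition $\xi+\beta-|I|+1 \ge n$ accounting for the $|I|$ purely-substituted knots contributing only weight-fixed, redundant negative units.

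Next I would build the remaining pieces $s_2,\dots,s_{\zeta}$. Following the strict partial order, each forward step across a knot carrying a positive base uses the recurrence 4.68 of Theorem 3, and each step across a knot carrying a negative base uses the reversed form supplied by Theorem 2 and equation 5.7; by the bijectivity in Theorem 2 the connecting weight is uniquely fixed in every case by a single off-knot value of the target piece. For the added case I would reproduce the weight split of equations 5.10--5.16 at each bidirectional knot: the part $\lambda^{(1)}$ is spent freely on $s_1$ while $\lambda^{(2)} = \lambda - \lambda^{(1)}$ restores the correct shape on the far side, the continuity of the auxiliary function $s'$ at the knot being the $n$-dimensional analogue of equation 5.13 and again following from Theorem 2. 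The compound case introduces no new mechanism, only a juxtaposition of the two kinds of knots. Counting units then gives $n+1$ for $s_1$ and one further unit for each of the other $\zeta-1$ pieces, hence the minimum $\Theta = (n+1)+(\zeta-1) = \zeta+n$ and therefore inequality 5.17; the case $\Theta > \zeta+n$ follows by appending redundant global units as in Theorem 3.

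The hardest part will be verifying that the mixed forward/backward recurrence closes into a single globally consistent spline. A negative unit is active on a whole half-space, so it perturbs not merely its two neighbouring pieces but potentially every earlier region in the order; I must check, as in the one-dimensional disturbance analysis following equation 5.8, that these contributions are precisely the ones already absorbed into $s_1$ and into the upstream pieces, so that each $s_{\nu}$ acquires one and only one consistent expression. Equivalently, the obstacle is to confirm that the dimension inequalities on $\xi$, $\beta$ and $|I|$ are not merely necessary for the unit count but sufficient to hold the linear-output matrix for $R_1$ at full rank $n+1$ simultaneously with the reversed recurrences remaining solvable---which is exactly where Lemma 4's perturbation, kept away from the genuine spline knots, does the decisive work.
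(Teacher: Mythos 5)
Your proposal is correct and follows essentially the same route as the paper: the paper's own proof of this theorem is a one-line reduction—invoking the equivalence of strict partial orders in $\mathbb{R}^n$ and $\mathbb{R}$ (proposition 2) to transplant the two-sided mechanisms of theorem 4 onto the $n$-dimensional one-sided construction of theorem 3—and your argument is precisely that reduction carried out explicitly (the orientation-free reading of theorem 2 via equation 5.7, the rank-$(n+1)$ treatment of $s_1$ through lemmas 4 and 5 under the three dimension conditions, the recurrence 4.68 for the remaining pieces, and the count $\Theta = (n+1)+(\zeta-1)$). The disturbance issue you flag at the end is the same one the paper disposes of in the proof of theorem 4 by absorbing negative units' half-space contributions into $s_1$ and the upstream pieces, so nothing further is missing relative to the paper's own standard of rigor.
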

\begin{proof}
Because of the equivalence of the strict partial orders in $\mathbb{R}^n$ and $\mathbb{R}$ (proposition 2), on the basis of theorem 3 for the one-sided bases of $\mathbb{R}^n$, the proof is similar to that of theorem 4 for $\mathbb{R}$.
\end{proof}

\begin{figure}[!t]
\captionsetup{justification=centering}
\centering
\includegraphics[width=2.7in, trim = {4.8cm 3.8cm 1.6cm 1.4cm}, clip]{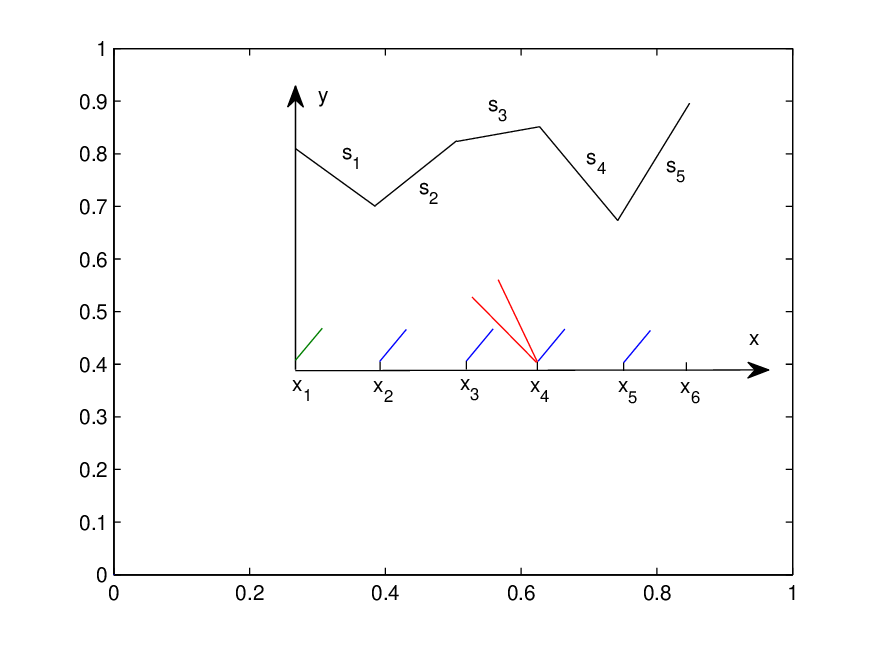}
\caption{Equivalent units.}
\label{Fig.5}
\end{figure}

\begin{prp}
Under definition 11, if there's more than one negative (positive) unit associated with the same knot, then they are equivalent to one in the contribution of forming a piecewise linear function.
\end{prp}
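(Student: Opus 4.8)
The plan is to reduce everything to the positive homogeneity of the ReLU, namely $\sigma(cy) = c\,\sigma(y)$ whenever $c > 0$, combined with the elementary fact that two linear forms cut out the same oriented half-space exactly when they are positive scalar multiples of one another. First I would fix notation: suppose $p \ge 2$ negative units are associated with a common knot $l$, the $k$th one producing output $\sigma(-(\boldsymbol{w}_k^T\boldsymbol{x} + b_k))$ with output weight $\lambda_k$, for $k = 1, 2, \dots, p$. Since all of these units determine the very same knot $l$ and, being of the negative type, all activate on the same side $l^+$, their defining forms must coincide as oriented hyperplanes. Fixing a reference equation $\boldsymbol{w}^T\boldsymbol{x} + b = 0$ for $l$, this forces $\boldsymbol{w}_k^T\boldsymbol{x} + b_k = c_k(\boldsymbol{w}^T\boldsymbol{x} + b)$ for constants $c_k > 0$, where the positivity of each $c_k$ encodes precisely that all $p$ units fire on the same side of $l$.

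Next I would compute the aggregate contribution of these units to the piecewise linear function realized by $\mathfrak{N}$. Applying positive homogeneity term by term gives
\[
\sum_{k=1}^{p}\lambda_k\,\sigma(-(\boldsymbol{w}_k^T\boldsymbol{x}+b_k)) = \sum_{k=1}^{p}\lambda_k c_k\,\sigma(-(\boldsymbol{w}^T\boldsymbol{x}+b)) = \lambda\,\sigma(-(\boldsymbol{w}^T\boldsymbol{x}+b)),
\]
where $\lambda = \sum_{k=1}^{p}\lambda_k c_k$. The right-hand side is exactly the output of a single negative unit at the knot $l$ with output weight $\lambda$, so the $p$ units are interchangeable with one unit in their effect on the function. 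Repeating the identical computation with $\sigma(\boldsymbol{w}_k^T\boldsymbol{x}+b_k)$ in place of $\sigma(-(\cdots))$ settles the positive case, and by the equivalence of strict partial orders in $\mathbb{R}^n$ and $\mathbb{R}$ (proposition 2) the argument is dimension-free.

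The hard part, though it is mild, is the sign bookkeeping that justifies $c_k > 0$. Two nonzero linear forms vanish on the same hyperplane iff one is a scalar multiple of the other, and the orientation—which half-space is labelled $l^+$—fixes the sign of that scalar. Because the hypothesis confines attention to units of a single type (all negative, or all positive), they share this orientation by the very definition of negative/positive unit in definition 11, which pins down $c_k > 0$ and legitimizes pulling the factors $c_k$ through $\sigma$. I would emphasize that this restriction is essential: were units of opposite orientation at the same knot allowed, the homogeneity factorization would no longer collapse the sum into one unit, so the statement is genuinely about like-oriented units, and this is the single point the proof must articulate carefully before the computation goes through.
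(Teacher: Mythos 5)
Your proof is correct and follows essentially the same route as the paper: the paper likewise observes that two like-oriented units at the same knot satisfy $\rho_1(\boldsymbol{x}) = \alpha\rho_2(\boldsymbol{x})$ with $\alpha > 0$, and collapses the weighted sum $\lambda_1\rho_1 + \lambda_2\rho_2 = (\alpha\lambda_1 + \lambda_2)\rho_2$ into a single unit. Your version merely makes the positive-homogeneity bookkeeping explicit and handles $p \ge 2$ units in one step (the appeal to proposition 2 is superfluous, since the algebraic identity is already dimension-free), but the substance is identical.
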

\begin{proof}
In fact, for two negative (positive) units $\rho_1(\boldsymbol{x}) = \sigma(\boldsymbol{w}_1^T\boldsymbol{x} + b_1)$ and $\rho_2(\boldsymbol{x}) = \sigma(\boldsymbol{w}_2^T\boldsymbol{x} + b_2)$ derived from the same knot, we have $\rho_1(\boldsymbol{x}) = \alpha\rho_2(\boldsymbol{x})$, where $\alpha$ is a positive real number. So $\lambda_1\rho_1(\boldsymbol{x}) + \lambda_2\rho_2(\boldsymbol{x}) = (\alpha\lambda_1 + \lambda_2)\rho_2(\boldsymbol{x})$, which is equivalent to one unit. Figure \ref{Fig.5} gives an example of equivalent units (red ones at knot $x_4$).
\end{proof}

\begin{thm}
Notations being as in theorem 3, except for the case that some linear function $s_1$ on $R_1$ can be produced by less than $n+1$ global units, any solution of a two-layer ReLU network $\mathfrak{N}$ for a continuous linear spline $\mathcal{S}(\boldsymbol{x}) \in \mathfrak{S}_n(\Delta; \mathfrak{U})$ must be in the form of either one- or two-sided bases of theorems 3 and 5, respectively.
\end{thm}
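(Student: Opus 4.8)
The plan is to read off the structure of an arbitrary solution $\mathfrak{N}$ directly from the locations where its output fails to be affine, and then to match each unit against the classification in Definitions 10 and 11. The central tool will be a \emph{kink-localization} argument: fix any $n-1$-dimensional hyperplane $l$ that meets the interior of $\mathfrak{U}$, pick a generic point $p\in l$, and cross $l$ transversally along a short segment. Every unit whose hyperplane differs from $l$ is affine in a neighborhood of $p$ and therefore contributes no gradient discontinuity there; hence the jump of the gradient of $\mathcal{S}$ across $l$ at $p$ equals the sum of the jumps contributed by exactly those units whose hyperplane coincides with $l$. Since $\mathcal{S}\in\mathfrak{S}_n(\Delta;\mathfrak{U})$ has gradient discontinuities precisely on the knots $l_1\prec\cdots\prec l_\zeta$, this pins down where the active units may sit.

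First I would treat the genuine knots. For each $l_\nu$ the prescribed jump is nonzero, so at least one unit with nonzero output weight must have hyperplane $l_\nu$; in particular the network cannot realize $\mathcal{S}$ without seating units on every knot. By Proposition 5 all co-located positive units collapse to a single effective positive unit (weight $\Lambda_\nu^+$) and all co-located negative units to a single effective negative unit (weight $\Lambda_\nu^-$), which is exactly the positive/negative/bidirectional alternative of Definition 11. Theorem 2 then guarantees that, once the equation of $l_\nu$ is fixed, the single parameter recovering $s_\nu$ from $s_{\nu-1}$ is unique, so the knot units are forced up to the $\Lambda^+/\Lambda^-$ split.

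Next I would dispose of the remaining units. A unit whose hyperplane does not meet the interior of $\mathfrak{U}$ is either identically zero on $\mathfrak{U}$ (hence redundant in the sense of Definition 12) when $\mathfrak{U}\subseteq l^0$, or affine throughout $\mathfrak{U}$ when $\mathfrak{U}\subseteq l^+$, i.e. a global unit. The delicate case is a hyperplane $l$ that cuts $\mathfrak{U}$ but is \emph{not} one of the $l_\nu$: kink-localization forces the net jump there to vanish, so $\Lambda^+ + \Lambda^- = 0$. If no negative unit is present this means $\Lambda^+ = 0$ and the unit is redundant; otherwise the identity $\sigma(z)-\sigma(-z)=z$ shows that the surviving positive/negative pair equals the pure affine function $\Lambda^+(\boldsymbol{w}^T\boldsymbol{x}+b)$, contributing only to the affine background and no new kink.

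Finally I would assemble the pieces. By Corollary 1 the output is automatically continuous, and the recurrence $s_\nu = s_{\nu-1} + \lambda_\nu\sigma(\boldsymbol{w}_\nu^T\boldsymbol{x}+b_\nu)$ supplied by Theorem 2 shows the whole solution is determined by $s_1$ together with the knot units already identified; the presence or absence of negative units (at knots or from disguised pairs) is what separates the one-sided realization of Theorem 3 from the two-sided realization of Theorem 5. The units active on $R_1$, namely the global units and the positive unit seated on $l_1$, must realize $s_1$, and by Lemma 3 this forces their linear-output matrix to have rank $n+1$, which is precisely the hypothesis built into Definitions 10 and 11. Here I expect the main obstacle, and the reason for the stated exclusion: the rank-$(n+1)$ requirement is only \emph{forced} when $s_1$ genuinely consumes $n+1$ independent units. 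If $s_1$ is degenerate enough to be produced by fewer than $n+1$ global units, the background can be arranged in ways that do not conform to the bases structure, so that case must be set aside. Outside it, the trichotomy above shows every active unit is a global unit or a positive/negative unit seated on a knot of $\mathcal{S}$, which is exactly a one- or two-sided bases realization.
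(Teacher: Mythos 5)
Your proposal reaches the paper's conclusion by the same overall decomposition --- classify every hidden unit as either a knot-generating (local) unit or a global unit, use Proposition 5 to collapse co-located units and the uniqueness part of Theorem 2 to pin down the knot weights, then reduce the assembled solution to the constructions of Theorems 3 and 5, setting aside the degenerate case where $s_1$ needs fewer than $n+1$ units. The substantive difference is how the classification is justified. The paper simply asserts it (``each unit $u_i$ \ldots must be either a global unit realizing the first linear function $s_1$ or a local unit yielding one of the knots $l_i$'s''), leaning on Axioms 1 and 2; you derive it by kink localization, matching the gradient-jump set of the network output against the kink set of $\mathcal{S}$, which by the order structure of equation 3.1 is exactly $\bigcup_{\nu} q_{\nu} \subset \bigcup_{\nu} l_{\nu}$. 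That is a genuine strengthening: it is the step that actually rules out kink-producing units seated on hyperplanes other than the $l_{\nu}$'s, and the paper offers no argument for it.

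Your sharper analysis also surfaces a configuration the paper's proof silently ignores: a positive/negative pair seated on a \emph{non-knot} hyperplane that crosses $\mathfrak{U}$, with output weights cancelling so that $\sigma(z)-\sigma(-z)=z$ makes the pair jointly affine. Such solutions genuinely exist --- add the pair to any one-sided solution and compensate through the global output weights via Lemma 3 --- yet neither unit of the pair is global in the sense of equation 4.66 (its hyperplane cuts $\mathfrak{U}$) nor seated on a knot, so the resulting solution is not literally of the form in Definitions 10 and 11. You handle this correctly in your trichotomy by absorbing the pair into the affine background, but your closing sentence (``every active unit is a global unit or a positive/negative unit seated on a knot'') contradicts that very case; it must instead be declared a trivial modification outside the scope of the classification, exactly as the paper does for equivalent and inactivated units. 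With that caveat stated explicitly (and with the standing assumption that adjacent pieces of $\mathcal{S}$ are distinct, so that every knot's prescribed jump is nonzero), your argument is complete and in fact tighter than the paper's own proof.
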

\begin{proof}
To a $\mathcal{S}(\boldsymbol{x})$ produced by $\mathfrak{N}$, each unit $u_i$ of the hidden layer of $\mathfrak{N}$ must be either a global unit realizing the first linear function $s_1$ or a local unit yielding one of the knots $l_i$'s. A local unit generates a knot in terms of either negative or positive form. Once we establish a strict partial order of $l_i$'s, due to axioms 1 and 2, a solution on the ordered regions must obey the principle of theorem 3 or 5, corresponding to the one- or two-sided bases, respectively. Note that even if the number of the global units is less than $n+1$, they can still form some linear functions to be $s_1$ and this is the special case mentioned in this theorem. We didn't consider the case of both equivalent units (proposition 5) and inactivated units because they are trivial and can be easily incorporated into our theoretical framework.
\end{proof}

\begin{cl}[Completeness of constructed solutions]
If the input is one-dimensional, to a continuous linear spline $\mathcal{S}(x) \in \mathfrak{S}_1(\Delta)$ of equation 2.6, except for the case that some $s_1$ on $I_1$ can be realized by less than two units, any solution of a two-layer ReLU network $\mathfrak{N}$ for $\mathcal{S}(x)$, including the one obtained by the back-propagation algorithm, is of the type of either one- or two-sided bases.
\end{cl}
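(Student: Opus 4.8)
The plan is to obtain this corollary as the specialization of theorem 6 to input dimensionality $n = 1$, rather than to argue it afresh. First I would note that the space $\mathfrak{S}_1(\Delta)$ of equation 2.6 is exactly the $n = 1$ instance of $\mathfrak{S}_n(\Delta; \mathfrak{U})$ of equation 4.64, and that, by proposition 2, the strict partial order $\prec$ underlying theorems 3, 5 and 6 collapses in one dimension to the ordinary ``$<$'' ordering of the knots $x_1 < x_2 < \cdots < x_{\zeta-1}$. Consequently every hypothesis invoked in theorem 6 is automatically available in the univariate setting, so its conclusion transfers verbatim.

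Next I would line up the exceptional clauses. Theorem 6 excludes only the case in which $s_1$ on $R_1$ is generated by fewer than $n+1$ global units; putting $n = 1$ gives $n+1 = 2$, which is precisely the ``fewer than two units'' exception stated here. The structural mechanism I would then invoke, inherited from theorem 6, is the exhaustive dichotomy on units: any hidden unit $\sigma(wx+b)$ with $w \neq 0$ fixes a single knot $x_j = -b/w$, and the sign of $w$ classifies it as a positive unit acting on $(x_j, +\infty)$ or a negative unit acting on $(-\infty, x_j)$; a unit acting in one direction over all of $[0,1]$ (or whose knot lies at or left of $0$) is global, and the rest are local. Once a strict partial order of the knots is fixed, axioms 1 and 2 force the realization into the one-sided form of theorem 3 when no negative local unit occurs, or into one of the two-sided forms of theorem 5 and definition 11 as soon as negative units appear. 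A back-propagation solution is simply one particular realization of $\mathcal{S}(x)$ by $\mathfrak{N}$, hence already subsumed by the ``any solution'' quantifier and requiring no separate argument.

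The hard part will not be any estimate but the completeness bookkeeping: confirming that the positive/negative, global/local classification is genuinely exhaustive, and that equivalent units (proposition 5) and inactivated units introduce no new solution type that could escape the one- or two-sided description. This exhaustiveness is exactly what theorem 6 supplies, so in the univariate case the task reduces to checking that no unit configuration is left unclassified and that the degenerate exception---where $s_1$ is produced by a single unit---is correctly isolated as the sole excluded scenario.
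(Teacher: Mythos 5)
Your proposal is correct and follows essentially the same route as the paper: the paper's own proof simply observes that in one dimension the knots are real numbers whose ``less than'' relation is the unique strict partial order, so theorem 6 applies directly and yields the conclusion, with the exceptional clause matching because $n+1=2$ when $n=1$. The additional bookkeeping you flag (exhaustiveness of the positive/negative and global/local classification, equivalent and inactivated units) is exactly what the paper delegates to theorem 6, so nothing further is needed.
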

\begin{proof}
In this case, to each solution of $\mathfrak{N}$ for $\mathcal{S}(x)$, the knots are real numbers and the ``less than'' relation of the knots is the unique strict partial order. By theorem 6, the conclusion follows.
\end{proof}

\section{Approximation over Multiple Strict Partial Orders}
We constructed an arbitrary piecewise linear function over a single strict partial order in section 4. However, universal set $U = [0, 1]^n$ cannot be easily covered by a single strict partial order with each of the ordered regions being arbitrarily small, and this problem directly affects the approximation error. Correspondingly, the training solution obtained by the back-propagation algorithm is usually much more complicated than the constructed one of section 4, as can be seen by some examples of section 8. That's why we make this extension to multiple strict partial orders. Notice that theorem 7 doesn't prove the universal approximation capability because there's a hypothesis to be solved in section 7.

\subsection{Preliminaries}
\begin{dfn}[Adjacent Regions]
Let $H$ be a set of $n-1$-dimensional hyperplanes of $\mathbb{R}^n$ and $R_i$'s for $i = 1, 2, \dots, \zeta$ be the regions formed by $H$. Two regions $R_{\nu}$ and $R_{\mu}$ for $1 \le \nu, \mu \le \zeta$ and $\nu \ne \mu$ are said to be adjacent, if the intersection $R_{\nu} \cap R_{\mu}$ is $n-1$-dimensional (i.e., $\dim(R_{\nu} \cap R_{\mu}) = n-1$) and is part of a hyperplane of $H$.
\end{dfn}

\noindent
\textbf{Example}. In Figure \ref{Fig.7}a, $R^{(1)}_1$ and $R^{(1)}_2$ are adjacent, while $R^{(1)}_1$ and $R^{(1)}_3$ as well as $R^{(1)}_1$ and $R^{(2)}_1$ are not.

\begin{dfn}[Space of piecewise linear functions over a partition]
Some notations being from definition 13, suppose that $\mathcal{R} = \bigcup_{i = 1}^{\zeta}R_i$ is the set of the regions of $U = [0, 1]^n$ formed by $H$ and that $U = \mathcal{R}$. If two linear functions $f(\boldsymbol{x})$ and $g(\boldsymbol{x})$ are continuous at some knot, we write $f \frown g$. Let
\begin{equation}
\begin{aligned}
\mathfrak{K}_n(H)  &:=  \{s: s(\boldsymbol{x})=s_i(\boldsymbol{x}) \in \mathcal{P}_2 \ \text{for} \ x \in R_i, \\& s_i \frown \mathscr{N}_i, \ i = 1, 2, \dots, \zeta\},
\end{aligned}
\end{equation}
where $\mathcal{P}_2$ is the set of linear functions on $\mathbb{R}^n$, and $\mathscr{N}_i = \{s_{n_{i\kappa}}(\boldsymbol{x}): \kappa = 1, 2, \dots, \phi_i, 1 \le n_{i\kappa} \le \zeta\}$ whose each element $s_{n_{i\kappa}}$ is a linear function on region $R_{n_{i\kappa}}$ adjacent to $R_i$ and is continuous with $s_i$ at the knot $k_i = R_i \cap R_{n_{i\kappa}}$. So $\mathfrak{K}_n(H)$ is the set of the continuous piecewise linear functions on $\mathcal{R}$ whose knots are from the hyperplanes of $H$; and we call $\mathfrak{K}_n(H)$ the space of continuous piecewise linear functions with respect to $H$.
\end{dfn}

\begin{prp}
Any function $g(\boldsymbol{x}): [0, 1]^n \to \mathbb{R}$ realized by a two-layer ReLU network $\mathfrak{N}$ satisfies $g(\boldsymbol{x}) \in \mathfrak{K}_n(H)$, where $H$ is the set of the hyperplanes of the units of the hidden layer of $\mathfrak{N}$.
\end{prp}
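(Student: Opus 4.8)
The plan is to verify the two defining properties of $\mathfrak{K}_n(H)$ directly from the network output of equation 4.1, namely that $g$ restricts to a linear function on every region of $U$ formed by $H$, and that these restrictions are mutually continuous across the knots they share.

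First I would fix an arbitrary region $R_i$ of $U$ formed by $H$, in the sense of definition 4 (so $R_i = U \cap R_i'$ for a region $R_i'$ of $\mathbb{R}^n$ as in definition 1). The key observation is that, because $R_i'$ is the closure of a connected component of $\mathbb{R}^n - \mathcal{H}$, it cannot straddle any hyperplane $\mathcal{L}_k \in H$: the whole of $R_i$ lies in either $\mathcal{L}_k^+$ or $\mathcal{L}_k^0$. By axioms 1 and 2, the output of the $k$th unit on $R_i$ is then either the affine function $\boldsymbol{w}_k^T\boldsymbol{x} + b_k$ (when $R_i \subset \mathcal{L}_k^+$) or the constant $0$ (when $R_i \subset \mathcal{L}_k^0$). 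Summing these over all units with the output weights $\lambda_k$ shows that $g|_{R_i}$ is a finite sum of affine functions and zeros, hence itself affine; that is, $g|_{R_i} = s_i \in \mathcal{P}_2$. Since the regions $R_i$ formed by $H$ cover $U$, this already supplies the piecewise-linear structure required by definition 14.

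Second I would supply the continuity condition $s_i \frown \mathscr{N}_i$. Here I can invoke corollary 1, which guarantees that any piecewise linear function realized by a two-layer ReLU network is continuous. Consequently, for any region $R_{n_{i\kappa}}$ adjacent to $R_i$ and separated from it by a knot $k_i = R_i \cap R_{n_{i\kappa}}$ lying in a hyperplane of $H$, the two affine pieces $s_i$ and $s_{n_{i\kappa}}$ agree on $k_i$, which is precisely $s_i \frown s_{n_{i\kappa}}$. Ranging over all adjacent pieces yields $s_i \frown \mathscr{N}_i$ for every $i$, and since all knots arise as shared boundaries of regions formed by $H$, every defining condition of $\mathfrak{K}_n(H)$ is met; hence $g \in \mathfrak{K}_n(H)$.

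The only genuinely delicate point is the first claim, that no region crosses a hyperplane of $H$; this is exactly where the careful region definition of definition 1 does its work, since a connected component of $\mathbb{R}^n - \mathcal{H}$ can contain interior points on only one side of each $\mathcal{L}_k$, and passing to the closure (and intersecting with $U$) introduces no such crossing. The remainder of the argument is essentially bookkeeping, with continuity handed to us ready-made by corollary 1.
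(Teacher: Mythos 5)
Your proof is correct and takes essentially the same route as the paper, whose entire proof is the one-line citation of corollary 1 and definition 14: piecewise linearity of $g$ on the regions formed by $H$, plus continuity across knots supplied by corollary 1. You simply make explicit the bookkeeping the paper leaves implicit --- namely that each region lies wholly in $\mathcal{L}_k^+$ or $\mathcal{L}_k^0$ for every $\mathcal{L}_k \in H$, so each unit's output restricts to an affine function or zero there and the sum in equation 4.1 is affine on each region.
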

\begin{proof}
This conclusion is by corollary 1 and definition 14.
\end{proof}

If we say that a function $f(\boldsymbol{x})$ for $\boldsymbol{x} \in \mathbb{R}^n$ is a  ``$C^1$ function'', it means that its partial derivative with respect to each entry of $\boldsymbol{x}$ is continuous; and in this case, $f(\delta(\boldsymbol{x}_0))$, where $\delta(\boldsymbol{x}_0)$ is a small enough neighborhood of any fixed $\boldsymbol{x}_0$, looks like a hyperplane or could approximate a hyperplane as precisely as possible, a property useful for local linear approximations.

\begin{figure}[!t]
\captionsetup{justification=centering}
\centering
\includegraphics[width=3.1in, trim = {2.4cm 1.7cm 2.1cm 1.4cm}, clip]{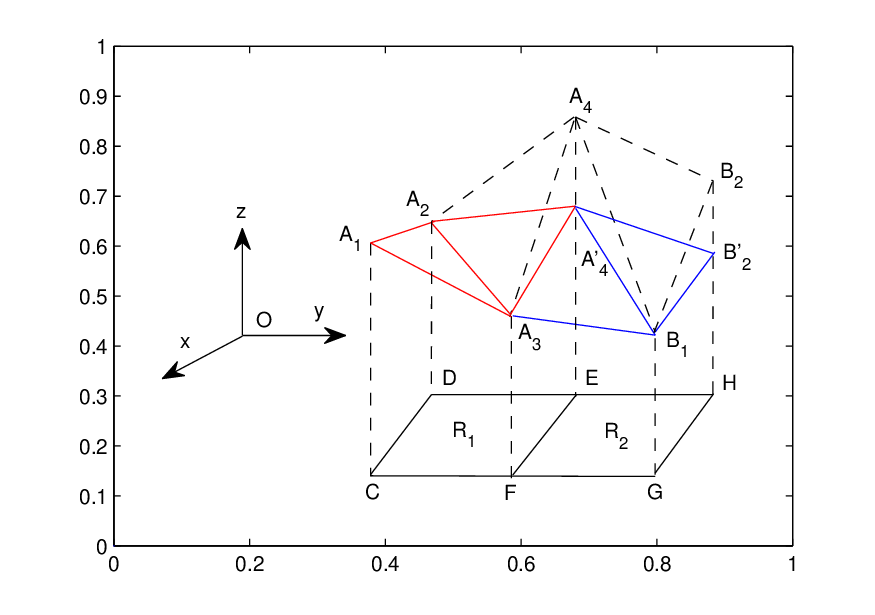}
\caption{Continuous linear pieces on regions.}
\label{Fig.6}
\end{figure}

\begin{lem}
Any $C^1$-function $f: U = [0, 1]^n \to \mathbb{R}$ for $n \ge 2$ can be approximated by a continuous piecewise linear function $\hat{f}(\boldsymbol{x}) \in \mathfrak{K}_n(H)$ of equation 6.1 with arbitrary precision, provided that $\max\{v_i: i = 1, 2, \dots, \zeta\}$ is sufficiently small, where $v_i$ is the volume of $R_i \in \mathcal{R}$.
\end{lem}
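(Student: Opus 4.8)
The plan is to reduce the multivariate approximation to local linearization on a fine mesh, exploiting that a $C^1$ function is well approximated by its tangent hyperplane on small regions, exactly as the text preceding the lemma indicates. Concretely, I would first fix $\epsilon > 0$ and partition $U = [0,1]^n$ into a simplicial mesh, i.e. a finite family of $n$-simplices $T_1, \dots, T_N$ with pairwise disjoint interiors whose union is $U$ and that meet face-to-face. Such a mesh is a special instance of a region arrangement: taking $H$ to be the finite set of hyperplanes carrying the $(n-1)$-dimensional faces of the simplices, each region $R_i$ formed by $H$ is a (possibly refined) piece of some simplex. The reason for using simplices rather than general polytopes is that an affine function on a simplex is uniquely determined by its values at the $n+1$ vertices, which is precisely what makes the interpolant both well defined and automatically continuous across shared faces.

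Next I would define $\hat f$ by setting, on each simplex $T$ with vertices $v_0, \dots, v_n$, the unique affine function with $\hat f(v_k) = f(v_k)$ for all $k$; in barycentric coordinates $\boldsymbol{x} = \sum_k \theta_k v_k$ this is $\hat f(\boldsymbol{x}) = \sum_k \theta_k f(v_k)$. Continuity is the bookkeeping step: two simplices sharing an $(n-1)$-face share exactly the $n$ vertices spanning that face, and the restriction of each affine piece to the face equals the affine interpolant of $f$ at those shared vertices, so the two pieces agree there. Hence $\hat f$ is globally continuous, is affine on each simplex and therefore affine on each region $R_i$ of the $H$-arrangement (a subset of a simplex), and it satisfies the continuity-at-knots condition $s_i \frown \mathscr{N}_i$ of definition 14 trivially since it is continuous everywhere. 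This places $\hat f \in \mathfrak{K}_n(H)$.

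The error estimate uses the $C^1$ hypothesis through uniform continuity of $\nabla f$ on the compact set $U$: let $\omega(t) = \sup\{\|\nabla f(\boldsymbol{x}) - \nabla f(\boldsymbol{y})\|: \|\boldsymbol{x} - \boldsymbol{y}\| \le t\}$, so $\omega(t) \to 0$ as $t \to 0^+$. For $\boldsymbol{x}$ in a simplex $T$ of diameter $h = \mathrm{diam}(T)$, writing $f(\boldsymbol{x}) - f(v_k) = \nabla f(\boldsymbol{x})^T (\boldsymbol{x} - v_k) + r_k$ with $|r_k| \le \omega(h)\,\|\boldsymbol{x} - v_k\| \le \omega(h)\,h$, and summing against the barycentric weights, the first-order term cancels because $\sum_k \theta_k (\boldsymbol{x} - v_k) = \boldsymbol{0}$. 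Since $f(\boldsymbol{x}) - \hat f(\boldsymbol{x}) = \sum_k \theta_k\big(f(\boldsymbol{x}) - f(v_k)\big)$, this yields the pointwise bound $|f(\boldsymbol{x}) - \hat f(\boldsymbol{x})| \le \sum_k \theta_k |r_k| \le \omega(h)\,h$. Writing $\bar h = \max_i \mathrm{diam}(T_i)$ and integrating over $U$ gives, since $\mathrm{vol}(U) = 1$,
\begin{equation}
\| f - \hat f \|_2 = \Big( \int_U |f - \hat f|^2 \,d\boldsymbol{x} \Big)^{1/2} \le \omega(\bar h)\,\bar h,
\end{equation}
which can be made smaller than $\epsilon$ by refining the mesh.

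The main obstacle is reconciling the hypothesis, stated in terms of the volumes $v_i$, with the estimate above, which is governed by the diameters $\mathrm{diam}(T_i)$; small volume alone does not force small diameter, as thin slivers show. Since the lemma is an existence statement in which I am free to choose the partition, I would resolve this by constructing a shape-regular (for example, uniform) simplicial mesh, for which $\bar h \to 0$ precisely as $\max_i v_i \to 0$, so that controlling the maximal volume indeed controls the diameters. A secondary point to verify carefully is that extending the simplex faces to full hyperplanes in $H$ may cut simplices into smaller regions $R_i$; this is harmless, because $\hat f$ remains affine on each such sub-region and each volume only decreases, so both membership in $\mathfrak{K}_n(H)$ and the smallness of $\max_i v_i$ are preserved.
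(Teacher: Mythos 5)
The crux is the quantifier on the partition. In the paper, $H$ and the regions $\mathcal{R}$ of Definition 14 are \emph{given} data: the lemma asserts that for an arbitrary arrangement $H$ of hyperplanes whose regions all have sufficiently small volume, one can find $\hat f \in \mathfrak{K}_n(H)$ --- a function linear on each region formed by that particular $H$ and continuous across its knots --- approximating $f$. This universal reading is forced by how the lemma is used downstream: in Theorem 7 it is invoked for the given arrangement $\mathcal{H}$ carrying the multiple strict partial orders, and in Theorem 10 (via Lemmas 9 and 10) it is invoked for the axis-parallel grid whose regions are hypercubes. Your proof instead reads the lemma existentially (``I am free to choose the partition'') and builds your own shape-regular simplicial mesh, then uses barycentric interpolation. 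That establishes a strictly weaker statement which cannot be substituted into those later proofs: vertex interpolation is simply unavailable when the prescribed regions are hypercubes or general polytopes, since a region with more than $n+1$ vertices generically admits no affine function matching $f$ at all of its vertices.

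As a consequence, the actual difficulty that the paper's proof confronts is bypassed entirely. The paper works on the given arrangement: it picks a seed region, defines a linear piece there from $n+1$ suitably chosen points of the graph of $f$, and then propagates to adjacent regions --- each neighbor's linear piece is pinned down by the shared $(n-1)$-dimensional facet together with one additional sample of $f$ --- recursively over all regions, with a consistency check (the ``boundary determination'' principle) for regions whose linear function is already forced by two non-parallel facets. Your construction has no counterpart of this propagation-and-consistency argument, and that argument is precisely the content that makes the lemma true on a prescribed arrangement. On its own terms your write-up is essentially sound (the simplicial mesh, the cancellation of the first-order term against the barycentric weights, and the $\omega(\bar h)\,\bar h$ bound are standard and correct), and your observation that small volume does not imply small diameter is a genuine subtlety; but your resolution of it --- choosing the mesh --- is exactly the move that the lemma, as stated with ``provided that'' and as applied in Theorems 7 and 10, does not permit.
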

\begin{proof}
The formal definitions of the associated terminology of polytopes, such as \textsl{facet} and \textsl{simplex}, can be found in \citet*{Grunbaum2003}.

\textbf{Case of input dimensionality $n = 2$}. We first prove the case of the two-dimensional input. If each region $R_i$ is a triangle, the construction of $\hat{f}(\boldsymbol{x})$ is trivial, since the three vertices of $R_i$ can determine a unique linear function on $R_i$ approximating $f(\boldsymbol{x})$. So the key point is of a polygon region with more than three vertices. We introduce a method with the aid of Figure \ref{Fig.6}.

In Figure \ref{Fig.6}, $R_1$ and $R_2$ are two adjacent regions as well as two quadrilaterals. Let $f(\boldsymbol{x})$ be a $C^1$-function defined on $R = R_1 \cup R_2$. $A_j$'s for $j = 1, 2, 3, 4$ are the four points of $f(\boldsymbol{x})$ corresponding to the four vertices of $R_1$, respectively, and are not coplanar or on a common plane; $B_1$ and $B_2$ along with $A_3$ and $A_4$ shared with $R_1$ are the function points of the vertices of $R_2$ and are also not coplanar. We want to construct a piecewise linear function $\hat{f}(\boldsymbol{x})$ on $R$ continuous at knot $EF = R_1 \cap R_2$ to approximate $f(\boldsymbol{x})$.

Select points $C$, $D$ and $F$ to form a triangle (i.e., a 2-simplex). Then the corresponding  three points $A_1 =(C,f(C))$, $A_2 = (D, f(D))$ and $A_3 = (F, f(F))$ yield a plane $A_1A_2A_3$ that intersects line $\bar{A}_4E$ (a notation from the proof of theorem 2) at point $A_4'$. Quadrilateral $A_1A_2A_4'A_3$ is of the linear function $s_1$ on $R_1$. Triangle $\Delta A_4'A_3B_1$ determines a plane intersecting $\bar{B}_2H$ at $B_2'$ and the quadrilateral $A_4'A_3B_1B_2'$ is of the linear function $s_2$ on $R_2$, which is continuous with $s_1$ at knot $EF$. This completes the construction of $\hat{f}(\boldsymbol{x})$.

To the approximation capability, for example, as the area of $R_1$ becomes smaller and smaller, the $C^1$-function $f(\boldsymbol{x})$ on $R_1$ will be more and more similar to a planar, such that it could be approximated by $A_1A_2A_4'A_3$ with arbitrary precision. This proves this lemma for this example.

If there are more than two regions $R_i$ for $i = 1, 2, \dots, \zeta$, we can choose any one of them, say, $R_{\nu}$ for $1 \le \nu \le \zeta$, to be like $R_{1}$ of Figure \ref{Fig.6} and construct a linear function on it by arbitrary three of its vertices. Then the linear functions on the adjacent regions of $R_{\nu}$ can be implemented similarly to that on $R_{2}$ of Figure \ref{Fig.6}. Each of the adjacent regions could be again regarded as a seed analogous to $R_{\nu}$ to further process the new regions adjacent to it. This procedure can be done recursively.

There's a case that should be specially dealt with. When the linear functions on two unparallel sides of a region have been implemented through its adjacent regions, the linear function on it is already determined and needs not be constructed by the above method; this principle is called ``boundary determination'' in this proof.

Thus, after each step of constructing or determining a function for a region $R_{\tau}$ with $1 \le \tau \le \zeta$ via the previous two ways, we should check whether a region adjacent to $R_{\tau}$ satisfies the boundary-determination condition to avoid the case that more than one pair of unparallel sides has been provided with linear functions. The recursive application of the above two procedures could lead to a piecewise linear function approximating $f(\boldsymbol{x})$ continuous at the lines dividing $U = [0, 1]^2$.

Note that the four vertical or horizontal lines that shape $U = [0, 1]^2$ should be considered as playing the same role in forming the regions of $\mathcal{R}$ as the ones of $H$ that divide $U$. Under this condition, each of $\mathcal{R}$ is either a triangle or polygon with at least three sides, including the marginal ones, such that the above proof is applicable to all the regions of $U$.

During the construction process, it is possible that the error introduced in one region may be propagated and accumulated in the subsequent steps. We solve this problem by the following method. To the first region $R_{\nu}$ processed, if
\begin{equation*}
\max_{x\in R_{\nu}}{|f(\boldsymbol{x})-p_{\nu}}(\boldsymbol{x})|>\varepsilon,
\end{equation*}
where $p_{\nu}(\boldsymbol{x})$ is the linear function constructed on $R_{\nu}$ and $\varepsilon$ is a positive real number that could be arbitrarily small, use additional lines to further divide $R_{\nu}$ until one of the newly obtained subregions is small enough to make the maximum difference between $f(\boldsymbol{x})$ and the constructed linear function as above (we call it the maximum error of the region) is less than $\varepsilon$; we update $R_{\nu}$ by this new region. Accordingly, the maximum error on the boundary of $R_{\nu}$ is also bounded by $\varepsilon$.

Then we should preserve this maximum error $\varepsilon$ for each region as follows. To the second region $R_{\tau}$ adjacent to $R_{\nu}$, the maximum error on $R_{\tau} \cap R_{\nu}$ is ensured by the previous step. If the maximum difference for the linear function $p_{\tau}(\boldsymbol{x})$ on $R_{\tau}$ is larger than $\varepsilon$, we partition $R_{\tau}$ into small regions until one of them adjacent to $R_{\nu}$ changes this situation, contributing to the updated version of $R_{\tau}$. Note that this operation may disturb the previously processed $R_{\nu}$. If the newly dividing of $R_{\tau}$ also partitions $R_{\nu}$, select one of the subregions as a seed and first construct the linear functions on $R_{\nu}$. Since $R_{\nu}$ is small enough for $\varepsilon$, so are its subregions. Then both $R_{\nu}$ and $R_{\tau}$ are bounded by $\varepsilon$ and the maximum error $\varepsilon$ is propagated from $R_{\nu}$ to $R_{\tau}$.

The above process can be repeatedly done. When the further partition of a region also divides the preceding regions, the construction must go back to the earliest step whose associated region is influenced by the new partition, while the the maximum error $\varepsilon$ is ensured for this revision since the original regions are already small enough. By these recursive operations, $\varepsilon$ can be propagated to new regions and be preserved in old ones simultaneously, until all of them fill $U=[0,1]^2$.

By this construction method, it's easy to prove the 2-norm distance $\|f(x)-\hat{f(x)}\|_2 < \varepsilon$, since
\begin{equation*}
\begin{aligned}
\|f(\boldsymbol{x})-\hat{f}(\boldsymbol{x}) \|_2^2 = \int_U|f(\boldsymbol{x})&-\hat{f}(\boldsymbol{x})|^2d\boldsymbol{x}=\sum_i\int_{R_i}|f(\boldsymbol{x})-p_i(\boldsymbol{x}))|^2d\boldsymbol{x} \\
&\le \varepsilon^2\sum_i\int_{R_i}d\boldsymbol{x} = \varepsilon^2.
\end{aligned}
\end{equation*}

Despite the error control given above, when each region is sufficiently small, the maximum error $\varepsilon$ can be automatically propagated through the regions without needing additional partition, in terms of the preceding way. So the general condition that `` each region is small enough'' is sufficient to guarantee a solution of this lemma, but which may not be the optimal one in the sense that some regions are possibly over partitioned into too small pieces.

\textbf{Case of input dimensionality $n \ge 3$}. When $n \ge 3$, taking the hyperplanes that form $U$ into consideration, each region $R_{i} \in \mathcal{R}$ is a polytope and has at least two facets with dimensionality $n-1$ (i.e., $n-1$-dimensional sides) that are part of the boundary of $R_i$. Select arbitrary one $R_{\nu} \in \mathcal{R}$ as the ``seed''. On one of the facets of $R_{\nu}$, denoted by $f_1$, choose $n$ points $p_{\kappa}$ for $\kappa = 1, 2, \dots, n$ satisfying that $\boldsymbol{e}_{\mu}$'s are linearly independent of each other, where $\boldsymbol{e}_{\mu} = p_{\mu+1} - p_1$ for $\mu = 1, 2, \dots, n-1$. Let $p_{n+1}$ be a point of any other facet with $p_{n+1} \notin l_1$, where $l_1$ is the $n-1$-dimensional hyperplane that $f_1$ lies on. Then $\boldsymbol{e}_{n} = p_{n+1} - p_1$ is linearly independent of $\boldsymbol{e}_{\mu}$'s. Thus, points $p_j$'s for $j = 1, 2, \dots, n+1$ could form an $n$-simplex $S_{\nu}$ (i.e., an $n$-dimensional hypertriangle). Correspondingly, the function values $f(p_{j})$'s can generate an $n$-dimensional linear approximation to $f(\boldsymbol{x})$ on $R_{\nu}$.

Notice that in order to grasp the main feature of $f(\boldsymbol{x})$ on $R_{\nu}$, the volume of $S_{\nu}$ should be as large as possible when choosing the $n+1$ points. However, when $R_{\nu}$ is small enough, $f(\boldsymbol{x})$ could be approximately linear to any desired accuracy, such that the different selection of $p_j$'s would have a limited influence.

Up to now, we have construct a linear function $s_{\nu}$ on $R_{\nu}$ approximating $f(\boldsymbol{x})$. To each of the adjacent regions of $R_{\nu}$, say, $R_{\mu}$, $\mathscr{L}_{\mu} = R_{\mu} \cap R_{\nu}$ is $n-1$-dimensional. The $n-1$-dimensional linear function on $\mathscr{L}_{\mu} \subset R_{\nu}$ has been implemented previously, which is part of $s_{\nu}$. So to determine an $n$-dimensional linear piece on $R_{\mu}$, only one additional point of $f(\boldsymbol{x})$ is required, which can be chosen by the method of $p_{n+1}$ for $R_{\nu}$ above. After all of the adjacent regions of $R_{\nu}$ having been processed as $R_{\mu}$, we regard $R_{\mu}$ as a new seed to further deal with the regions adjacent to it.

In a higher-dimensional input, the boundary-determination principle holds as well. A key point is that two $n-1$-dimensional hyperplanes $l_1$ and $l_2$ of $\mathbb{R}^{n+1}$ with $\mathscr{L} = l_1 \cap l_2 \ne \emptyset$ and $l _1 \ne l_2$ can from a unique $n$-dimensional hyperplane $\mathcal{L}$. In fact, we can construct a coordinate system $\mathcal{C} = (O, \{\boldsymbol{e}_{\nu}: \nu = 1, 2, \dots, n\})$ from $l_1$ and $l_2$ to span $\mathcal{L}$. Select an arbitrary point of $\mathscr{L}$ as the original point $O$, and $n-1$ linearly independent vectors $\boldsymbol{e}_{\mu}$'s for $\mu = 1, 2, \dots, n-1$ from $l_1$. Any vector of $l_2$ but not $l_1$ is chosen to be $\boldsymbol{e}_n$. Then $\mathcal{L}$ is uniquely determined by $\mathcal{C}$. This conclusion results in the boundary determination for a higher-dimensional input. So as the two-dimensional case, after a region $R_{\tau}$ being dealt with by either of the above two methods, check whether some one adjacent to $R_{\tau}$ satisfies the condition of boundary determination. Repeat the two procedures until all the $R_i$'s are processed.

When some region is not small enough, to avoid error propagation and accumulation, the maximum-error method as the two-dimensional case should be used. To this point, there's no essential difference between higher-dimensional input and two-dimensional input. If all the regions are sufficiently small, additional partition is not necessarily required. The approximation-error analysis is similar to the two-dimensional case. This completes the proof.
\end{proof}

\subsection{Multiple Strict Partial Orders}
\begin{dfn}[Initial region and initial linear function]
Under definition 3 of section 3.1, to the strict partial order $\mathscr{P} := l_1 \prec l_2 \prec \dots \prec l_{\zeta}$ and its ordered regions $\mathcal{R} = \{R_i: i =1, 2, \dots, \zeta\}$, there exists a region $R_0 \subset l_1^0$ satisfying $L = R_0 \cap R_1 \subset l_1$ and $\dim(L) = n-1$. A linear function on $R_1$ can be constructed from the one on $R_0$ through the unit of $l_1$. We call $R_0$ the initial region of $\mathscr{P}$ and the linear function on $R_0$ is called the initial one of those on $\mathcal{R}$.
\end{dfn}

\begin{thm}[Approximation over multiple strict partial orders]
Suppose that a set $\mathcal{H} = \{l_1, l_2, \dots, l_N\}$ of $n-1$-dimensional hyperplanes of $\mathbb{R}^n$ for $n \ge 2$ divides $U = [0, 1]^n$ into $M$ regions
\begin{equation}
\mathcal{R} = \{R_m: 1 \le  m \le M\},
\end{equation}
and that $\mathcal{H}$ can form $\psi$ strict partial orders on $U$ denoted by
\begin{equation}
\{\mathscr{P}_i = (H_i, \prec): i = 1, 2, \dots, \psi\},
\end{equation}
where
\begin{equation}
H_i = \{l^{(i)}_j: j = 1, 2, \dots, \psi_i\} \subset \mathcal{H}
\end{equation}
with $l^{(i)}_1 \prec l^{(i)}_2 \prec \dots \prec l^{(i)}_{\psi_i}$ and $\psi_i = |H_i|$, and where
\begin{equation}
H_{\nu} \cap H_{\mu} = \emptyset
\end{equation}
for $1 \le \nu, \mu \le \psi$ and $\nu \ne \mu$. Write
\begin{equation}
\mathcal{H} = \bigcup_{i = 1}^{\psi}H_i \cup \mathscr{H},
\end{equation}
where
\begin{equation}
|\mathscr{H}| \ge n+1
\end{equation}
and each $l \in \mathscr{H}$ satisfies $U \subset l^+$. Each $\mathscr{P}_i$ generates a set
\begin{equation}
\{R_j^{(i)}\} := \{R_j^{(i)}: j = 1, 2, \dots, \psi_i\} \subset \mathcal{R}
\end{equation}
of ordered regions, with $\{R_j^{(\nu)}\} \cap \{R_j^{(\mu)}\} = \emptyset$, where $\nu$ and $\mu$ are as in equation 6.5. Assume that the following four conditions are satisfied:

\begin{itemize}
\item[\rm{\Romannum{1}}.] The set of equation 6.2 can be expressed as
\begin{equation}
\mathcal{R} = \bigcup_{i = 1}^{\psi_i}\{R_j^{(i)}\} \cup \mathfrak{R}_{0} \cup \mathscr{R},
\end{equation}
where $\mathfrak{R}_{0}$ is one of the regions of $\mathcal{R}$ and $\mathscr{R}$ is a set of regions whose total volume could be arbitrarily small.

\item[\rm{\Romannum{2}}.] The strict partial orders $\mathscr{P}_i$'s can be arranged in an order
\begin{equation}
\mathscr{P}_{i_1}, \mathscr{P}_{i_2}, \dots, \mathscr{P}_{i_{\psi}},
\end{equation}
where $1 \le i_{\tau} \le \psi$ for $\tau = 1, 2, \dots, \psi$, such that
\begin{equation}
\mathfrak{R}_{0} \subseteq \bigcap_{k = 1}^{\psi_{i_1}}l^{(i_1)0}_{k}
\end{equation}
and
\begin{equation}
\{R_j^{(i_{\nu})}\} \cup \mathfrak{R}_{0} \subseteq \bigcap_{u = \nu+1}^{\psi}\bigcap_{k = 1}^{\psi_{i_\mu}}l^{(i_{\mu})0}_{k}
\end{equation}

for $1 \le \nu \le \psi-1$.

\item[\rm{\Romannum{3}}.] Let $R^{(i)}_0$ be the initial region of order $\mathscr{P}_i$ and suppose that
\begin{equation}
R^{(i)}_0 \subseteq \bigcap_{k = 1}^{\xi_i}l^{+}_{n_{ik}}
\end{equation}
for $1 \le n_{ik} \le N$, where $l_{n_{ik}}$'s for $k$ contained in $\mathcal{H}$ are all the hyperplanes whose positive-output regions contain $R^{(i)}_0$ and $N = |\mathcal{H}|$. So the initial linear function $s_0^{(i)}$ on $R^{(i)}_0$ is produced by the units of $l_{n_{ik}}$'s. Then all of the ordered regions of $\mathscr{P}_i$ satisfy
\begin{equation}
\{R_j^{(i)}\} \subseteq \bigcap_{k = 1}^{\xi_i}l^{+}_{n_{ik}},
\end{equation}
similar to $R^{(i)}_0$ of equation 6.13.

\item[\rm{\Romannum{4}}.] The implementation of the initial linear function $s_0^{(i)}$ on $R_0^{(i)}$ of $\mathscr{P}_i$ for all $i$ has three possible cases: (a) the first is for the region $\mathfrak{R}_{0}$ of equation 6.9 on which the linear function is determined by the units of $\mathscr{H}$ of equation 6.6, with the associated linear-output matrix nonsingular; (b) the second is that the initial region $R_0^{(i)}$ happens to be one of the ordered regions of some other $\mathscr{P}_{\nu}$ whose linear functions have already been constructed; (c) the third is that although $s_0^{(i)}$ is not exclusively designed by the above two methods, the first linear function $s_1^{(i)}$ on $R_1^{(i)}$ is continuous with $s_0^{(i)}$.
\end{itemize}
Let $\mathfrak{N}$ be a two-layer ReLU network whose units of the hidden layer is derived from the hyperplanes of $\mathcal{H}$. Then if the maximum volume of $R_m$'s of equation 6.2 is sufficiently small, any $C^1$ function $f : U \to \mathbb{R}$ can be approximated by $\mathfrak{N}$ with arbitrary precision, in terms of the implementation of a continuous piecewise linear function $\hat{f}(\boldsymbol{x}) \in \mathfrak{K}_n(\mathcal{H})$. Suppose that $\mathscr{R} = \emptyset$ of equation 6.9 and that to a certain approximation error $\epsilon$, the number of the linear pieces of $\hat{f}(\boldsymbol{x})$ required is $\zeta$; then the number of the units of the hidden layer of $\mathfrak{N}$ satisfies
\begin{equation}
\Theta \ge \zeta + n.
\end{equation}
\end{thm}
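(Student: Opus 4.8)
The plan is to split the argument into an approximation step and a realization step, reusing the single-order machinery of theorems 3 and 5 and letting the four conditions coordinate the several orders. First I would apply lemma 6 to replace the $C^1$ target $f$ by a continuous piecewise linear function $\hat{f} \in \mathfrak{K}_n(\mathcal{H})$ whose pieces live on the regions $R_m$ of equation 6.2; since the error in lemma 6 vanishes as $\max_m v_m \to 0$, the theorem reduces to realizing $\hat{f}$ exactly by the network $\mathfrak{N}$ built from $\mathcal{H}$. All subsequent effort concerns this realization.

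For the realization I would bootstrap from the global units $\mathscr{H}$. Since $U \subset l^+$ for every $l \in \mathscr{H}$, $|\mathscr{H}| \ge n+1$, and the associated linear-output matrix is nonsingular, lemma 3 lets these units produce any prescribed linear piece on $\mathfrak{R}_0$ (condition IV(a)), and they stay active on every region of $U$. I would then process the orders $\mathscr{P}_{i_1}, \dots, \mathscr{P}_{i_\psi}$ in the sequence furnished by condition II. Inside a single order $\mathscr{P}_{i_\tau}$, once its initial piece $s_0^{(i_\tau)}$ on $R_0^{(i_\tau)}$ is fixed---condition IV supplies the three admissible ways of doing so, namely inheriting $\mathfrak{R}_0$, reusing an ordered region of an already-processed order, or matching $s_1^{(i_\tau)}$ to $s_0^{(i_\tau)}$ by continuity---theorem 3 applies verbatim: the recurrence of equation 4.68 builds each $s_j^{(i_\tau)}$ from its predecessor by the single unit of $l^{(i_\tau)}_j$, while condition III (equations 6.13 and 6.14) keeps every ordered region of $\mathscr{P}_{i_\tau}$ inside the positive half-spaces of the hyperplanes generating its initial function, so the construction stays valid.

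The delicate point, which I expect to be the main obstacle, is proving that the orders do not interfere: the pieces fixed within one order must not be corrupted when the units of a later order are switched on. This is exactly where condition II is decisive. Equations 6.11 and 6.12 place $\mathfrak{R}_0$ and every already-processed family $\{R_j^{(i_\nu)}\}$ inside the zero-output half-spaces of all the hyperplanes of the orders not yet treated; hence the units of $\mathscr{P}_{i_{\tau+1}}, \dots, \mathscr{P}_{i_\psi}$ contribute exactly zero on the regions handled by $\mathscr{P}_{i_1}, \dots, \mathscr{P}_{i_\tau}$, leaving their linear functions untouched, just as the one-sided bases of a single order act in one direction only. Consistency at each shared knot is guaranteed by the continuity of theorem 2, and corollary 2 allows the various local representations to be reconciled; together these confirm that the network output equals $\hat{f}$ on all of $U$ up to the negligible set $\mathscr{R}$.

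Finally I would count units. Each hyperplane of $\mathcal{H}$ gives at least one hidden unit, and by equation 6.6, $\mathcal{H} = \bigcup_i H_i \cup \mathscr{H}$ with $\sum_i |H_i| = \sum_i \psi_i$ and $|\mathscr{H}| \ge n+1$ from equation 6.7. When $\mathscr{R} = \emptyset$, the decomposition 6.9 gives $\zeta = \sum_i \psi_i + 1$, so $\sum_i \psi_i = \zeta - 1$ and therefore $\Theta \ge (\zeta-1) + (n+1) = \zeta + n$, which is inequality 6.15; the arbitrary-precision approximation claim then follows by letting $\max_m v_m \to 0$ in lemma 6.
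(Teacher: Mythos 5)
Your proposal is correct and follows essentially the same route as the paper's proof: lemma 6 for the piecewise-linear approximation, lemma 3 with the universal global units $\mathscr{H}$ for the initial piece on $\mathfrak{R}_0$, sequential processing of the orders per condition \Romannum{2} with theorem 3 (equation 4.68) inside each order, conditions \Romannum{3} and \Romannum{4} handling the initial functions and cross-order influences, and neglecting $\mathscr{R}$ by its vanishing volume. Your unit count, phrased through hyperplanes ($\sum_i\psi_i=\zeta-1$ order hyperplanes plus $|\mathscr{H}|\ge n+1$), is equivalent to the paper's region-based count of $n+1+\zeta-1=\zeta+n$.
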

\begin{proof}
First, by lemma 6, there exists a continuous piecewise linear function $\hat{f}(\boldsymbol{x}) \in \mathfrak{K}_n(\mathcal{H})$ approximating $f(\boldsymbol{x})$ with arbitrary precision. Then we show how to implement $\hat{f}(\boldsymbol{x})$ via $\mathfrak{N}$. The construction is decomposed into $\psi$ steps and must be in the order $\mathscr{P}_{i_1}, \mathscr{P}_{i_2}, \dots, \mathscr{P}_{i_{\psi}}$ of equation 6.10 of condition \Romannum{2}.

Equation 6.10 of condition \Romannum{2} ensures that the operation on $\mathscr{P}_{i_{\mu}}$ for $2 \le \mu \le \psi$ cannot influence the accomplished results of $\mathscr{P}_{i_{\kappa}}$ for $\kappa < \mu$. To each strict partial order $\mathscr{P}_i$, condition \Romannum{3} is necessary for the recursive production of the linear functions on $\{R_j^{(i)}\}$ through equation 4.68, by preserving the influences of $l^{+}_{n_{ik}}$'s of equation 6.14 on the initial region $R_0^{(i)}$ for $\{R_j^{(i)}\}$. By condition \Romannum{4}, the initial linear function of each $\mathscr{P}_i$ has been provided, so the piecewise linear function desired on $\{R_j^{(i)}\}$ can be realized by theorem 3.

Note that in the order of equation 6.10, the influence (if any) of $\mathscr{P}_{i_{\nu}}$ on $\mathscr{P}_{i_{\mu}}$ for $1 \le \nu < \mu \le \psi$ is embedded in the initial linear function $s_0^{(\mu)}$ of $\mathscr{P}_{i_{\mu}}$ according to condition \Romannum{3} and thus can be resolved. Pay attention to the specialty of $\mathfrak{R}_{0}$ of equation 6.9. Because the units of $\mathscr{H}$ could influence the whole $U$, they can only be used for one time and only one region of $\mathcal{R}$ of equation 6.2 can be chosen as $\mathfrak{R}_{0}$.

The regions of $\mathscr{R}$ of equation 6.9 are not ordered by $\mathscr{P}_i$'s and the linear functions on them cannot be designed. However, by condition \Romannum{1}, their total volume could be arbitrarily small, such that their influence can be ignored to any degree of accuracy, since smaller volume of $\mathscr{R}$ results in less contribution to the distance between $\hat{f}(\boldsymbol{x})$ and the function generated by $\mathfrak{N}$.

To the number of the units of the hidden layer of $\mathscr{R}$ required, in the case $\mathscr{R} = \emptyset$ of equation 6.9, all the regions of $\mathcal{R}-\mathfrak{R}_{0}$ are ordered, including the initial ones of the strict partial orders $\mathscr{P}_i$'s. So each region of $U$ except for $\mathfrak{R}_{0}$ needs only one unit to shape its linear function. The linear function on $\mathfrak{R}_{0}$ should use at least $n+1$ units by lemma 3. Thus, the minimum number of the units needed is $n+1+\zeta-1 = \zeta+n$, which proves the inequality 6.15.
\end{proof}

\begin{rmk-2}
Compared to theorem 1 for the one-dimensional input, the case of the $n$-dimensional input for $n \ge 2$ is much more complex owning to multi-directions, such that a single strict partial order with sufficiently small ordered regions cannot be easily constructed to cover the whole $U = [0, 1]^n$, for which multiple strict partial orders were introduced.
\end{rmk-2}

\begin{rmk-2}
Corollary 2 of multiple representations of a linear piece plays a role in the function construction of this theorem, through which the output weight of a unit can be obtained based on the local geometric information of the hyperplanes, regardless of the disturbance of some other ones.
\end{rmk-2}

\begin{figure}[!t]
\captionsetup{justification=centering}
\centering
\includegraphics[width=2.8in, trim = {2.8cm 1.5cm 1.7cm 1.0cm}, clip]{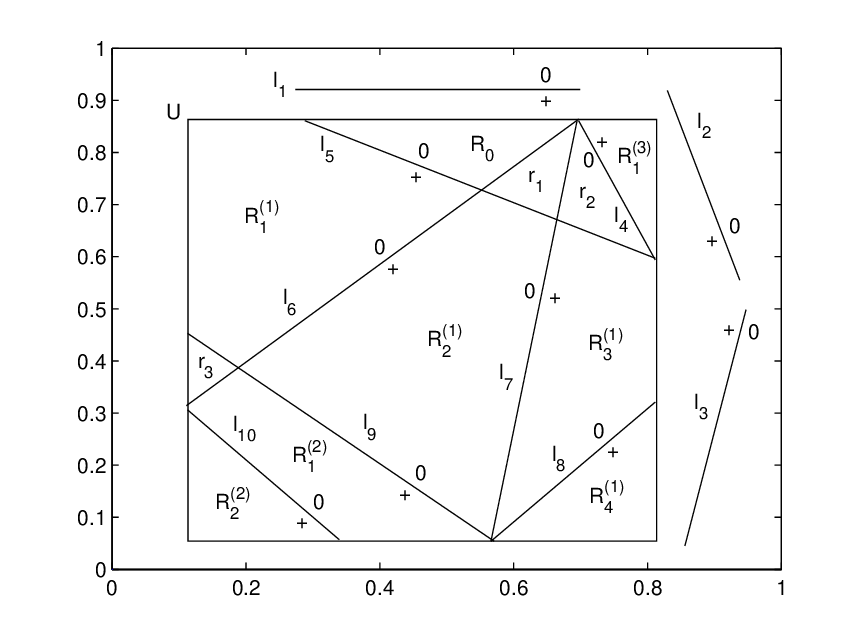}
\caption{Multiple strict partial orders.}
\label{Fig.7}
\end{figure}

\begin{dfn}[Universal global unit (hyperplane)]
Any one of the units (hyperplanes) of $\mathscr{H}$ of equation 6.6 that could influence the whole universal set $U$ in terms of nonzero output is called a \textsl{universal global unit} (\textsl{hyperplane}) of $U$.
\end{dfn}

\noindent
\textbf{Example}. Figure \ref{Fig.7} gives an example of theorem 7. There are three local strict partial orders, including $\mathscr{P}_1 = l_5 \prec l_6 \prec l_7 \prec l_8$, $\mathscr{P}_2 = l_9 \prec l_{10}$, and $\mathscr{P}_3 = l_4 \prec l_{\infty}$, which are manifested by the superscripts of the ordered regions (e.g., $R^{(1)}_2$ for $\mathscr{P}_1$). The set $\mathscr{H}$ of equation 6.6 of the universal global hyperplanes is $\mathscr{H} = \{l_1, l_2, l_3\}$. The order of condition \Romannum{2} could be either $\mathscr{P}_1, \mathscr{P}_2, \mathscr{P}_3$ or $\mathscr{P}_1, \mathscr{P}_3, \mathscr{P}_2$, both of which satisfy equations 6.11 and 6.12 ($R_0$ of this figure corresponding to $\mathfrak{R}_0$ of equation 6.9).

The initial region of $\mathscr{P}_1$ is $R^{(1)}_0 = R_0$ whose linear function is implemented by the units of $\mathscr{H}$ via lemma 3, while that of $\mathscr{P}_2$ is $R^{(2)}_0 = R^{(1)}_2$ whose linear function has been provided by $\mathscr{P}_1$. The $R^{(3)}_0 = r_2$ of $\mathscr{P}_3$ is the third case of condition \Romannum{4} and its linear function is assumed to be continuous with the one on $R^{(3)}_1$.

The function construction must be in the order $\mathscr{P}_1, \mathscr{P}_2, \mathscr{P}_3$ or $\mathscr{P}_1, \mathscr{P}_3, \mathscr{P}_2$. Using the first one, for example, because condition \Romannum{2} is satisfied, the operation of greater $i$ for $\mathscr{P}_i$ with $i = 1, 2, 3$ cannot affect the accomplished results of less $i$. The influence (if any) of less $i$ on greater $i$ can be embedded in the initial function of the latter via condition \Romannum{3}, as mentioned in the proof of theorem 7. To each isolated $\mathscr{P}_i$, the method of equation 4.68 is used to generate the desired piecewise linear function, on the basis of the initial function that has already been provided.

In this example of Figure \ref{Fig.7}, the set $\mathscr{R}$ of equation 6.9 is $\mathscr{R} = \{r_1, r_2, r_3\}$, on which the linear functions are not defined by theorem 7. We assumed that the total area (volume) of $\mathscr{R}$ could be arbitrarily small in theorem 7 for function approximations, and the region dividing of Figure \ref{Fig.7} can be further done based on the existing result to make the area of $\mathscr{R}$ smaller.

\subsection{Two-Sided Bases}

\begin{dfn}[Order tree]
In equation 6.9, if $\mathscr{R} = \emptyset$, all the initial linear functions of $\mathscr{P}_{i}$'s of equation 6.3 are directly or indirectly derived from the one $\mathfrak{S}(\boldsymbol{x})$ on $\mathfrak{R}_{0}$ of equation 6.9. More specifically, each initial linear region $R_0^{(i)}$ of $\mathscr{P}_i$ is either $\mathfrak{R}_{0}$ or an ordered region of some $\mathscr{P}_j$ for $1 \le j < i$. To the second case, the initial region $R_0^{(j)}$ of $\mathscr{P}_j$ can be traced back to $\mathfrak{R}_{0}$ or some ordered region of $\mathscr{P}_k$ for $k < j$; this tracing procedure can be done recursively until $\mathfrak{R}_{0}$ is reached. We then call the set $\{\mathscr{P}_i: i= 1, 2, \dots, \psi\}$ an \textsl{order tree} $\mathcal{T}$ and $\mathfrak{S}(\boldsymbol{x})$ (or $\mathfrak{R}_{0}$) is called the \textsl{initial linear function} (or \textsl{initial region}) of $\mathcal{T}$.
\end{dfn}

\noindent
\textbf{Example}. In Figure \ref{Fig.7}, there are two order trees $\{\mathscr{P}_1, \mathscr{P}_2\}$ and $\{\mathscr{P}_3\}$.

\begin{dfn}[Negative form of hyperplanes]
Given an $n-1$-dimensional hyperplane $\mathcal{L}$ of $\mathbb{R}^n$ whose equation is $\boldsymbol{w}^T\boldsymbol{x} + b =0$, let $-\mathcal{L}$ be the same hyperplane but with the equation changed to be $-\boldsymbol{w}^T\boldsymbol{x} - b =0$ whose associated ReLU is $\sigma(-\boldsymbol{w}^T\boldsymbol{x} - b)$ instead of the original $\sigma(\boldsymbol{w}^T\boldsymbol{x} + b)$. We call $-\mathcal{L}$ the negative form of $\mathcal{L}$.
\end{dfn}

\begin{lem}
Notations being from theorem 7, to each strict partial order $\mathscr{P}_{i_{\kappa}}= (H_{i_{\kappa}}, \prec)$ for $\kappa = 1, 2, \dots, \psi$ of equation 6.10, let
\begin{equation}
l^{(i_{\kappa})}_{\mu} \in H_{i_{\kappa}}
\end{equation}
be an arbitrary hyperplane of $\mathscr{P}_{i_{\kappa}}$
and suppose that
\begin{equation}
\mathscr{R} = \emptyset,
\end{equation}
where $\mathscr{R}$ is from equation 6.9. Then the units of $\mathscr{P}_{i_{\kappa}}$ can be modified to be of two-sided bases by changing $l^{(i_{\kappa})}_{\mu}$ into $-l^{(i_{\kappa})}_{\mu}$ or adding $-l^{(i_{\kappa})}_{\mu}$ at the knot $l^{(i_{\kappa})}_{\mu}$, without influencing the conclusion of theorem 7. The number of $l^{(i_{\kappa})}_{\mu}$'s as well as that of $\mathscr{P}_{i_{\kappa}}$'s for equation $6.16$ could be more than one.
\end{lem}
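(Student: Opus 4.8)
The plan is to reduce both permitted modifications to the single ReLU identity $\sigma(-t)=\sigma(t)-t$ and then to absorb the resulting discrepancy into the universal global units of $\mathscr{H}$, so that the function realized by $\mathfrak{N}$ is left literally unchanged. Throughout I write $t_\mu=\boldsymbol{w}_\mu^T\boldsymbol{x}+b_\mu$ for the affine form of $l^{(i_\kappa)}_\mu$. Since $\mathscr{R}=\emptyset$ by equation 6.17, the hypotheses of definition 17 hold, so $\{\mathscr{P}_i\}$ is an order tree with initial region $\mathfrak{R}_0$; by condition IV(a) of theorem 7 the linear function on $\mathfrak{R}_0$ is produced by the units of $\mathscr{H}$, which number at least $n+1$ (equation 6.7) and whose linear-output matrix is nonsingular. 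Because $U\subset l^+$ for every $l\in\mathscr{H}$, these units are nonzero on all of $U$, so by lemma 3 they collectively realize an arbitrary linear function on $U$; re-solving their output-weight system therefore lets me add any prescribed global linear term to the output of $\mathfrak{N}$.

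First I treat the substituted modification $l^{(i_\kappa)}_\mu\mapsto -l^{(i_\kappa)}_\mu$. Keeping the output weight $\lambda_\mu$ and replacing $\sigma(t_\mu)$ by $\sigma(-t_\mu)$ changes this unit's contribution by $\lambda_\mu\sigma(-t_\mu)-\lambda_\mu\sigma(t_\mu)=-\lambda_\mu t_\mu$, a global linear function; the slope jump across the knot is unchanged, so the kink structure of the spline is preserved and only the linear term $-\lambda_\mu t_\mu$ is introduced. I cancel it by adding $+\lambda_\mu t_\mu$ to the output of $\mathscr{H}$ as above, after which the total output of $\mathfrak{N}$ is exactly the original $\hat{f}(\boldsymbol{x})$, so every hypothesis and the conclusion of theorem 7 persist verbatim; within $\mathscr{P}_{i_\kappa}$ the remaining pieces are governed by the two-sided recurrence of theorem 5. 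The point that needs care is that a negative unit is nonzero on the whole half-space $l_\mu^0$, which by equation 6.12 contains all ordered regions of the orders preceding $\mathscr{P}_{i_\kappa}$ together with $\mathfrak{R}_0$; but on each such region its spurious contribution is precisely the linear term cancelled by $\mathscr{H}$, so no previously fixed piece is corrupted.

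For the added modification I introduce a negative unit $\sigma(-t_\mu)$ of weight $\lambda'$ at the knot and set the positive unit's weight to $\lambda_\mu-\lambda'$, forming a bidirectional knot in the sense of definition 11. The combined contribution is $(\lambda_\mu-\lambda')\sigma(t_\mu)+\lambda'\sigma(-t_\mu)=\lambda_\mu\sigma(t_\mu)-\lambda' t_\mu$, i.e. the original positive contribution minus the global linear term $\lambda' t_\mu$; the slope jump remains $\lambda_\mu\boldsymbol{w}_\mu$ while $\lambda'$ stays free. Adding $+\lambda' t_\mu$ through $\mathscr{H}$ restores $\hat{f}(\boldsymbol{x})$ exactly, so the negative unit supplies an extra degree of freedom (as in case II of theorem 4) without altering the realized function. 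Proposition 5 disposes of the case of several negative or positive units at one knot, which collapse to one.

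Finally, for several hyperplanes in one or several orders I sum the discrepancies: each modified $l^{(i_\kappa)}_\mu$ contributes a global linear term of the form $\pm\lambda_\mu t_\mu$ (substituted) or $-\lambda' t_\mu$ (added), and the total is again a single linear function on $U$, which the $\ge n+1$ universal global units realize in one re-solution of their weight system. Hence arbitrarily many $l^{(i_\kappa)}_\mu$'s across arbitrarily many $\mathscr{P}_{i_\kappa}$'s may be modified simultaneously, which is the last clause of the statement. The main obstacle I anticipate is exactly the half-space spillover of negative units into regions of other orders; its resolution rests on the ReLU identity turning every such spillover into a global linear function, on equation 6.12 placing the preceding orders' regions on the zero side of the modified knot, and on corollary 2 (cf. the remark following theorem 7), which guarantees that the weights determined later depend only on local geometric data at their own knots and are therefore insensitive to the re-expression performed here.
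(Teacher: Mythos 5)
Your proposal is correct, and its core idea coincides with the paper's: the negative unit's activity spills onto $l_\mu^{0}$ (hence onto $\mathfrak{R}_0$ and the earlier orders' regions, by condition \Romannum{2}), and this spillover is absorbed into the linear function supplied by the universal global units $\mathscr{H}$. Where you differ is in the mechanism. The paper argues qualitatively within its recursive order-tree construction: it splits into the cases $\kappa=1$ versus $\kappa\neq 1$ (initial function equal to $\mathfrak{S}(\boldsymbol{x})$ or inherited from an intermediate order's region), says the impact of $-l^{(i_\kappa)}_\mu$ ``can be embedded in the expression of $\mathfrak{S}(\boldsymbol{x})$,'' and then appeals to the two-sided-bases principles of theorems 4 and 5 to conclude the construction still goes through, with the multi-hyperplane/multi-order case dismissed as trivial. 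You instead make the embedding exact and uniform via the pointwise identity $\sigma(-t)=\sigma(t)-t$: every substitution or addition perturbs the network output by a globally \emph{linear} function $\pm\lambda t_\mu$, and since the linear-output matrix of $\mathscr{H}$ is nonsingular (condition \Romannum{4}(a) plus lemma 3, with $U\subset l^{+}$ for all $l\in\mathscr{H}$), one re-solution of the global weight system cancels the sum of all such perturbations at once, leaving the realized function on $U$ literally unchanged. This buys you three things the paper's argument only gestures at: no case analysis on where $\mathscr{P}_{i_\kappa}$ sits in the order tree, an honest proof (rather than an assertion) that arbitrarily many modifications across arbitrarily many orders are admissible, and a verifiable invariance statement (identical output on $U$) in place of the informal ``without affecting the function construction.'' What the paper's route buys in exchange is continuity with its own framework: by phrasing everything through theorems 4 and 5 it exhibits the modified units as genuine two-sided bases inside the recursive construction, which is the structural picture later reused in theorem 8 and in the interpretation of training solutions.
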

\begin{proof}
Under the condition of equation 6.17, there exists a unique order tree and the initial linear function of each strict partial order of equation 6.10 is either $\mathfrak{S}(\boldsymbol{x})$ on $\mathfrak{R}_0$ or an accomplished one on some ordered region.

By the definition of strict partial orders and condition \Romannum{2} of theorem 7, if $l^{(i_{\kappa})}_{\mu}$ in equation 6.16 is substituted with $-l^{(i_{\kappa})}_{\mu}$, it will influence the linear functions on $\mathcal{R}_1:= \{R_j^{(i_{\kappa})}\}$ for $1 \le j \le \mu-1$ of $\mathscr{P}_{i_{\kappa}}$ when $\mu \ge 2$, on $\mathcal{R}_2 := \{R_j^{(i_{\nu})}\}$ for $1 \le \nu \le \kappa -1$ of other strict partial orders when $\kappa \ge 2$, and on $\mathfrak{R}_0$. If $\kappa = 1$ or the initial linear function $s^{(i_{\kappa})}_0$ of $\mathscr{P}_{i_{\kappa}}$ is the one $\mathfrak{S}(\boldsymbol{x})$, the impact of $-l^{(i_{\kappa})}_{\mu}$ can be embedded in the expression of $\mathfrak{S}(\boldsymbol{x})$, without affecting the function construction of $\mathscr{P}_{i_{\kappa}}$, according to the principle of the substituted two-sided bases of theorems 4 and 5. The remaining strict partial orders are not influenced due to the invariance of the piecewise linear function of $\mathscr{P}_{i_{\kappa}}$.

If $\kappa \ne 1$ or the initial linear function $s^{(i_{\kappa})}_0 \ne \mathfrak{S}(\boldsymbol{x})$, there exist some strict partial orders between $\mathscr{P}_{i_1}$ and $\mathscr{P}_{i_{\kappa}}$ in equation 6.10, which are collectively denoted by the set $\mathcal{P}$. We also embed the influence of $-l^{(i_{\kappa})}_{\mu}$ in $\mathfrak{S}(\boldsymbol{x})$ such that the linear functions of $\mathcal{P}$ are not affected. Then the initial linear function $s^{(i_{\kappa})}_0$ of $\mathscr{P}_{i_{\kappa}}$ derived from some ordered region of $\mathcal{P}$ could remain invariant, though disturbed by $-l^{(i_{\kappa})}_{\mu}$; this again leads to the case of the two-sided bases of theorems 4 and 5. So the piecewise linear function of $\mathscr{P}_{i_{\kappa}}$ can be constructed, by which those of $\mathscr{P}_{i_{\tau}}$'s for $\tau > \kappa$ would not be influenced.

The general case for more than one $l^{(i_{\kappa})}_{\mu}$ or $\mathscr{P}_{i_{\kappa}}$ of equation 6.16 is trivial under the discussion above, and the case of added two-sided bases can be similarly dealt with based on theorems 4 and 5.
\end{proof}

\begin{thm}[Two-sided bases for multiple strict partial orders]
Given an order tree $\mathcal{T}$, especially with condition \Romannum{2} of theorem 7 being satisfied, any one of its hyperplanes could be changed into its negative form to replace the original positive unit or add a new negative unit, such that two-side bases could be formed, without influencing the construction of the piecewise linear function of $\mathcal{T}$.
\end{thm}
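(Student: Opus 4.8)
The plan is to obtain Theorem 8 directly from lemma 7, using the order-tree structure of definition 18 to organize the conversion of the hyperplanes. First I would recall that, since $\mathcal{T}$ is an order tree, definition 18 forces $\mathscr{R} = \emptyset$, so that every initial linear function of its constituent strict partial orders $\mathscr{P}_i$ is derived, directly or recursively, from the root initial linear function $\mathfrak{S}(\boldsymbol{x})$ on $\mathfrak{R}_0$. In the principal situation, case (a) of condition \Romannum{4} of theorem 7, $\mathfrak{S}(\boldsymbol{x})$ is realized by the universal global units $\mathscr{H}$ of equation 6.6, whose linear-output matrix on $\mathfrak{R}_0$ is nonsingular of rank $n+1$ by lemma 3. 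This full rank is the reservoir of freedom the whole construction draws on.

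Next I would treat the conversion of a single hyperplane $l^{(i_{\kappa})}_{\mu}$ of $\mathcal{T}$ exactly as in lemma 7: replacing it with its negative form $-l^{(i_{\kappa})}_{\mu}$, or appending $-l^{(i_{\kappa})}_{\mu}$ at the same knot, disturbs only the linear pieces on the regions $\{R_j^{(i_{\kappa})}\}$ for $1 \le j \le \mu-1$, the ordered regions of the strict partial orders preceding $\mathscr{P}_{i_{\kappa}}$ in the order of equation 6.10, and $\mathfrak{R}_0$ itself. Because a negative unit contributes a single linear function on the closure of these regions, that disturbance can be absorbed into the realization of $\mathfrak{S}(\boldsymbol{x})$: the universal global units are made to produce $\mathfrak{S}(\boldsymbol{x})$ minus the disturbance, so the effective initial function, and hence every downstream linear piece of $\mathcal{T}$, is left unchanged. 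This is precisely the substituted (respectively added) two-sided-bases mechanism established in theorems 4 and 5.

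For the full statement, converting any selection of hyperplanes across the whole tree, I would appeal to the linearity of the disturbances. Each converted hyperplane contributes an additive linear correction at $\mathfrak{R}_0$, so the cumulative correction is again one linear function; since the linear-output matrix of $\mathfrak{R}_0$ with respect to $\mathscr{H}$ still has rank $n+1$, the universal global units can realize $\mathfrak{S}(\boldsymbol{x})$ minus this aggregate correction in a single pass. Condition \Romannum{2} of theorem 7 supplies a consistent processing order $\mathscr{P}_{i_1}, \dots, \mathscr{P}_{i_{\psi}}$, so that modifying a hyperplane in a later $\mathscr{P}_{i_{\kappa}}$ cannot retroactively corrupt an already-fixed piece of an earlier one, and the tree structure guarantees that every correction flows monotonically back to the root.

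The step I expect to be the main obstacle is verifying that the simultaneous corrections are mutually consistent, namely that the total disturbance arriving at $\mathfrak{R}_0$ is a genuine single linear function rather than a piecewise one, and that absorbing it does not reintroduce a discontinuity at any knot already processed. This reduces to confirming that each negative unit acts linearly over the entire closure of the regions it disturbs and that those regions are nested according to equation 6.10; granting this, the full-rank realization at $\mathfrak{R}_0$ resolves all corrections at once, preserving the conclusion of theorem 7 and thereby the piecewise linear function of $\mathcal{T}$.
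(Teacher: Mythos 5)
Your proposal is correct and takes essentially the same route as the paper: the paper proves Theorem 8 by direct reduction to Lemma 7, whose proof—exactly as yours—embeds the (linear) disturbance of each converted hyperplane into the realization of $\mathfrak{S}(\boldsymbol{x})$ on $\mathfrak{R}_0$ by the universal global units, invokes the substituted/added two-sided-bases principle of Theorems 4 and 5, and relies on condition \Romannum{2} of Theorem 7 so that no already-constructed piece is retroactively corrupted. Your explicit linearity argument for aggregating several simultaneous conversions into one linear correction at $\mathfrak{R}_0$ merely spells out what Lemma 7's proof dismisses as ``trivial under the discussion above.''
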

\begin{proof}
This conclusion is directly from the proof of lemma 7.
\end{proof}

\begin{rmk-3}
The regions under the two-sided bases of theorem 8 will still be regarded as ordered ones as the one-sided case.
\end{rmk-3}

\begin{rmk-3}
This theorem plays an important role in explaining training solutions, since the case of one-sided bases rarely occurs compared to that of two-sided bases.
\end{rmk-3}

\noindent
\textbf{Example}. In Figure \ref{Fig.7}, any line of order tree $\mathcal{T} = \{\mathscr{P}_1, \mathscr{P}_2\}$ could be changed into its negative form to yield two-sided bases, without affecting the realization of the piecewise linear function of $\mathcal{T}$.

\section{Continuity Restriction}
There may exist some regions that do not belong to any strict partial order, as discussed in theorem 7 of section 6. This is a typical problem imposing ubiquitous influences. We solve it by adding a new principle called ``continuity restriction'' (theorem 9), which proves to be fundamentally important in both developing the approximation theory (theorem 10) and explaining the training solution (subsequent section 8.2).

\subsection{Basic Principle}

\begin{figure}[!t]
\captionsetup{justification=centering}
\centering
\subfloat[Adjacent regions.]{\includegraphics[width=2.2in, trim = {3.6cm 3.2cm 3.6cm 1.4cm}, clip]{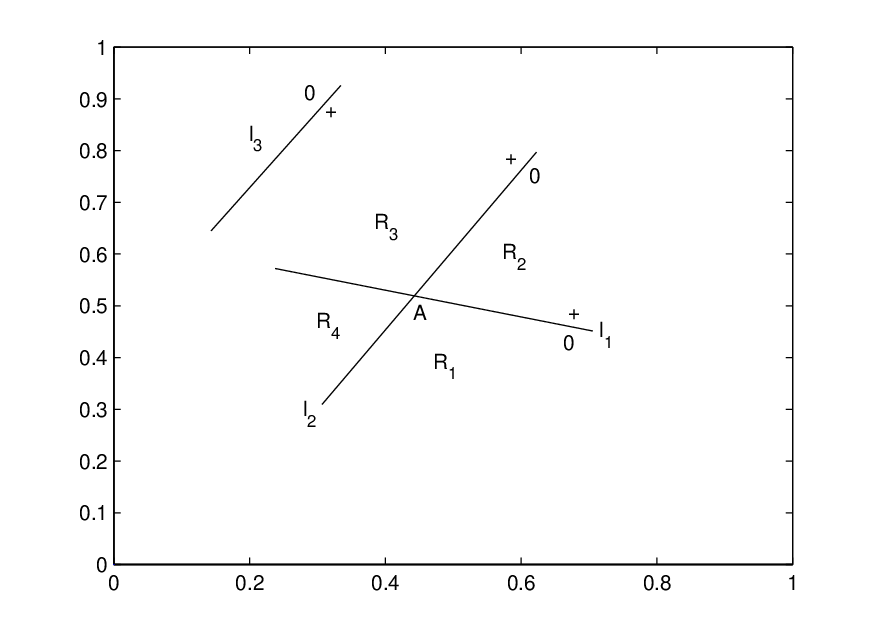}} \quad \quad \quad
\subfloat[Linear pieces on the regions of (a).]{\includegraphics[width=2.7in, trim = {2.7cm 1.7cm 2.7cm 1.3cm}, clip]{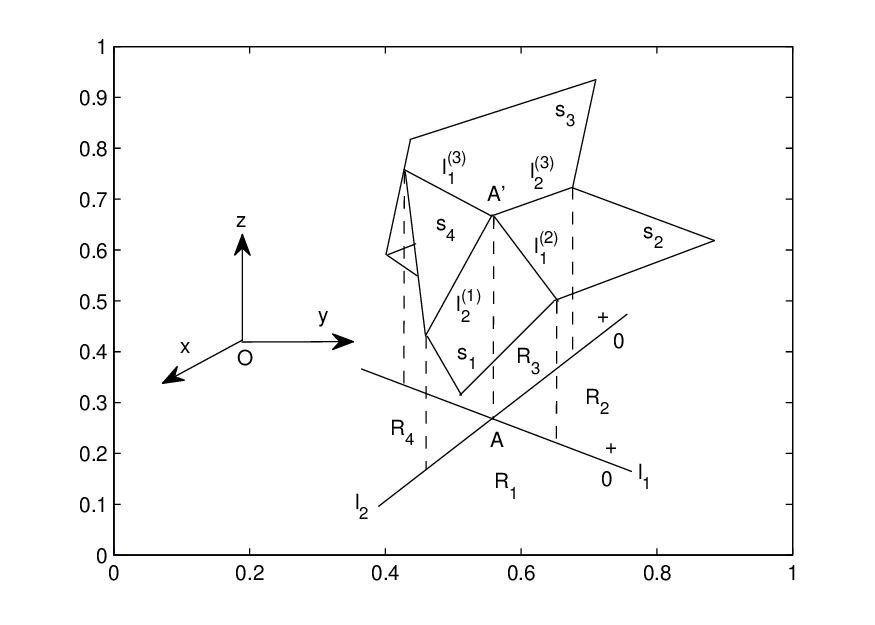}}
\caption{Continuity restriction.}
\label{Fig.8}
\end{figure}

\begin{lem}
Let $R_i$'s for $1 \le i \le 4$ be four regions of $\mathbb{R}^n$ for $n \ge 2$, which are formed by a set $H$ of $n-1$-dimensional hyperplanes and are separated by two hyperplanes $l_1$ and $l_2$ satisfying $l_1 \cap l_2 \ne \emptyset$. Without loss of generality, let $R_i$ be adjacent to $R_{i+1}$ with $R_5 := R_1$, such as in Figure \ref{Fig.8}a. Suppose that there exists a continuous piecewise linear function $\hat{f}(\boldsymbol{x})$ defined on $\mathcal{R} = \bigcup_i R_i$, with the linear piece on $R_i$ being $s_i$. If the linear functions of $\hat{f}(\boldsymbol{x})$ on $R_1$ and $R_3$ have been realized by a two-layer ReLU network $\mathfrak{N}$, whose units of the hidden layer are from the hyperplanes of $H$, then the one on $R_4$ is also implemented by $\mathfrak{N}$.
\end{lem}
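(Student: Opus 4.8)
The plan is to show that the two continuity conditions across the knots bounding the unrealized region pin down its linear piece uniquely, and that the piece the network $\mathfrak{N}$ genuinely produces on that region is forced to agree with the target piece of $\hat{f}$. First I would fix notation for the cyclic arrangement. Since $l_1$ and $l_2$ intersect, traversing the four regions crosses $l_1, l_2, l_1, l_2$ in alternation; writing $\boldsymbol{w}_j^T\boldsymbol{x} + b_j = 0$ for $l_j$ with $j = 1, 2$, I may assume without loss of generality that the unrealized region is $R_4$, whose two neighbours are $R_3$ (separated by $l_1$) and $R_1$ (separated by $l_2$). This loses nothing, because removing any single vertex of the $4$-cycle leaves its two neighbours among the three realized regions, so the missing piece is always reconstructed from its two adjacent ones.

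The elementary fact I would record is that if two affine functions on $\mathbb{R}^n$ agree on an $(n-1)$-dimensional subset of a hyperplane $\mathscr{L}:\boldsymbol{w}^T\boldsymbol{x}+b=0$, then their difference is a scalar multiple of $\boldsymbol{w}^T\boldsymbol{x}+b$, since that difference is affine and vanishes on a set affinely spanning $\mathscr{L}$. This is the affine content of theorem 2 and of the bijection there between the parameter $\lambda$ and the adjacent piece.

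Next I would introduce the piece $\tilde{s}_4$ that $\mathfrak{N}$ itself produces on $R_4$. By corollary 1 the output of $\mathfrak{N}$ is a globally continuous piecewise linear function; on $R_1,R_2,R_3$ it equals $s_1,s_2,s_3$ by hypothesis, and on $R_4$ it equals some linear $\tilde{s}_4$. Because $R_4$ is adjacent to $R_3$ along an $(n-1)$-dimensional subset of $l_1$ and to $R_1$ along one of $l_2$ (definition 13), continuity of the network output yields
$$\tilde{s}_4 - s_3 = \tilde{c}_1(\boldsymbol{w}_1^T\boldsymbol{x}+b_1), \qquad \tilde{s}_4 - s_1 = \tilde{c}_2(\boldsymbol{w}_2^T\boldsymbol{x}+b_2)$$
for scalars $\tilde{c}_1,\tilde{c}_2$. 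The target piece $s_4$ of $\hat{f}$ is continuous with $s_3$ and $s_1$ across the same knots, so it obeys the identical pair of relations with some scalars $c_1,c_2$.

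Finally I would subtract the two pairs of relations to obtain $\tilde{s}_4 - s_4 = (\tilde{c}_1-c_1)(\boldsymbol{w}_1^T\boldsymbol{x}+b_1) = (\tilde{c}_2-c_2)(\boldsymbol{w}_2^T\boldsymbol{x}+b_2)$. Since $l_1\cap l_2\neq\emptyset$ with $l_1\neq l_2$, the two affine forms are linearly independent, forcing $\tilde{c}_1=c_1$ and $\tilde{c}_2=c_2$, hence $\tilde{s}_4=s_4$; thus $\mathfrak{N}$ already realizes the piece of $\hat{f}$ on the remaining region, which is exactly the continuity-restriction phenomenon of this section. The main (minor) obstacle is the bookkeeping ensuring that the network piece on $R_4$ satisfies \emph{both} continuity relations at once: this rests on the adjacencies $R_3\cap R_4$ and $R_4\cap R_1$ being genuinely $(n-1)$-dimensional and lying on distinct, non-parallel hyperplanes, precisely what the hypothesis $l_1\cap l_2\neq\emptyset$ guarantees, and the linear independence of the two defining forms is what closes the argument.
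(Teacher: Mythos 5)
Your proof is correct, and it reaches the paper's conclusion by a noticeably more algebraic route than the paper's own argument. The skeleton is shared: like the paper, you invoke corollary 1 for global continuity of the network output, observe that this output restricts to some affine $\tilde{s}_4$ on the remaining region (no hyperplane of $H$ crosses it), and argue that continuity across the two knots forces $\tilde{s}_4$ to coincide with the target piece $s_4$. Where you differ is the uniqueness mechanism. The paper argues geometrically: the graph of the fourth piece must contain the two ridges $l_1^{(3)} \subset s_3$ and $l_2^{(1)} \subset s_1$, and two intersecting, unparallel $n-1$-dimensional hyperplanes of $\mathbb{R}^{n+1}$ span a unique $n$-dimensional hyperplane --- a fact it establishes separately for $n=2$ (via the picture of Figure \ref{Fig.8}b) and for $n \ge 3$ (via the coordinate-system construction inside the proof of lemma 6). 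You instead record that an affine function agreeing with another on an $n-1$-dimensional subset of a knot differs from it by a scalar multiple of the knot's defining form, write the two resulting relations for both $\tilde{s}_4$ and $s_4$, subtract, and use linear independence of $\boldsymbol{w}_1^T\boldsymbol{x}+b_1$ and $\boldsymbol{w}_2^T\boldsymbol{x}+b_2$ to kill both coefficients. This buys three things: the argument is uniform in the input dimension, so the paper's case split disappears; it is orientation-agnostic, so the paper's separate remark that the conclusion survives changing $l_1$ or $l_2$ into negative forms (as in lemma 7) is automatic, since you never use how the units realize the three pieces, only that they do; and it cleanly separates hypothesis from conclusion, whereas the paper's proof first re-derives the realization of $s_1, s_2, s_3$ via equation 4.68 before reaching the core step. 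One cosmetic point: linear independence of the two affine forms needs only $l_1 \ne l_2$; the hypothesis $l_1 \cap l_2 \ne \emptyset$ is what guarantees the four-region cyclic adjacency structure used in your reduction to the case that both neighbours of the missing region are realized.
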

\begin{proof}
\textbf{Case of input-dimensionality $n=2$}. In Figure \ref{Fig.8}a, besides $l_1$ and $l_2$ that partition $\mathcal{R}$, for simplicity, only one line $l_3$ is depicted. The lines of $H$ can be classified into two categories: one is like $l_3$ whose positive-output region includes all of $R_i$'s; and the other is composed of those that cannot be activated by any one of $R_i$'s. All the influences of the first category can be embedded in the linear function of any one of $R_i$'s, such as $s_1$ on $R_1$. Then by equation 4.68, $s_2$ and $s_3$ on $R_2$ and $R_3$, respectively, can be constructed on the basis of $s_1$; and after that the parameters of the units of $l_1$ and $l_2$ are fixed and a certain linear function $s_4'$ on $R_4$ is also formed.

We now prove that $s_4' = s_4$. In Figure \ref{Fig.8}b, $s_i$ for $1 \le i \le 4$ is the linear function of $\hat{f}(\boldsymbol{x})$ on $R_i$. Line $l_1^{(2)} \subset s_2$ (the subscript of $s_2$ associated with the superscript of $l_1^{(2)}$) and $l_1^{(2)} = s_1 \cap s_2$ whose projection on plane $xOy$ is on $l_1$ (corresponding to the subscript of $l_1^{(2)}$); the meanings of $l_2^{(3)}$, $l_1^{(3)}$ and $l_2^{(1)}$ are similar. Lines $l_1^{(3)}$ and $l_2^{(1)}$ determine a unique plane $s_4$ continuous with $s_3$ and $s_1$. Suppose that $s_j$'s for $j = 1, 2, 3$ have been previously implemented by network $\mathfrak{N}$ as above, with $l_1^{(3)} \subset s_3$ and $l_2^{(1)} \subset s_1$ also determined. Let $\hat{f}'(\boldsymbol{x})$ be the function on $\mathcal{R}$ realized by $\mathfrak{N}$, which may be different from $\hat{f}(\boldsymbol{x})$ merely in the linear function $s_4'$ on $R_4$. We know that $\hat{f}'(\boldsymbol{x})$ is continuous by corollary 1. To maintain the continuity property of $\hat{f}'(\boldsymbol{x})$ at the knots $l_1$ and $l_2$ for $s_3$, $s'_4$ and $s_1$, $s'_4 = s_4$ is the only choice. This proves $\hat{f}'(\boldsymbol{x}) = \hat{f}(\boldsymbol{x})$.

Another point is that if we change either or both of $l_1$ and $l_2$ into negative forms as in lemma 7, the conclusion still holds. This modification only influences the production way of $s_j$'s and has no impact on the geometric relationships between the linear pieces of $\hat{f}(\boldsymbol{x})$ as well as the continuous property of $\hat{f}'(\boldsymbol{x})$.

We can also summarize the above proof by two intuitive and simple facts: the first is the boundary-determination principle discussed in the proof of lemma 6, through which $s_4$ is uniquely determined by $s_1$ and $s_3$; the second is that the output $\hat{f}'(\boldsymbol{x})$ of network $\mathfrak{N}$ is a continuous piecewise linear function whose parameters are set to produce $s_1$ and $s_3$. Thus $\hat{f}'(\boldsymbol{x})=\hat{f}(\boldsymbol{x})$ on $R_1\cap R_3\cap R_4$.

\textbf{Case of input-dimensionality $n \ge 3$}. The key point of the generalization to $n \ge 3$ lies in the fact that two unparallel $n-1$-dimensional hyperplanes of $\mathbb{R}^{n+1}$ can from a unique $n$-dimensional hyperplane, which has been proved in the proof of lemma 6. Others are similar to the case of $n = 2$. Note that in the higher-dimensional case, the intersection $A = \bigcap_il_i$ is not a point as in Figure \ref{Fig.8}b. However, this is irrelevant to this lemma, since we only use the adjacent relationships of the regions formed by $l_1$ and $l_2$, regardless of their intersection $A$.
\end{proof}

\begin{dfn}[Boundary of a region]
Suppose that $R \subset \mathbb{R}^n$ for $n \ge 2$ is a region formed by a set $H = \{l_1, l_2, \dots, l_{\psi}\}$ of $n-1$-dimensional hyperplanes. The \textsl{boundary} $\mathscr{B}$ of $R$ is defined to be
\begin{equation}
\mathscr{B} = \{\mathcal{L}_{\nu}: \mathcal{L}_{\nu} = R \cap l_{\nu} \ne \emptyset, 1 \le \nu \le \psi \}.
\end{equation}
\end{dfn}

\begin{thm}[Principle of continuity restriction]
Let $R_i$'s for $i = 1, 2, \dots, \zeta$ be the regions of $U = [0, 1]^n$ formed by a set $H = \{l_1, l_2, \dots, l_{\psi}\}$ of $n-1$-dimensional hyperplanes with $n \ge 2$. Write $\mathcal{R} = \bigcup_iR_i$. Denote by $\hat{f}(\boldsymbol{x}) \in \mathfrak{K}_n(H)$ of equation 6.1 a continuous piecewise linear function to be realized by a two-layer ReLU network $\mathfrak{N}$. To a region $R_{\kappa} \in \mathcal{R}$ for $1 \le \kappa \le \zeta$, let $\mathcal{L}_{n_{\kappa\nu}}$ and $\mathcal{L}_{n_{\kappa\mu}}$ for $1 \le n_{\kappa\nu}, n_{\kappa\mu} \le \psi$ be two elements of its boundary satisfying $l_{n_{\kappa\nu}} \cap l_{n_{\kappa\mu}} \ne \emptyset$, where $l_{n_{\kappa\nu}}$ and $l_{n_{\kappa\mu}}$ are the hyperplanes containing $\mathcal{L}_{n_{\kappa\nu}}$ and $\mathcal{L}_{n_{\kappa\mu}}$, respectively, as in equation 7.1. Then, if the function values on $\mathcal{L}_{\kappa_{\nu}} \cup \mathcal{L}_{\kappa_{\mu}}$ have been implemented by $\mathfrak{N}$, the linear function on $R_{\kappa}$ is also simultaneously produced.

\end{thm}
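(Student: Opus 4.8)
The plan is to lift the problem into the graph space $\mathfrak{R}_z := \mathbb{R}^{n+1}$, exactly as in the proof of theorem 2, and to reduce the claim to the uniqueness fact already isolated in the proof of lemma 6: two unparallel $n-1$-dimensional hyperplanes of $\mathbb{R}^{n+1}$ determine a unique $n$-dimensional hyperplane. The target linear piece $s_{\kappa}$ on $R_{\kappa}$ is an $n$-dimensional linear function, so its graph is an $n$-dimensional hyperplane $G \subset \mathfrak{R}_z$; producing $s_{\kappa}$ is the same as pinning down $G$.

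First I would record what $\mathfrak{N}$ does on $R_{\kappa}$. Since the units of the hidden layer have a fixed activation pattern on the interior of $R_{\kappa}$, the output of $\mathfrak{N}$ restricted to $R_{\kappa}$ is a single linear function $s_{\kappa}'$, and by corollary 1 this output is globally continuous. Because the values on the facets $\mathcal{L}_{n_{\kappa\nu}}$ and $\mathcal{L}_{n_{\kappa\mu}}$ have already been implemented, continuity at the knots $l_{n_{\kappa\nu}}$ and $l_{n_{\kappa\mu}}$ forces $s_{\kappa}'$ to agree with those implemented values on the two facets. Thus it suffices to show that the data on $\mathcal{L}_{n_{\kappa\nu}} \cup \mathcal{L}_{n_{\kappa\mu}}$ determines $s_{\kappa}'$ uniquely, i.e. that $s_{\kappa}' = s_{\kappa}$.

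Next I would lift each facet. The facet $\mathcal{L}_{n_{\kappa\nu}} = R_{\kappa} \cap l_{n_{\kappa\nu}}$ is $n-1$-dimensional (definition 19), hence contains $n$ affinely independent points; the implemented linear values on it therefore determine a unique $n-1$-dimensional linear function on the entire hyperplane $l_{n_{\kappa\nu}}$, whose graph is an $n-1$-dimensional hyperplane $k_{\nu} \subset \mathfrak{R}_z$ contained in $G$. The same construction on $\mathcal{L}_{n_{\kappa\mu}}$ yields $k_{\mu} \subset G$. Because $l_{n_{\kappa\nu}} \cap l_{n_{\kappa\mu}} \ne \emptyset$ and the two hyperplanes are distinct, they are transverse in $\mathbb{R}^n$; this transversality is preserved by the vertical lifting, so $k_{\nu} \nparallel k_{\mu}$ are two distinct unparallel $n-1$-dimensional hyperplanes of $\mathfrak{R}_z$. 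Applying the uniqueness fact from the proof of lemma 6, there is exactly one $n$-dimensional hyperplane containing both $k_{\nu}$ and $k_{\mu}$; since both $G$ and the graph of $s_{\kappa}'$ contain them, the two coincide, giving $s_{\kappa}' = s_{\kappa}$. For $n = 2$ this is precisely the situation drawn in figure 8 and handled in lemma 8.

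The step I expect to be the main obstacle is the faithful lifting of the previous paragraph: one must verify that the known data on a facet, an $n-1$-dimensional piece rather than the whole hyperplane, already fixes the full graph flat $k_{\nu}$ (this uses that a facet is genuinely $n-1$-dimensional, hence affinely spanning within $l_{n_{\kappa\nu}}$), and that transversality of the base hyperplanes $l_{n_{\kappa\nu}}, l_{n_{\kappa\mu}}$ really does force $k_{\nu} \nparallel k_{\mu}$. Once these two geometric points are secured, the conclusion is immediate from lemma 6. I would also remark, as in the proof of lemma 8, that replacing either knot by its negative form in the sense of definition 18 alters only the production mechanism of the pieces and leaves the geometric relations, and hence the whole argument, intact.
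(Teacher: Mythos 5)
Your proposal is correct and takes essentially the same route as the paper: the paper proves this theorem by appealing to the argument of lemma 8, whose two key ingredients are exactly the ones you use --- the global continuity of the network output (corollary 1) forcing the piece on $R_{\kappa}$ to match the implemented facet values, and the fact, isolated in the proof of lemma 6, that two distinct unparallel intersecting $n-1$-dimensional hyperplanes of $\mathbb{R}^{n+1}$ span a unique $n$-dimensional hyperplane. Your write-up is somewhat more explicit than the paper's (which argues through the four-region configuration of lemma 8 and handles $n \ge 3$ by remarking that ``others are similar''), but the underlying mechanism is identical.
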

\begin{proof}
The proof is contained in the argument of lemma 8 and this theorem is a more general description of lemma 8.
\end{proof}

\begin{rmk}
Despite its simplicity, this theorem provides the possibility of realizing the linear function on a region that is not included in a strict partial order. In combination with the ordered regions, its effect could be complex through recursive applications, rendering it a fundamental property of the solution of two-layer ReLU networks.
\end{rmk}

\noindent
\textbf{Example}. In Figure \ref{Fig.7}, although the regions $r_1$, $r_2$ and $r_3$ of $\mathscr{R}$ are not ordered, the linear functions on them can all be automatically realized through the continuity restriction of theorem 9, such that any piecewise linear function on the regions of Figure \ref{Fig.7} can be precisely implemented by a two-layer ReLU network. More examples will be shown in sections 7.2 and 8.2.

\subsection{Universal-Approximation Capability}

\begin{figure}[!t]
\captionsetup{justification=centering}
\centering
\subfloat[Principle of recursive procedures.]{\includegraphics[width=2.3in, trim = {3.8cm 2.0cm 2.8cm 1.0cm}, clip]{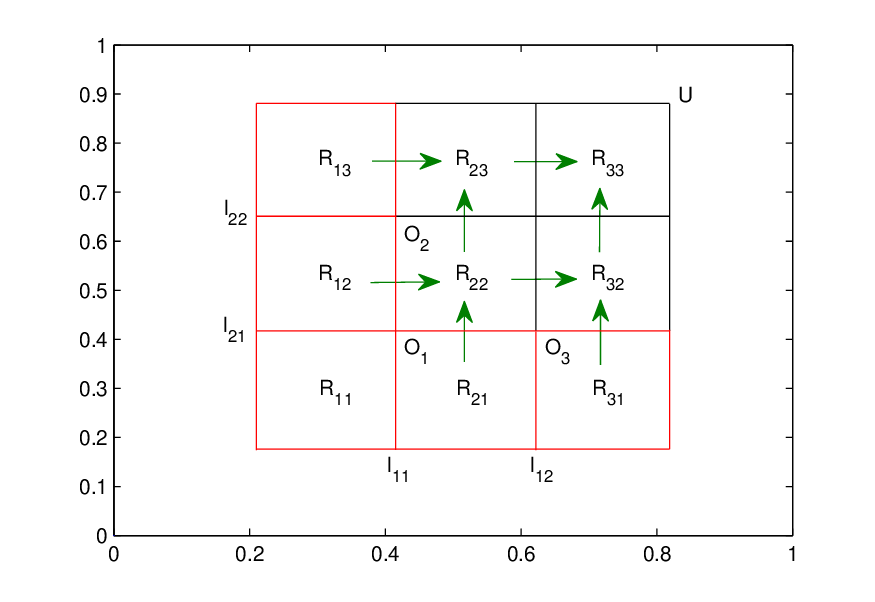}} \quad \quad
\subfloat[Case of the three-dimensional input.]{\includegraphics[width=2.7in, trim = {2.6cm 1.5cm 2.3cm 1.0cm}, clip]{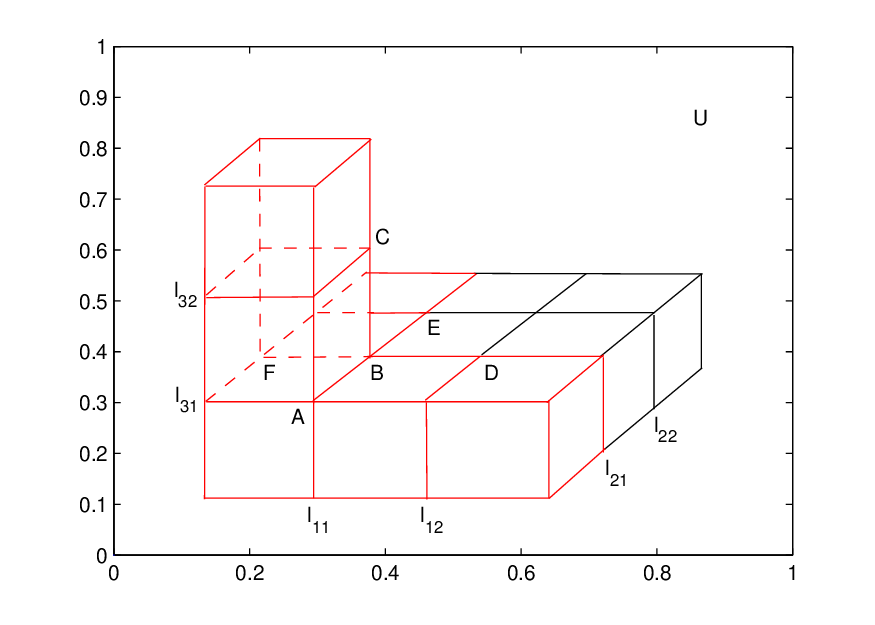}} \\
\subfloat[Function-construction process.]{\includegraphics[width=2.7in, trim = {2.6cm 1.5cm 2.3cm 1.0cm}, clip]{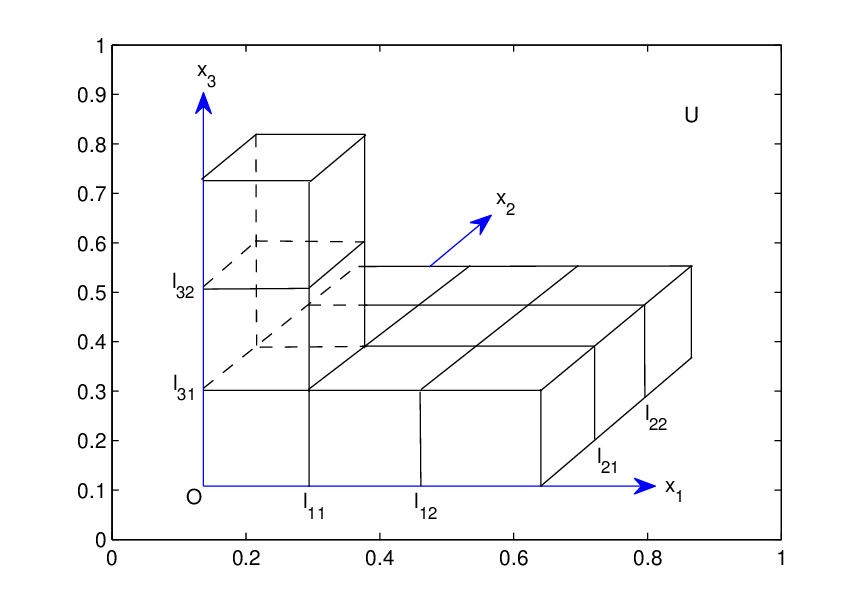}}
\caption{Function approximation via continuity restriction.}
\label{Fig.9}
\end{figure}

\begin{dfn}[Region adjacent to a hyperplane]
Under definition 19, to each hyperplane $l_{j_{\nu}}$ of equation 7.1 satisfying $q = R \cap l_{j_{\nu}} \ne \emptyset$ and $\dim{q} = n-1$, we say that $R$ is \textsl{adjacent} to $l_{j_{\nu}}$.
\end{dfn}

\begin{lem}
Denote by $\boldsymbol{x} = [x_1, x_2, \dots, x_n]^T$ for $n \ge 2$ a vector of $\mathbb{R}^n$ with $x_i$ as the $i$th dimension for $i = 1, 2, \dots, n$. Let $l_{ij}$ be an $n-1$-dimensional hyperplane whose equation is
\begin{equation}
x_i = j / M
\end{equation}
for $j = 1, 2, \dots, M-1$ with $M \ge 2$. The universal set $U = [0,1]^n$ is then divided into a set $\mathcal{R}$ of $M^n$ regions by $l_{ij}$'s for all $i$ and $j$, viz.,
\begin{equation}
|\mathcal{R}| = M^n.
\end{equation}
By equation 6.1, $\hat{f}(\boldsymbol{x}) \in \mathfrak{K}_n(H)$ denotes an arbitrary continuous piecewise linear function defined on $\mathcal{R}$ with
\begin{equation}
H = \{l_{ij}: 1 \le i \le n,  1 \le j \le M-1\}.
\end{equation}
Let
\begin{equation}
\mathcal{B} =  \mathcal{R} \cap \bigg(\bigcup_{\mu = 1}^{n}\bigcap_{\nu = 1, i_{\mu\nu} \in A_{\mu}}^{n-1}l_{i_{\mu\nu}1}^0\bigg),
\end{equation}
where $A_{\mu}$ for $\mu = 1, 2, \dots, n$ is an $n-1$-subset of $\{1, 2, \dots, n\}$, which is composed of the regions with each only adjacent to all of $l_{i1}$'s or any $n-1$ ones of $l_{i1}$'s (e.g., red ones of Figure \ref{Fig.9}a), with the cardinality
\begin{equation}
|\mathcal{B}| = (M-1)n + 1.
\end{equation}
Denote by $\mathfrak{N}$ a two-layer ReLU network whose units of the hidden layer are from the hyperplanes of $H$ together with at least $n+1$ universal global ones with respect to $U$. Then, if the linear functions of $\hat{f}(\boldsymbol{x})$ on $\mathcal{B}$ have been realized by $\mathfrak{N}$, the remaining ones on $\mathcal{R} - \mathcal{B}$ can be simultaneously implemented.
\end{lem}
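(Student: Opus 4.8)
The plan is to coordinatize the grid and then fill in the undetermined cells one at a time by repeated use of the continuity restriction (theorem 9). First I would label each region of $\mathcal{R}$ by a tuple $(j_1,\dots,j_n)$ with $0 \le j_i \le M-1$, the cell occupying $x_i \in [j_i/M,(j_i+1)/M]$. Under this labelling $\mathcal{B}$ is the corner skeleton: the single cell $(0,\dots,0)$ together with, for each axis $i$, the rod of cells $(0,\dots,0,j,0,\dots,0)$ with $j$ in slot $i$ and $1 \le j \le M-1$. This is exactly the set of cells having at most one nonzero index, and its cardinality $1+n(M-1)=(M-1)n+1$ matches the stated count, so the remaining task is to realize the linear pieces on $\mathcal{R}-\mathcal{B}$, that is, on every cell with at least two nonzero indices. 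The $n+1$ universal global units guarantee, via lemma 3, that $\mathfrak{N}$ carries enough degrees of freedom to express an arbitrary linear function on any individual cell, so the only genuine issue is continuity bookkeeping.

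The core idea is a single inductive sweep. I would totally order the cells of $\mathcal{R}-\mathcal{B}$ by the lexicographic order of their index tuples, process them in increasing order, and prove by strong induction that each is realized by $\mathfrak{N}$ at the moment it is reached. The base case is automatic: the lexicographically least cell of support $\ge 2$ is $(0,\dots,0,1,1)$, whose two natural neighbours $(0,\dots,0,0,1)$ and $(0,\dots,0,1,0)$ both lie in $\mathcal{B}$ and are therefore realized by hypothesis.

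For the inductive step, take a cell $c=(j_1,\dots,j_n)$ with nonzero-index set $T$, $|T|\ge 2$, and let $a<b$ be the two smallest elements of $T$. The facet $c\cap\{x_a=j_a/M\}$ is shared with the neighbour $c_a$ obtained by decrementing slot $a$, and $c\cap\{x_b=j_b/M\}$ is shared with the neighbour $c_b$ obtained by decrementing slot $b$. Since $a\ne b$, the two carrying hyperplanes $x_a=j_a/M$ and $x_b=j_b/M$ meet in an $(n-2)$-dimensional affine set, so they satisfy the intersection hypothesis $l_{n_{\kappa\nu}}\cap l_{n_{\kappa\mu}}\ne\emptyset$ of theorem 9. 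Decrementing a coordinate strictly lowers the tuple lexicographically and cannot raise the support size, so each of $c_a,c_b$ either has at most one nonzero index (hence lies in $\mathcal{B}$) or has support $\ge 2$ but is lexicographically earlier than $c$ (hence already realized by the induction). In either case the linear piece on the neighbour, and therefore, by continuity of $\hat{f}\in\mathfrak{K}_n(H)$, the value of $\hat{f}$ on the shared facet, is implemented by $\mathfrak{N}$. Applying theorem 9 to $c$ with these two facets then produces the linear piece on $c$, closing the induction.

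I expect the main obstacle to be organizational rather than analytic: exhibiting a processing order for which, at every cell outside $\mathcal{B}$, two already-realized facets lying on two intersecting grid hyperplanes are guaranteed to exist. The lexicographic sweep is the device that makes this work, and the only points to check carefully are that decrementing never leaves $\mathcal{R}$, never increases support, and always yields a strictly earlier cell, together with the fact that the two chosen facets sit on distinct coordinate hyperplanes. For $n=2$ this collapses to the transparent picture of filling the square grid row by row from the known bottom row and left column, each new cell being fixed by its left and lower neighbours; the general-$n$ argument is the same propagation carried out in lexicographic order, with theorem 9 supplying the single-cell step in every dimension.
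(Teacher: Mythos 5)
Your proof is correct, and it runs on the same engine as the paper's: repeated application of theorem 9 (continuity restriction), filling in each undetermined cell from two already-realized facets that lie on intersecting grid hyperplanes. Where you differ is the induction scheme. The paper works up through the input dimension: it fills the two-dimensional grid row by row, then in three dimensions reduces each ``floor'' to the two-dimensional case by first propagating along the known rods, spells out the four-dimensional case separately, and finally asserts the general step from $k$ to $k+1$ dimensions, with that last generalization stated rather than carried out in detail. You replace this nested, dimension-by-dimension recursion with a single strong induction over all cells of $\mathcal{R}-\mathcal{B}$ in lexicographic order, with one uniform inductive step: decrement the two smallest nonzero indices, observe that each resulting neighbour is either in $\mathcal{B}$ or lexicographically earlier (decrementing stays in $\mathcal{R}$, never increases support, and strictly lowers the tuple), and invoke theorem 9 on the two facets, whose carrying hyperplanes $x_a=j_a/M$ and $x_b=j_b/M$ intersect because $a\ne b$. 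This buys uniformity and rigor --- no separate low-dimensional cases and no ``similarly for higher $n$'' step --- at the cost of the geometric floors-and-rods picture the paper uses to make the propagation visible. Two small remarks: your reading of $\mathcal{B}$ as the cells with at most one nonzero index is the right one --- it is what the paper's red-cube skeleton and the count $(M-1)n+1$ pin down, whereas the literal phrase ``adjacent to all of $l_{i1}$'s'' would also admit cells such as $(1,1,0,\dots,0)$, so the stated cardinality is what disambiguates; and your opening appeal to lemma 3 about degrees of freedom is unnecessary and slightly off the point, since the content of the lemma is precisely that the pieces on $\mathcal{R}-\mathcal{B}$ are forced rather than freely adjustable, which is exactly what your induction establishes.
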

\begin{proof}
\textbf{Case of two-dimensional input}. By the example of Figure \ref{Fig.9}a, this lemma tells us that to a piecewise linear function $\hat{f}(\boldsymbol{x})$ on $R_{ij}$'s for $i,j = 1, 2, 3$, if the linear functions of $\hat{f}(\boldsymbol{x})$ on the red regions (i.e., the ones of the leftmost column and bottom row) are realized by network $\mathfrak{N}$, whose units are from the lines partitioning the nine regions as well as some universal global ones with respect to $U = [0, 1]^2$, then the linear functions on the remaining regions can be simultaneously implemented.

The proof is by theorem 9. In Figure \ref{Fig.9}a, there are four lines, denoted by $l_{\nu\mu}$ for $\nu, \mu = 1, 2$; the vertical lines are $l_{1\mu}$'s, while the horizontal ones are $l_{2\mu}$'s; $l_{11} \cap l_{21} = O_1$, $l_{11} \cap l_{22} = O_2$, and $l_{12} \cap l_{21} = O_3$. Let $s_{ij}$ be the linear function of $\hat{f}(\boldsymbol{x})$ on $R_{ij}$; among them, $s_{i1}$'s and $s_{1j}$'s have been implemented by $\mathfrak{N}$. Due to $s_{12}$ and $s_{21}$, the function values on line segments $O_1O_2$ and $O_1O_3$ are determined, respectively, resulting in the production of $s_{22}$ according to theorem 9. Again $s_{22}$ and $s_{31}$ lead to $s_{32}$. To the third-row regions, because $s_{13}$ has been set beforehand and $s_{22}$ and $s_{32}$ have been previously constructed, it is the same case as the second row, for which $s_{23}$ and $s_{33}$ are also realized.

\textbf{Case of three-dimensional input}. The case of the three-dimensional input is shown in Figure \ref{Fig.9}b, in which each region is a cube. For simplicity, only part of the regions of $U$ are depicted. The linear functions on the red cubes have been realized by network $\mathfrak{N}$. We generalize the notations of the two-dimensional input as follows. Let $\boldsymbol{x} = [x_1, x_2, x_3]^T$  be a vector of $\mathbb{R}^3$, with each dimension shown in Figure \ref{Fig.9}c. Each region of Figure \ref{Fig.9}b is denoted by $R_{ijk}$ for $i, j, k = 1, 2, 3$, where the subscripts $i$, $j$ and $k$ correspond to dimensions $x_1$, $x_2$ and $x_3$, respectively; $s_{ijk}$ denotes the linear function of $\hat{f}(\boldsymbol{x})$ on $R_{ijk}$. If $k$  of dimension $x_3$ is fixed, the arrangement of the two-dimensional notations $R_{ijk}$'s for $i$ and $j$ is the same as that of Figure \ref{Fig.9}a. Then the red regions of Figure \ref{Fig.9}b are $R_{i11}$'s of direction $x_1$, $R_{1j1}$'s of direction $x_2$ and $R_{11k}$'s of direction $x_3$, with the corresponding linear functions $s_{i11}$'s, $s_{1j1}$'s and $s_{11k}$'s having been implemented by $\mathfrak{N}$, respectively.

To the regions $R_{ij1}$'s of the first floor of Figure \ref{Fig.9}b, the production of the linear functions on the black cubes is the same as that of Figure \ref{Fig.9}a, since $s_{i11}$'s and $s_{1j1}$'s have been provided. The difference from the two-dimensional input begins at the second floor of $R_{ij2}$'s, among which only the linear function $s_{112}$ on $R_{112}$ has been given. We can first use $s_{112}$ and the linear functions of the first floor to yield $s_{i12}$'s and $s_{1j2}$'s, such that the problem is reduced to the first-floor case. For example, to direction $x_1$, $s_{212}$ is determined by the function values on faces $ABC$ and $ABD$ from $s_{112}$ and $s_{211}$, respectively; $s_{122}$ of direction $x_2$ is from $BCF$ and $BEF$; the remaining ones of $s_{i12}$'s and $s_{1j2}$'s can be similarly obtained one by one. Thus, the second floor can be dealt with as the first one, and so is the the third one.

\textbf{Case of four-dimensional input}. Since the four-dimensional case cannot be visualized as above, we prove the three-dimensional result by another way for further generalizations. Write $\hat{f}(x_1, x_2, x_3) := \hat{f}(\boldsymbol{x})$. We first use planes $l_{ij}$ for $i,j = 1, 2$ of Figure \ref{Fig.9}c to partition $U = [0,1]^3$ into $3^2 = 9$ regions, regardless of dimension $x_3$. To any fixed number $x_3 = \alpha$, function $\hat{f}(x_1, x_2, \alpha)$ is two-dimensional and is the case of Figure \ref{Fig.9}a. Thus, although the first floor of $U$ composed of $R_{ij1}$'s is three-dimensional, it can be processed by the two-dimensional method. Note that the red cubes of Figure \ref{Fig.9}b can be obtained by shifting $R_{111}$ one by one. For instance, translating $R_{111}$ in direction $x_1$ by step $1/M$ when $M = 3$ gives $R_{211}$, and $R_{121}$ in direction $x_2$.

After the dividing of dimension $x_3$ through planes $l_{31}$ and $l_{32}$, the second floor made up of $R_{ij2}$'s appears. The linear function $s_{112}$ on $R_{112}$ is the only one of the second floor that has been set. We shift $R_{112}$ to form the red cubes of the second floor as the first one, during which the linear functions on them can be simultaneously produced. For example, translate $R_{112}$ in direction $x_1$ by step $1/M$ yields $R_{212}$, adjacent to both $R_{112}$ and $R_{211}$ because $R_{212}$ and $R_{112}$ as well as $R_{212}$ and $R_{211}$ are separated by $l_{11}$ and $l_{31}$, respectively. Let $p_1 = R_{212} \cap R_{112}$ and $p_2 = R_{212} \cap R_{211}$, both of which are two-dimensional and the linear functions on them have been known beforehand. Because $(p_1 \cap p_2) \subset l_{11} \cap l_{31} \ne \emptyset$, $s_{212}$ on $R_{212}$ is determined by the function values on $p_1$ and $p_2$.

We further move $R_{212}$ in direction $x_1$ to become $R_{312}$ and to generate $s_{312}$. Then all the red cubes of dimension $x_1$ are generated. After dimension $x_2$ being similarly processed, the second floor is reduced to the first one, resulting in the production of all the linear functions on its regions. The third-floor case is the same as the second one.

We now generalize the above procedure to the four-dimensional input. Write $\boldsymbol{x} = [x_1, x_2, x_3, x_4]^T$. First use three-dimensional hyperplanes $l_{ij}$ for $i= 1,2,3$ and $j = 1, 2$, whose equation is $x_i = j/3$, to divide $U = [0, 1]^4$ into $3^3 = 27$ regions, with each $i$ corresponding to dimension $x_i$. Then $l_{4j}$'s, each of which is $x_4 = j/3$, are introduced to separate dimension $x_4$ into three parts that form the three floors of $U$, with each having $27$ regions. The total number of the regions partitioned by $l_{\nu j}$'s for $\nu = 1, 2, 3, 4$ and $j = 1, 2$ is thus $3^4 = 81$; each of them is denoted by $R_{ijkt}$ for $i, j, k, t = 1, 2, 3$, corresponding to dimensions $x_1$, $x_2$, $x_3$ and $x_4$, respectively; and to a fixed $t$ of dimension $x_4$, the arrangement of notations $R_{ijkt}$'s is the same as the three-dimensional case. Note that each fixed $t$ of $R_{ijkt}$'s corresponds to one certain floor of $U$. Let $s_{ijkt}$ be the linear function of $\hat{f}(\boldsymbol{x})$ on $R_{ijkt}$.

Next, we first turn to a general description of the red cubes of Figure \ref{Fig.9}b, which had been used by this lemma. In each dimension $x_i$ for $i = 1, 2, 3$, there are three red cubes that grow only in direction $x_i$, with the length of the remaining two dimensions restricted to be less than $1/M$, in terms of each red cube being adjacent to $l_{\nu1}$'s for $\nu \ne i$; and a special case is $R_{111}$ that is adjacent to all $l_{\nu1}$'s; this is the description of the associated condition of this lemma. To the notations of the regions (linear functions) of the red cubes, for example, when $j = 1$ and $k = 1$, they are $R_{i11}$'s ($s_{i11}$'s), with only $i$ changing for the growth in direction $x_1$. The total number of the red cubes of the three dimensions is $(M-1) \times n + 1 = 7$ when $M = 3$ and $n = 3$, which is of equation 7.6, because $R_{111}$ is shared by all the dimensions.

Then by the assumption of this lemma, the linear functions $s_{i111}$'s, $s_{1j11}$'s, $s_{11k1}$'s and $s_{111t}$'s have been set beforehand. The first floor of $U$ can be dealt with by the three-dimensional method. In the second floor, only $s_{1112}$ on $R_{1112}$ is known. In dimension $x_1$, shift $R_{1112}$ by step $1/M$ to obtain $R_{2112}$, which is adjacent to $R_{1112}$ and $R_{2111}$ since $R_{2112}$ and $R_{1112}$ as well as $R_{2112}$ and $R_{2111}$ are divided by $l_{11}$ and $l_{41}$, respectively. So $c_1 = R_{2112} \cap R_{1112}$ and $c_2 = R_{2112} \cap R_{2111}$ exist, both of which are three-dimensional (lemma 2). Because $(c_1 \cap c_2) \subset l_{11} \cap l_{41} \ne \emptyset$, $s_{2112}$ is obtained. The remaining $s_{i112}$'s for $i \ne 2$ of dimension $x_1$ of the second floor can be recursively generated.

In dimensions $x_2$ and $x_3$, similar operations can be done and the difference from $x_1$ lies in the direction of shifting $R_{1112}$ as well as the hyperplanes partitioning the adjacent regions. This contributes to the ''red four-dimensional regions'' of the second floor, such that the problem is reduced to the first floor and all of its linear functions could be produced by the three-dimensional method. The third floor can be analogously dealt with.

\textbf{Inductive steps}. The above procedure from three- to four-dimensional inputs can be generalized to that from $k$- to $k+1$-dimensional inputs for $k \ge 4$. Thus, the induction step can be done recursively until $k = n-1$, when the $n$-dimensional case is proved. This completes the proof.
\end{proof}

\begin{rmk}
This conclusion seems to be unreasonable, since so limited portion of function values are required to yield the remaining ones and the distance between the regions whose linear functions are provided and those whose linear functions are to be determined could be far within $U$. However, this illusion can be clarified by the fact that each region of this lemma has its distinct hyperplanes to shape its linear function, in terms of the combinatorial effect of different hyperplanes, such that the diversity and distinction of the linear functions can be ensured.
\end{rmk}

\begin{lem}
Notations being from lemma 9, any continuous piecewise linear function $\hat{f}(\boldsymbol{x}) \in \mathfrak{K}_n(H)$ for $\boldsymbol{x} \in [0, 1]^n$ with $\zeta = M^n$
linear pieces can be realized by a two-layer ReLU network $\mathfrak{N}$ whose hidden layer has
\begin{equation}
\Theta \ge Mn + 1
\end{equation}
units.
\end{lem}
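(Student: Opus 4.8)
The plan is to reduce the realization of $\hat{f}$ to fitting its linear pieces only on the seed set $\mathcal{B}$, and then let lemma 9 supply the remaining pieces automatically. Recall from lemma 9 that $\mathcal{B}$ decomposes naturally as the single region adjacent to all of the $l_{i1}$'s---call it the corner region $R_0$, playing the role of $\mathfrak{R}_0$---together with, for each direction $i$, a ``spine'' of $M-1$ further regions obtained from $R_0$ by increasing only the $i$th coordinate index while keeping the others minimal. Since $\hat{f} \in \mathfrak{K}_n(H)$ is continuous, its restriction to the $i$th spine is a continuous linear spline over a single strict partial order, namely the parallel knots $l_{i1} \prec l_{i2} \prec \cdots \prec l_{i,M-1}$, so the recurrence machinery of theorem 3 applies separately along each spine.

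First I would realize the linear piece of $\hat{f}$ on $R_0$ using the $n+1$ universal global units: these are active on all of $U$, and by lemma 4 may be chosen so that their linear-output matrix for $R_0$ is nonsingular, whence lemma 3 produces any prescribed linear function there. Next, for each direction $i$ in turn, I would extend this initial function along the $i$th spine by the recurrence of equation 4.68, assigning to each knot $l_{ij}$ the unique output weight $\lambda_{ij}$ that renders the $(j+1)$th spine region continuous with the $j$th; theorem 2 guarantees existence and uniqueness of this weight once the knot equation is fixed. The decisive observation is non-interference: every spine region other than $R_0$ has $x_{i'} \le 1/M$ for all directions $i' \ne i$, so each knot unit $l_{i'j}$ of a different spine (with equation $x_{i'} = j/M$ and $j \ge 1$) has zero output on it, and likewise all knot units vanish on $R_0$. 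Hence the $n$ spines can be constructed independently, each reusing the same realized function on $R_0$, with no knot unit ever disturbing the corner.

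The unit count then follows immediately: one block of $n+1$ universal global units for $R_0$, plus one unit per knot $l_{ij}$, i.e. $M-1$ units for each of the $n$ spines, giving $(n+1) + n(M-1) = Mn + 1$, which are exactly the $|H| = n(M-1)$ hyperplanes of $H$ together with the $n+1$ global ones; the inequality $\Theta \ge Mn+1$ then admits extra redundant units in the sense of lemma 3. This realizes $\hat{f}$ on all of $\mathcal{B}$, after which lemma 9 yields the pieces on $\mathcal{R} - \mathcal{B}$ simultaneously, completing the realization by $\mathfrak{N}$. The main obstacle I anticipate is not the arithmetic but the bookkeeping that the weights chosen to fit the spines are precisely what lemma 9 needs: one must confirm that ``realizing the pieces on $\mathcal{B}$'' in the exact sense demanded by lemma 9 is achieved by the spine recurrence plus the global-unit realization of $R_0$, and that the shared use of $R_0$ by all $n$ spines---an order tree rooted at $R_0$ in the sense of definition 16 and theorem 7---introduces no circular dependency among the constructions.
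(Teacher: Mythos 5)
Your proposal is correct and takes essentially the same route as the paper's own proof: both reduce via lemma 9 to realizing $\hat{f}$ only on the seed set $\mathcal{B}$, implement the corner region with at least $n+1$ universal global units (lemmas 3 and 4), extend along each coordinate direction through the strict partial order $l_{i1} \prec l_{i2} \prec \cdots \prec l_{i,M-1}$ using the recurrence of equation 4.68, and count $(n+1) + n(M-1) = Mn+1$ units. The only cosmetic difference is that the paper organizes the construction dimension-by-dimension and phrases non-interference via condition \Romannum{2} of theorem 7, whereas you state directly that every knot unit of a different direction has zero output on a given spine and on the corner region---which is the same observation.
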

\begin{proof}
By the three-dimensional example of Figure \ref{Fig.9}b, according to lemma 9, it suffices to implement the linear functions on the red-cube regions of $\hat{f}(\boldsymbol{x})$ by $\mathfrak{N}$. As shown in Figure \ref{Fig.9}b, we first divide $U$ for dimension $x_1$ by the planes $l_{11}$ and $l_{12}$ to be 3 regions, denoted by $R_i$'s for $i = 1, 2, 3$, respectively, in the order along the direction of $x_1$. The equations of $l_{11}$ and $l_{12}$ have been suitably set such that a strict partial order $\mathscr{P}_1 = l_{11} \prec l_{12}$ is formed, with $R_{1}$ as its initial region. The initial linear function on $R_1$ of $\mathscr{P}_1$ is set to be $s_{111}$ of $\hat{f}(\boldsymbol{x})$ by at least $n+1=4$ universal global units when $n = 3$. Then $s_{211}$ on $R_2$ and $s_{311}$ on $R_3$ can be successively realized based on $\mathscr{P}_1$.

To dimension $x_2$, further partition $U$ by $l_{21}$ and $l_{22}$ into $9$ regions $R_{ij}$'s for $i,j = 1, 2, 3$, with the notations arranged analogous to Figure \ref{Fig.9}a. We also make $l_{21}$ and $l_{22}$ form a new strict partial order $\mathscr{P}_2 = l_{21} \prec l_{22}$ whose initial region is $R_{11}$ and the associated ordered regions are $R_{12}$ and $R_{13}$. The initial linear function on $R_{11}$ of $\mathscr{P}_2$ has been set beforehand, and thus $s_{121}$ and $s_{131}$ of $\hat{f}(\boldsymbol{x})$ on $R_{12}$ and $R_{13}$, respectively, can be implemented. The previously constructed linear functions on $R_{\nu1}$'s for $\nu = 2, 3$ are not influenced by $\mathscr{P}_2$, because $R_{\nu1}$'s are all in $l_{21}^0$ and $l_{22}^0$, satisfying the condition \Romannum{2} of theorem 7.

The last step is for dimension $x_3$. The introduction of $l_{31}$ and $l_{32}$ finally contributes to regions $R_{ijk}$'s for $i,j,k = 1, 2, 3$ mentioned in the proof of lemma 9. The third established strict partial order $\mathscr{P}_3 = l_{31} \prec l_{32}$ would result in $s_{112}$ and $s_{113}$ on $R_{112}$ and $R_{113}$, respectively, without disturbing the linear functions constructed before. Now, the condition of lemma 9 is satisfied and the conclusion of this lemma follows. The general $n$-dimensional case is similar.

The units of the hidden layer of $\mathfrak{N}$ include those of $H$ of equation 7.4 and at least $n+1$ universal global ones, and hence the lower bound of their number is the sum of $(M-1)n$ and $n+1$, which is inequality 7.7.
\end{proof}

\begin{thm}[Universal-approximation capability]
Any $C^1$-function $f(\boldsymbol{x}) : U \to \mathbb{R}$, where $U = [0,1]^n$ for $n \ge 2$, can be approximated by a two-layer ReLU network $\mathfrak{N}$ with arbitrary precision, in terms of a continuous piecewise linear function $\hat{f}(\boldsymbol{x}) \in \mathfrak{K}_n(H)$ of lemma 9, where the parameters of $\mathfrak{N}$ are set by the method of lemma 10. To achieve an approximation $\epsilon$, if $\zeta$ linear pieces of $\hat{f}(\boldsymbol{x})$ are required, then the number of the units of the hidden layer of $\mathfrak{N}$ satisfies
\begin{equation}
\Theta \ge {\zeta}^{1/n}n + 1.
\end{equation}
\end{thm}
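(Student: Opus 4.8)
The plan is to synthesize the three preceding lemmas into a single chain, with the uniform grid of lemma 9 serving as the bridge. The theorem is essentially a corollary of that machinery: lemma 6 supplies an accurate piecewise linear approximant, lemma 10 supplies its network realization together with a unit count, and the only genuinely new step is translating that count into the stated bound through the identity $\zeta = M^n$.

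First I would fix an arbitrary tolerance $\epsilon > 0$ and specialize the partition to the uniform grid $H = \{l_{ij}: 1 \le i \le n,\ 1 \le j \le M-1\}$ of equation 7.4, which cuts $U = [0,1]^n$ into $M^n$ congruent hypercubes, each of volume $M^{-n}$. Since $\max\{v_i\} = M^{-n} \to 0$ as $M \to \infty$ and each cell is a polytope, lemma 6 applies verbatim, guaranteeing a continuous piecewise linear function $\hat{f}(\boldsymbol{x}) \in \mathfrak{K}_n(H)$ with $\Vert f - \hat{f} \Vert_2 < \epsilon$ once $M$ is taken large enough. Here the $C^1$ hypothesis is what makes the per-cell linear-interpolation error shrink with the cell diameter $\sqrt{n}/M$, and I would appeal to the local-linearity property of $C^1$ functions noted just before lemma 6 to justify that the total approximation error tends to zero.

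Next I would invoke lemma 10 directly. The function $\hat{f}$ just produced lies in $\mathfrak{K}_n(H)$ for exactly the grid $H$ of lemma 9 and has $\zeta = M^n$ linear pieces, so it is realizable by a two-layer ReLU network $\mathfrak{N}$ whose hidden layer contains $\Theta \ge Mn + 1$ units, with the parameters assigned by the strict-partial-order construction along each coordinate direction and the continuity restriction of lemma 9. No fresh estimate is needed at this stage, since the construction has already been carried out inside those lemmas. Finally I would convert the bound: because the grid yields exactly $\zeta = M^n$ regions, $M = \zeta^{1/n}$, and substitution into $\Theta \ge Mn + 1$ gives the claimed $\Theta \ge \zeta^{1/n} n + 1$.

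The main difficulty is not computational but a matter of consistency. One must confirm that the approximant supplied by lemma 6 can be taken in the specific space $\mathfrak{K}_n(H)$ for the uniform grid, so that lemmas 9 and 10 apply without modification, and that the phrase ``$\zeta$ linear pieces'' is identified with the $M^n$ grid cells rather than a possibly smaller number of distinct affine maps. Using the region count $M^n$ is precisely what makes the exponent $1/n$ emerge in the final bound; any coalescence of adjacent pieces into a common affine map would only reduce the number of units actually needed, leaving the inequality intact.
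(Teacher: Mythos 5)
Your proposal is correct and follows essentially the same route as the paper's own proof: the grid of lemma 9 gives $\zeta = M^n$ regions, lemma 6 supplies the piecewise linear approximant (using the $C^1$ hypothesis and shrinking cell volume), lemma 10 realizes it with $\Theta \ge Mn + 1$ units, and substituting $M = \zeta^{1/n}$ yields the bound. Your closing remarks on identifying $\zeta$ with the region count and on coalescing affine pieces are a welcome clarification that the paper leaves implicit, but they do not change the argument.
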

\begin{proof}
First, divide $U$ into $\zeta = M^n$ regions by a set $H$ of $n-1$-dimensional hyperplanes via lemma 9. Second, construct a continuous piecewise linear function $\hat{f}(\boldsymbol{x}) \in \mathfrak{K}_n(H)$ approximating $f(\boldsymbol{x})$ by lemma 6. Third, use network $\mathfrak{N}$ to realize $\hat{f}(\boldsymbol{x})$ through lemmas 10 and 9. When $M$ is large enough such that the volume of each region is sufficiently small, $\hat{f}(\boldsymbol{x})$ could approximate $f(\boldsymbol{x})$ as precisely as possible. Inequality 7.8 follows from $\zeta = M^n$ and inequality 7.7.
\end{proof}

\subsection{Related Universal-Approximation Work}
In the area of two-layer neural networks, the generalization from univariate-function approximation to the multivariate case usually resorts to what is called ``ridge function'' \citep*{Pinkus2015,Ismailov2021}, such as multi-dimensional Fourier transform \citep*{Gallant1988, Chen1992} and Radon transform \citep*{Carroll1989}. Our method is different in the main principle (including strict partial orders and continuity restriction) and, in particular, can explain the training solution obtained by the back-propagation algorithm.

\subsection{Boundary-Determination Problem}
The results of section 7.2 are based on specially designed hyperplanes and not general enough to explain other solutions. This section extends them to be a typical solution mode as well as a new mathematical phenomenon called ``boundary-determination problem'' in analogy to the boundary problem of some other branches of mathematics (e.g., partial differential equations).

\begin{dfn}[Standard partition]
In equation 7.2 of lemma 9, modify the number $M$ for each dimension $x_i$ to be $M_i$ with $M_i \ge 2$, such that $M_i$'s are not necessarily equal to each other. Then the cardinality of $\mathcal{R}$ of equation 7.3 becomes
\begin{equation}
|\mathcal{R}| = \prod_{i=1}^nM_i
\end{equation}
and the set
\begin{equation}
\mathcal{B} =  \mathcal{R} \cap \bigg(\bigcup_{\mu = 1}^{n}\bigcap_{\nu = 1, i_{\mu\nu} \in A_{\mu}}^{n-1}l_{i_{\mu\nu}1}^0\bigg)
\end{equation}
corresponding to equation 7.5 also exists. Write $\mathcal{R} = \{R_k: k = 1, 2, \dots, |R|\}$ . Let $N_k \subset \mathcal{R}$ be the set of the regions adjacent to each $R_k \in \mathcal{R}$, which describes the neighborhood information of $R_k$. We call the dividing of $l_{i,n_i}$'s for all $i$ and $n_i$ a standard partition of $U$, denoted by
\begin{equation}
\mathfrak{P} := \mathfrak{P}\big(H, \mathcal{R}, N\big),
\end{equation}
where $H = \{l_{i,n_i}: 1 \le i \le n, 1 \le n_i \le M_i-1\}$ and $N = \{N_k: k = 1, 2, \dots, |R|\}$.
\end{dfn}

\begin{dfn}[Isomorphic Partitions]
Let $h_{\nu}$'s for $\nu = 1, 2, \dots, \xi = \sum_{i = 1}^{n}(M_i-1)$ be $n-1$-dimensional hyperplanes dividing $U = [0, 1]^n$ into regions $Q_{\mu}$'s for $\mu = 1, 2, \dots, \zeta = \prod_{i=1}^nM_i$, where $M_i$ is from equation 7.9 and $n \ge 2$. Denote by $N_{\mu}$ the set composed of the regions adjacent to $Q_{\mu}$. Write
\begin{equation}
\mathcal{P} := \mathcal{P}\big(H', \mathcal{R}', N'\big),
\end{equation}
which is the partition of $U$ via $h_{\nu}$'s, where $H' = \{h_{\nu}: \nu = 1, 2, \dots, \xi\}$, $\mathcal{R}' = \{Q_{\mu}: \mu = 1, 2, \dots, \zeta\}$ and $N' = \{N_{\mu}: \mu = 1, 2, \dots, \zeta\}$.

Suppose that either of the following two conditions is satisfied:
\begin{itemize}
\item[\rm{1}.] $H'$ is an affine transformation $\mathscr{A}$ of $H$ of equation 7.11, with the constraint that the intersections of the elements of $H'$ are contained in $U$.
\item[\rm{2}.] $H'$ is a projective transformation $\mathscr{P}$ of $H$ fulfilling the two constraints: (a) One is that if two elements $l_{ij}$ and $l_{i'j'}$ of $H$ are parallel (i.e., $l_{ij} \cap l_{i'j'} = \emptyset$), and if their counterparts $h_{\nu} = \mathscr{P}(l_{ij})$ and $h_{\nu'} = \mathscr{P}(l_{i'j'})$ of $H'$ are unparallel, then their intersection $h = h_{\nu} \cap h_{\nu'}$ is not in $U$; (b) The other is that if $l_{ij} \cap l_{i'j'} \ne \emptyset$, then $h \ne \emptyset$ and $h \subset U$.
\end{itemize}
We say that the partition $\mathcal{P}$ of equation 7.12 is isomorphic to a standard one $\mathfrak{P}$ of equation 7.11 or write
\begin{equation}
\mathcal{P} \cong \mathfrak{P}.
\end{equation}
To emphasize the transformation between $\mathcal{P}$ and $\mathfrak{P}$, write
\begin{equation}
\mathcal{P} = \mathscr{T}(\mathfrak{P}),
\end{equation}
where $\mathscr{T} =  \mathscr{A}$ or $ \mathscr{P}$ above.
\end{dfn}

\begin{lem}
Under the isomorphic partitions of definition 22, two relationships of the standard partition $\mathfrak{P}$ of equation 7.11 can be transmitted to partition $\mathcal{P}$ of equation 7.12. The first is $R_{k} \in l_{ij}^+$ (or $R_{k} \in l_{ij}^0$) for arbitrary $k$, $i$, and $j$ between the regions of $\mathcal{R}$ and the hyperplanes of $H$, whose counterpart of $\mathcal{P}$ is $Q_{\mu} \in h_{\nu}^+$ (or $Q_{\mu} \in h_{\nu}^0$), where $Q_{\mu} = \mathscr{T}(R_k)$, $h_{\nu} = \mathscr{T}(l_{ij})$ and $\mathscr{T}$ is from equation 7.14. The second is that if $R_{k_1}$ is adjacent to $R_{k_2}$ for $k_1 \ne k_2$ and if they are divided by $l_{ij}$ of $H$, then $Q_{\mu_1} = \mathscr{T}(R_{k_1})$ is also adjacent to $Q_{\mu_2} = \mathscr{T}(R_{k_2})$ and they are separated by $h_{\nu} = \mathscr{T}(l_{ij})$ of $H'$. And on the basis of the above two relationships as well as the conditions of definition 22, the regions of $U$ formed by $H'$ correspond to those by $H$ through a bijective map.
\end{lem}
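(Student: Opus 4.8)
The plan is to treat the two admissible transformations $\mathscr{A}$ and $\mathscr{P}$ separately, since both are incidence-preserving bijections carrying hyperplanes to hyperplanes, but only the projective case needs the extra constraints (a) and (b) of Definition 22 to control the hyperplane at infinity. In either case I would first establish the side relation (the first claim), then adjacency (the second), and finally deduce the bijection of regions (the third).

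For the affine case, write $\mathscr{A}(\boldsymbol{x}) = A\boldsymbol{x} + \boldsymbol{c}$ with $A$ invertible. Pushing the equation $\boldsymbol{w}^T\boldsymbol{x} + b = 0$ of $l_{ij}$ forward yields the equation $(A^{-T}\boldsymbol{w})^T\boldsymbol{y} + (b - \boldsymbol{w}^T A^{-1}\boldsymbol{c}) = 0$ of $h_\nu$, and at corresponding points $\boldsymbol{y} = \mathscr{A}(\boldsymbol{x})$ the value of this new linear form equals $\boldsymbol{w}^T\boldsymbol{x} + b$. Hence the sign is preserved pointwise, so $R_k \subset l_{ij}^+$ forces $Q_\mu = \mathscr{A}(R_k) \subset h_\nu^+$, and likewise $l_{ij}^0 \mapsto h_\nu^0$; this is the first relationship. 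Because $\mathscr{A}$ is a homeomorphism of $\mathbb{R}^n$ carrying the arrangement $H$ onto $H'$ and commuting with the closure operation of Definition 1, it sends each region of $\mathcal{R}$ onto a region of $\mathcal{R}'$. Adjacency transmits because an affine map preserves the dimension of an intersection, so $\dim(R_{k_1} \cap R_{k_2}) = n-1$ with $R_{k_1} \cap R_{k_2} \subset l_{ij}$ gives $\dim(Q_{\mu_1} \cap Q_{\mu_2}) = n-1$ with the image contained in $h_\nu$, which is the second relationship (Definition 13).

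The projective case is the main obstacle, and is where constraints (a) and (b) enter. I would represent $\mathscr{P}$ in homogeneous coordinates by an invertible linear map on $\mathbb{R}^{n+1}$ and let $\Pi$ denote the critical hyperplane on which the pushed-forward last homogeneous coordinate vanishes (the preimage of the hyperplane at infinity). Off $\Pi$ the map $\mathscr{P}$ is a diffeomorphism, and the sign argument of the affine case applies on the connected component of $\mathbb{RP}^n \setminus \Pi$ containing $U$, provided $U$ meets no piece of $\Pi$ that would flip a side. Constraint (a)---parallel $l_{ij}, l_{i'j'}$ whose images meet must meet outside $U$---prevents $\Pi$ from crossing $U$ so as to create a spurious intersection or reverse $l^+$ and $l^0$ inside $U$, while constraint (b) preserves every genuine intersection inside $U$. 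I would use these to conclude that $\mathscr{P}$ restricted to $U$ is a homeomorphism onto its image preserving the convex cell structure, after which the first and second relationships follow exactly as in the affine case.

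Finally, the bijection of regions follows formally from the first two relationships together with the constraints of Definition 22: $\mathscr{T}$ (whether $\mathscr{A}$ or $\mathscr{P}$) sends each region of $\mathcal{R}$ to a region of $\mathcal{R}'$, preserving all membership relations $R_k \subset l_{ij}^{\pm}$ and all adjacencies, and the constraints guarantee that no regions of $U$ merge or split, so the two arrangements have the same combinatorial type and the same count $\prod_{i=1}^n M_i$. Hence $R_k \mapsto \mathscr{T}(R_k)$ is a well-defined bijection $\mathcal{R} \to \mathcal{R}'$. I expect the delicate point throughout to be verifying that constraints (a) and (b) genuinely exclude $\Pi$ from interfering within $U$; once that is secured, everything reduces to the elementary affine bookkeeping above.
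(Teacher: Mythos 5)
Your proposal follows the same skeleton as the paper's proof: split into the affine and projective cases, establish the region--hyperplane (side) relationship and the adjacent-region relationship, and invoke the constraints of Definition 22 in exactly the roles the paper assigns them (condition 1 keeping the transformed arrangement's intersections inside $U$; conditions 2(a) and 2(b) taming the change of parallel/unparallel properties under a projective map). Where you differ is in rigor, and the difference favors you. For the affine case the paper merely asserts that ``invariance of the parallel and unparallel properties'' preserves the two relationships; your pointwise computation --- pushing $\boldsymbol{w}^T\boldsymbol{x}+b=0$ forward to $(A^{-T}\boldsymbol{w})^T\boldsymbol{y} + (b - \boldsymbol{w}^TA^{-1}\boldsymbol{c})=0$ and observing that the linear form's value, hence its sign, is preserved at corresponding points --- is the argument the paper's assertion actually needs, since parallelism invariance alone does not give side-preservation. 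For the projective case, the paper says only that conditions 2(a) and 2(b) ``render the variation of the parallel or unparallel property small enough'' to recover the affine situation; it never mentions the locus where a projective map degenerates. Your identification of the critical hyperplane $\Pi$ (the preimage of the hyperplane at infinity) as the true obstruction is a sharper diagnosis than anything in the published proof. Do note, however, that the implication you yourself flag as delicate --- that the pairwise intersection constraints (a) and (b) genuinely force $\Pi$ to stay clear of $U$, so that $\mathscr{P}|_U$ is a homeomorphism --- is not automatic: (a) and (b) only constrain where images of pairs of hyperplanes meet, and it is not obvious they prevent $\Pi$ from crossing a region of $U$ away from all such intersections, which would tear image regions across infinity and could break the side relation $R_k \subset l_{ij}^+ \Rightarrow \mathscr{T}(R_k) \subset h_{\nu}^+$. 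The paper's own proof leaves exactly this step unargued (indeed unmentioned), so your proposal is no less complete than the published argument; but if you want a watertight proof, closing that implication --- or reformulating the region correspondence combinatorially rather than as set images under $\mathscr{T}$ --- is the step to nail down.
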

\begin{proof}
We call the first and second relationships of this lemma ``region-hyperplane'' and ``adjacent-region'' relationship, respectively. If $\mathscr{T}$ is an affine transformation, owning to the invariance of the parallel and unparallel properties of hyperplanes under $\mathscr{T}$, the two kinds of relationships are certainly preserved. Thus, to ensure the bijective map between the regions of the two partitions, it's sufficient to restrict the dividing of $H' = \mathscr{T}(H)$ to be in $U$, which is the constraint of condition 1 of definition 22.

When $\mathscr{T}$ is a projective transformation, both the parallel and unparallel properties may be changed under $\mathscr{T}$. To the case of parallel hyperplanes becoming unparallel ones, we make the intersection of the mapped hyperplanes out of $U$ by condition 2(a). To $l_{ij} \cap l_{i'j'} \ne \emptyset$ of $H$, $h \ne \emptyset$ of condition 2(b) makes sure that the intersecting properties between the hyperplanes of $H$ remain invariant under $\mathscr{T}$, while $h \in U$ ensures the partition of $H'$ done in $U$. Therefore, the two constraints of condition 2 render the variation of the parallel or unparallel property small enough to preserver the effect of the previous affine-transformation case, and thus this lemma holds for some projective transformations as well.
\end{proof}

\begin{dfn}[Boundary of a domain]
To a partition $\mathcal{P}\big(H', \mathcal{R}', N'\big)$ of equation 7.12 isomorphic to a standard one $\mathfrak{P}$ of equation 7.11, the set
\begin{equation}
\mathcal{B}' = \mathscr{T}(\mathcal{B})
\end{equation}
is called the \textsl{boundary} of $\mathcal{R}'$, where $\mathscr{T}$ and $\mathcal{B}$ are from equations 7.14 and 7.10, respectively.
\end{dfn}

\begin{thm}[Boundary-determination principle]
Notations being as in definition 23, let $\hat{f}(\boldsymbol{x}) \in \mathfrak{K}_n(H')$ of equation 6.1 be a continuous piecewise linear function and $\mathfrak{N}$ be a two-layer ReLU network whose units of the hidden layer are from the hyperplanes of $H'$ along with at least $n+1$ universal global hyperplanes. If the linear functions of $\hat{f}(\boldsymbol{x})$ on the boundary $\mathcal{B}'$ of equation 7.15 are implemented by $\mathfrak{N}$, then the ones on the remaining regions can also be simultaneously realized.
\end{thm}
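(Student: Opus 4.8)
The plan is to reduce the statement to lemma 9 by transporting its proof through the isomorphism $\mathscr{T}$ of definition 22. Lemma 9 already establishes exactly this propagation property for a \emph{standard} partition $\mathfrak{P}$: once the linear pieces of $\hat{f}(\boldsymbol{x})$ on the boundary $\mathcal{B}$ are fixed, a recursive application of the continuity-restriction principle (theorem 9) determines those on $\mathcal{R}-\mathcal{B}$. The essential observation is that the recursion in lemma 9 is purely combinatorial: at each step it refers only to which region is adjacent to which, across which hyperplane two regions are separated, and on which side of a hyperplane a region lies. Consequently the construction should survive any transformation that preserves this incidence data, and definition 22 together with lemma 11 is designed to supply precisely such a transformation.

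First I would invoke lemma 11 to carry over to $\mathcal{P}$ both the region-hyperplane relationship ($R_k \in l_{ij}^{+}$ iff $Q_{\mu} \in h_{\nu}^{+}$, and likewise for the zero-output side) and the adjacent-region relationship, together with the bijection $Q_{\mu}=\mathscr{T}(R_k)$ between the two families of regions. Under this bijection the starting data transfer verbatim: $\mathcal{B}'=\mathscr{T}(\mathcal{B})$ is the image of the standard boundary, and its linear pieces are realized by $\mathfrak{N}$ by hypothesis. Next I would replay, region by region, the filling order from the proof of lemma 9, now applied to $\mathcal{P}$. At each step a region $Q_{\mu}\in\mathcal{R}'-\mathcal{B}'$ is reached whose two bounding facets $\mathscr{T}(\mathcal{L}_{n_{\kappa\nu}})$ and $\mathscr{T}(\mathcal{L}_{n_{\kappa\mu}})$ lie on two hyperplanes of $H'$ that intersect and whose function values have already been produced by $\mathfrak{N}$; since the network output is continuous (corollary 1), theorem 9 then yields the linear piece on $Q_{\mu}$ automatically. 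Because the bijection $\mathscr{T}$ is exhaustive and the adjacency structure matches that of $\mathfrak{P}$, every region of $\mathcal{R}'-\mathcal{B}'$ is eventually filled, which is the desired conclusion.

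The main obstacle is guaranteeing that, at every step of this recursion, the two bounding hyperplanes used by the continuity restriction actually intersect inside $U$, so that theorem 9 is applicable. In $\mathfrak{P}$ these are axis-orthogonal hyperplanes from two distinct coordinate directions and therefore always meet; under an affine $\mathscr{T}$ their meeting is preserved, since parallelism and incidence are affine invariants (the affine case of lemma 11). Under a projective $\mathscr{T}$, however, parallel and intersecting hyperplanes may trade places, and this is exactly why conditions 2(a) and 2(b) of definition 22 are imposed: condition 2(b) forces each originally intersecting pair $l_{ij}\cap l_{i'j'}\neq\emptyset$ to map to a pair meeting inside $U$, so that continuity restriction still applies to the corresponding region, while condition 2(a) keeps any spuriously created intersection of former parallels outside $U$, so that no false adjacency is introduced into $\mathcal{P}$.

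Thus the delicate point is to verify that every continuity-restriction step invoked in the proof of lemma 9 uses only hyperplane pairs covered by these conditions, and that the shifting argument of its inductive step continues, at each ``floor'', to produce a region adjacent to two already-determined regions across a pair of intersecting hyperplanes. Once this bookkeeping is checked—reduced cleanly to the two transmitted relationships of lemma 11 and the intersection constraints of definition 22—the conclusion follows from theorem 9 and lemma 9 without any further computation, which is why I expect the argument to be brief and to lean almost entirely on those earlier results.
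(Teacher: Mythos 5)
Your proposal is correct and follows essentially the same route as the paper: the paper's proof is a one-liner invoking lemma 11 to transport the ingredients of lemma 9's proof (adjacency structure and hyperplane influences) to the isomorphic partition $\mathcal{P}$, which is exactly your reduction. The extra bookkeeping you flag — that intersecting pairs must still meet inside $U$ — is precisely what conditions 2(a) and 2(b) of definition 22 and the proof of lemma 11 already guarantee, so your elaboration fills in detail the paper leaves implicit rather than departing from its argument.
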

\begin{proof}
By lemma 11, the ingredients of the proof of lemma 9, including the adjacent relationships between the regions as well as the influence of each hyperplane, can all be mapped to $\mathcal{P}$, which follows the conclusion.
\end{proof}

\begin{rmk}
It is possible that there's more than one partition combined, each isomorphic to a standard one, such that the formed function could be more complicated than that of this theorem. Section 8.2 will give some examples.
\end{rmk}

\begin{dfn}[Boundary-determination problem]
We call the phenomenon of theorem 11 that the function values on $\mathcal{R}'$ are determined by those on the boundary $\mathcal{B}'$ \textsl{a boundary-determination problem} of two-layer ReLU networks.
\end{dfn}

\subsection{Related Boundary-Determination Problems}
The well-known Cauchy's integral formula
\begin{equation}
f(z) = \oint_{C}\frac{f(\zeta)}{\zeta - z}dz
\end{equation}
of complex analysis is similar to the boundary-determination problem of section 7.3, which says that the function value $f(z)$ for $z$ that is a interior point of a simple closed contour $C$ is determined by the boundary values of $f(z)$ on $C$, in terms of the integration along $C$ of equation 7.16. A proof of equation 7.16 given in \citet*{Ablowitz2003} suggests that its mechanism includes Cauchy's theorem and the property of complex integration.

In the discipline of differential equations, the boundary-value problem is also related to the function values of both the boundary part and the inner part of a domain \citep*{Zwillinger2022}. Under some condition, the solution of a differential equation can be uniquely determined by the function given at the boundary \citep*{Courant1962}. The initial-value problem \citep*{Zwillinger2022} is also of this type, despite the distinct terminology due to different physical backgrounds.

Besides function values, there exist some other property merely depending on the boundary of a region. For instance, Green's theorem \citep*{Ablowitz2003}
\begin{equation*}
\iint_{R}(\frac{\partial v}{\partial x} - \frac{\partial u}{\partial y})dxdy = \oint_{C}udx + vdy,
\end{equation*}
tells us that that a double integral over a connect region $R$ equal to a line integral around the simple closed curve $C$ that is the boundary of $R$. And the more celebrated fundamental theorem of calculus
\begin{equation*}
\int_{a}^{b}f(x)dx = F(b) - F(a)
\end{equation*}
expresses the similar meaning, with the boundary of an interval being two points.

Another example is more like the boundary-determination problem of this paper in connection with the continuity property of a function, which is called ``analytic continuity'' of a function with a complex variable. A conclusion \citep*{Ablowitz2003} stemming from analytic continuity is that if two functions $f(z)$ and $g(z)$, both of which are analytic at a common domain $D$, coincide in some subdomain $D' \subset D$ or on a curve $\Gamma$ interior to $D$, then $f(z) = g(z)$ on the whole $D$. That is, part of the equal values of $f(z)$ and $g(z)$ leads to their equality on the whole domain.

Each of the above examples has its own mechanism to establish the relationship between the boundary and inner parts of a region or domain. And the case of this paper is based on the property of the functions produced by a two-layer ReLU network; as far as we know, it is a new mathematical phenomena that has not been reported before.

\section{Explanation of Experiments}
This section uses the developed theory to explain the training solution derived from the back-propagation algorithm. Section 8.1 validates corollary 3 for the one-dimensional input. Section 8.2 verifies the higher-dimensional conclusions of sections 6 and 7 in terms of the two-dimensional input.

\subsection{One-Dimensional Input}

\begin{figure}[!t]
\captionsetup{justification=centering}
\centering
\subfloat[Example 1.]{\includegraphics[width=2.7in, trim = {1.3cm 0.7cm 1.3cm 0.7cm}, clip]{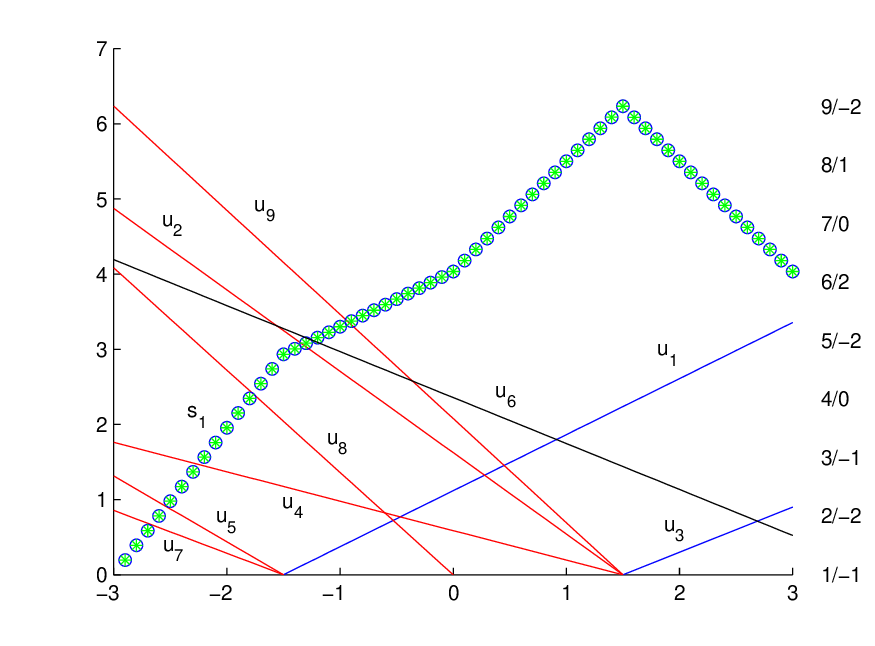}} \\
\subfloat[Example 2.]{\includegraphics[width=2.7in, trim = {1.3cm 0.7cm 1.3cm 0.65cm}, clip]{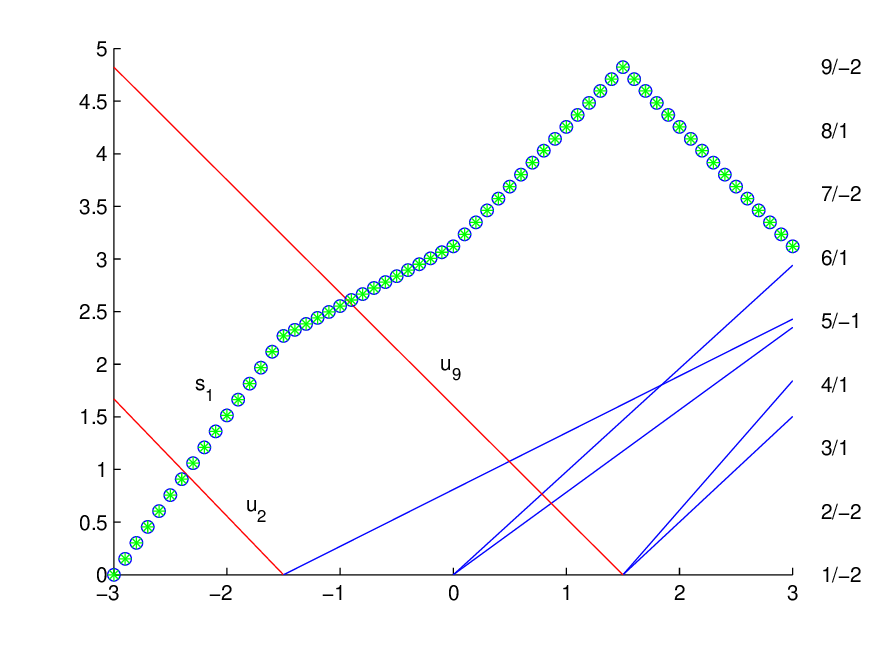}} \quad \quad
\subfloat[Example 3.]{\includegraphics[width=2.7in, trim = {1.3cm 0.7cm 1.3cm 0.7cm}, clip]{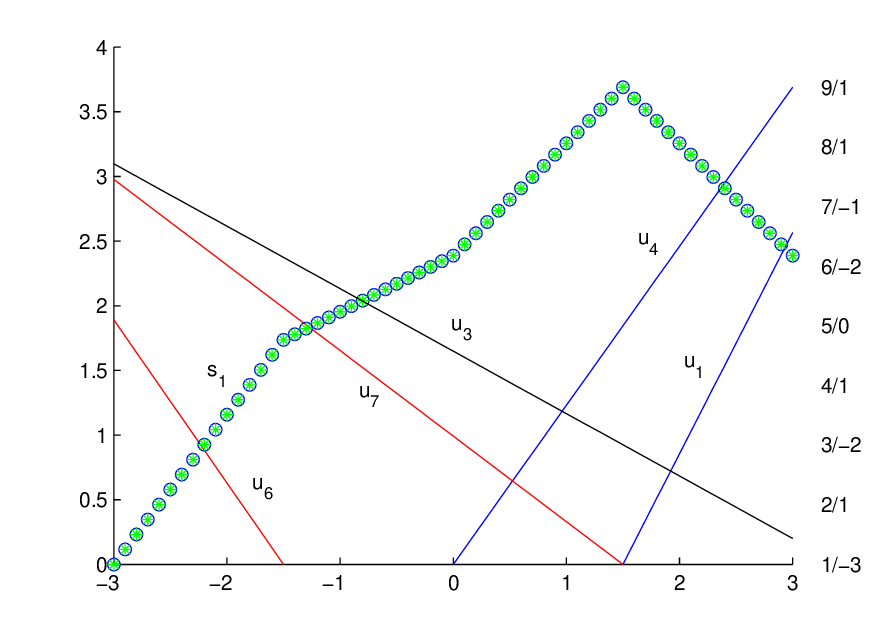}}
\caption{Training solutions of one-dimensional input.}
\label{Fig.10}
\end{figure}

Figure \ref{Fig.10} shows three examples of the two-sided bases obtained by the training way. The learning algorithm is the original back-propagation algorithm without any refinement or modification (e.g., stochastic gradient descent). The weights and bias are initialized by the uniform distribution $U(-1, 1)$. The learning rate and steps are set to be 0.002 and 10000, respectively.

In each of the three figures, the blue circles come from a continuous piecewise linear function with four linear pieces. The green asterisks are produced by a two-layer ReLU network with 9 units in the hidden layer. The red, blue and black lines are the outputs of the negative, positive and global units, respectively; some of the units are not shown in Figures \ref{Fig.10}b and \ref{Fig.10}c because they are not activated by the function domain. For convenience of visualization, the function values are normalized according to the range of the outputs of the ReLUs. The subscripts of the units are of the trained results. The examples of Figure \ref{Fig.10} validate the two-sided bases of section 5 as follows.

\textbf{Compound two-sided bases}. In Figure \ref{Fig.10}a, there are two bidirectional knots $x= -1.5$ and $x= 1.5$; the knot $x = 0$ has only one unit $u_8$ that is negative. So it is the compound type. The first linear function $s_1$ is implemented by one global units $u_6$ and five other local units $u_9$, $u_2$, $u_4$, $u_5$ ,$u_7$ and $u_8$, among which there are two groups of equivalent units including $\{u_9, u_2, u_4\}$ and $\{u_5, u_7\}$; $u_8$ is redundant, whose output weight is for the shaping of $s_3$. Those units except for $u_8$ are sufficient to produce $s_1$, since their number is greater than 2 (each equivalent group amounts to one), and the redundant ones cannot influence the realization of $s_1$. Figure \ref{Fig.10}c is similar.

\textbf{Added two-sided bases}. The case of the added two-sided bases of theorem 4 is verified by Figure \ref{Fig.10}b, in which $u_2$ and $u_9$ generate $s_1$ and both of them are not global units.

\textbf{Number of units required}. The minimum number $\Theta = \zeta + 1$ of the units required in theorem 4 is demonstrated by Figures \ref{Fig.10}c. Although 9 units are provided for the piecewise linear function with $\zeta = 4$ linear segments, only five of them are effective or activated and hence the solution can be regarded as using 5 units, which is the case of the minimum $\Theta = \zeta + 1 = 5$.

\subsection{Two-Dimensional Input}

\begin{figure}[htp]
\captionsetup{justification=centering}
\centering
\subfloat[Example 1: data-fitting effect.]{\includegraphics[width=2.6in, trim = {1.2cm 0.7cm 0.9cm 0.1cm}, clip]{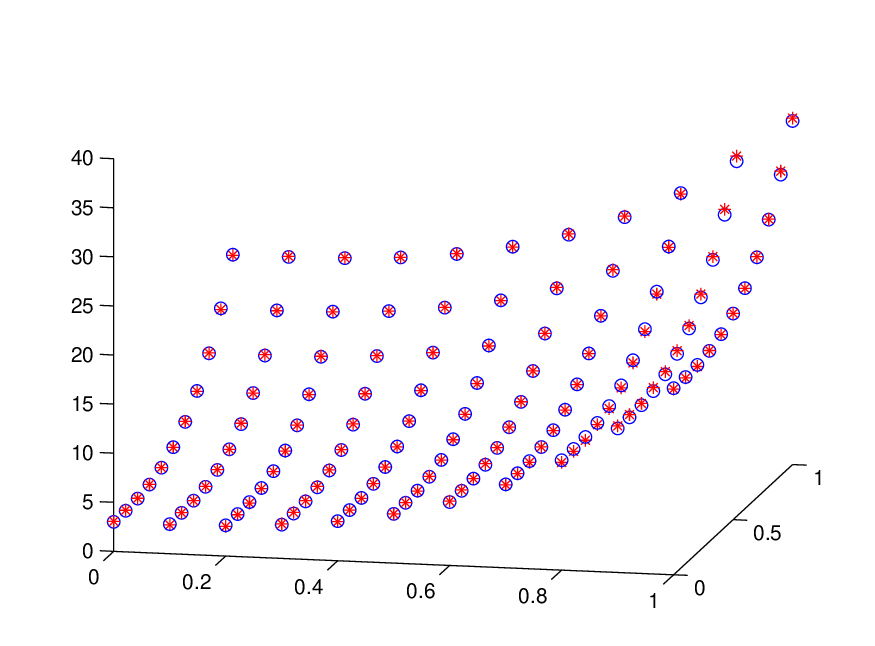}} \quad \ \
\subfloat[Example 1: solution of (a).]{\includegraphics[width=2.9in, trim = {1.2cm 0.7cm 1.0cm 0.1cm}, clip]{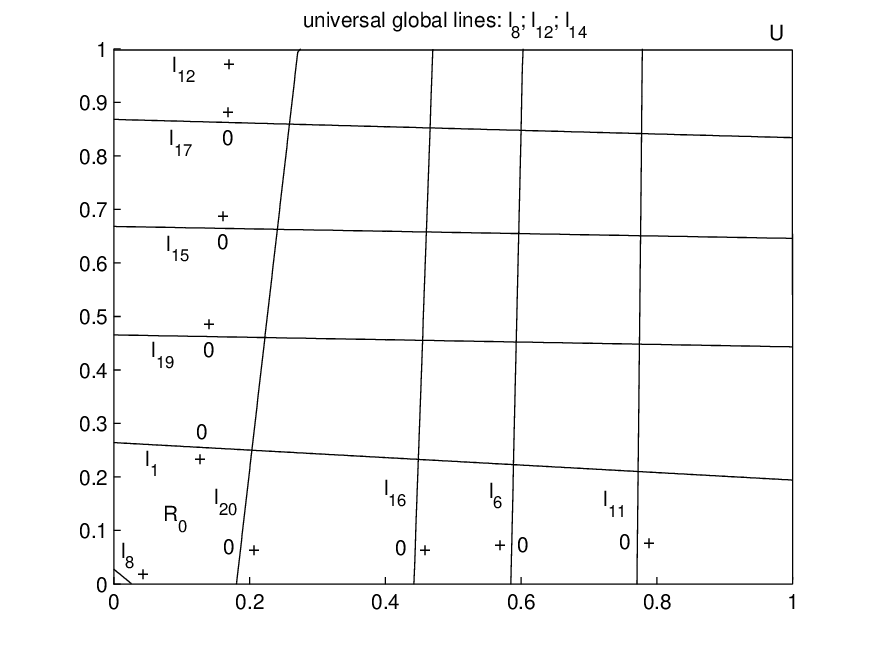}} \\
\subfloat[Example 2: data-fitting effect.]{\includegraphics[width=2.6in, trim = {1.1cm 0.7cm 0.9cm 0.1cm}, clip]{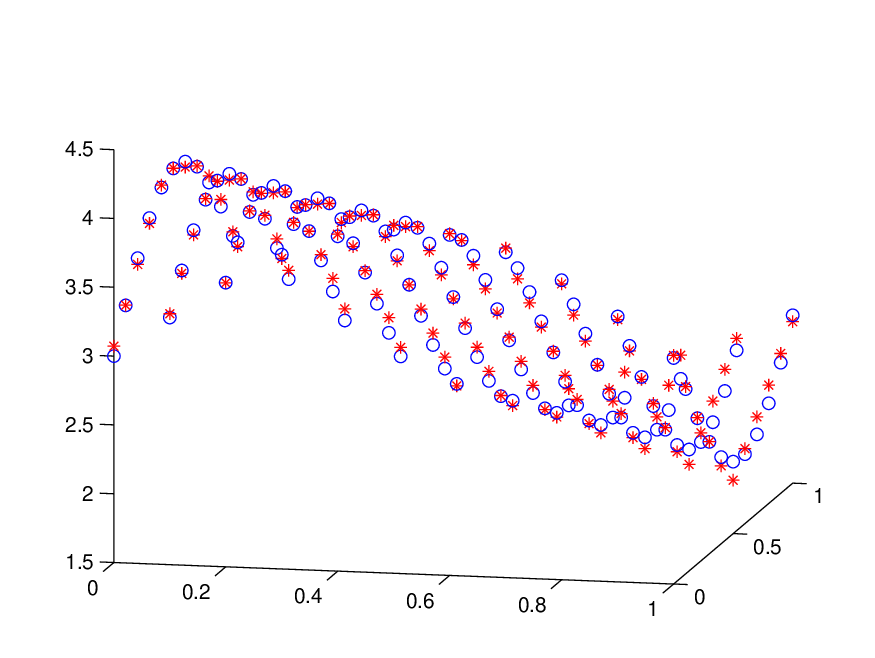}} \quad \ \
\subfloat[Example 2: solution of (c).]{\includegraphics[width=2.9in, trim = {1.2cm 0.7cm 1.0cm 0.1cm}, clip]{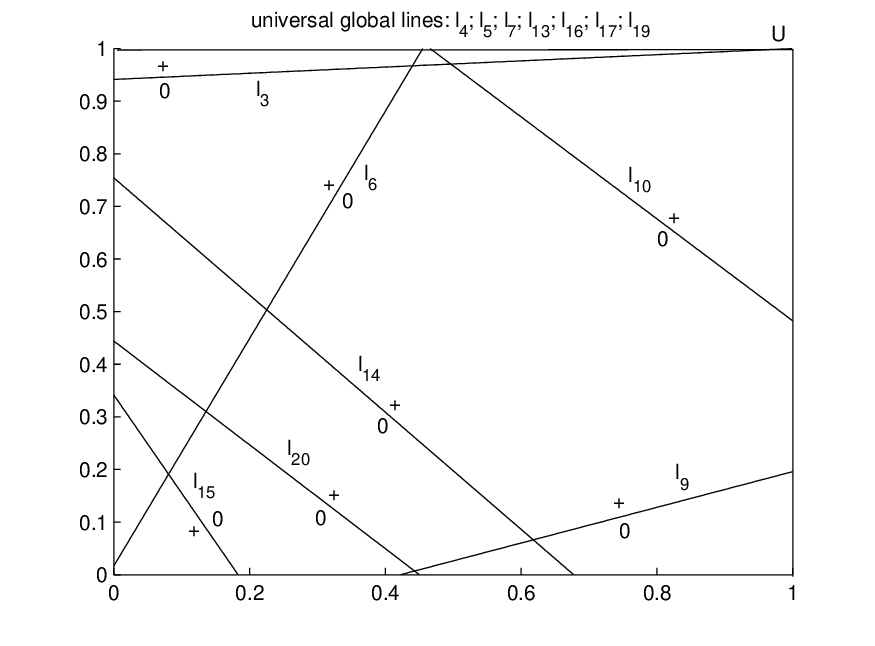}} \\
\subfloat[Example 3: data-fitting effect.]{\includegraphics[width=2.6in, trim = {1.1cm 0.7cm 0.9cm 0.1cm}, clip]{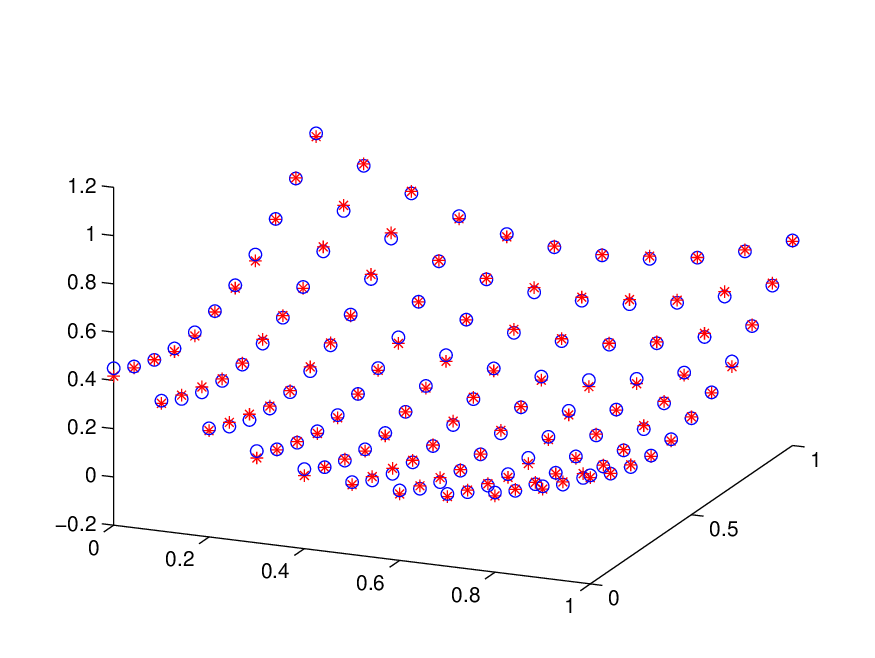}} \quad \ \
\subfloat[Example 3: solution of (e).]{\includegraphics[width=2.9in, trim = {1.2cm 0.7cm 1.0cm 0.1cm}, clip]{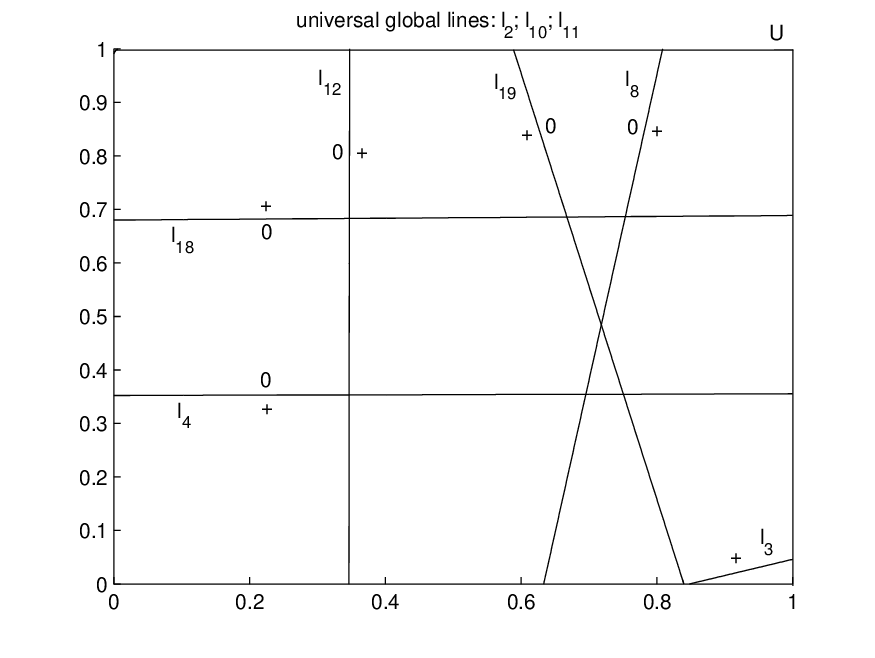}}
\caption{Training solutions of two-dimensional input.}
\label{Fig.11}
\end{figure}

We use the results of sections 6 and 7 to explain the training solutions of Figure \ref{Fig.11} for the two-dimensional input. The blue-circle points of Figure \ref{Fig.11}a are derived from the discretization of polynomial function $z = 16(x^3 + y^3)+3$ on $U = [0, 1]^2$, with step length 0.1 on both the dimensions of $x$ and $y$. The red asterisks are produced by the function $\hat{z}(x, y)$ of a two-layer ReLU network $\mathfrak{N}$ with 20 units in the hidden layer, initialized by uniform distribution $U(-1, 1)$ and trained by the back-propagation algorithm. The number of the training steps is $4000$ and the learning rate is $0.01$. The relative fitting error measured by
\begin{equation}
\epsilon = \frac{1}{z_M - z_m} \sqrt{\frac{\sum_{i}(z_i - \hat{z}_i)^2}{N}}
\end{equation}
is of order $10^{-3}$, where $z_i$ and $\hat{z}_i$ are the function values of $z(x,y)$ and $\hat{z}(x,y)$, respectively, $z_M = \max\{z_i\}$, $z_m = \min\{z_i\}$, and $N$ is the total number of $i$'s.


Figure \ref{Fig.11}b is the region dividing of $U = [0, 1]^2$ through the lines of the units of the hidden layer of $\mathfrak{N}$. The set of the universal global lines are given at the top of the figure. Since $l_8^0 \cap U$ is much smaller than $U$, we consider $l_8$ as a universal global line. The subscript as well as the ``0'' or ``+'' label of each line are of the training result. The inactivated units with respect to $U$ are not included in the figure.

Figures \ref{Fig.11}c and \ref{Fig.11}e differ from Figure \ref{Fig.11}a only in the continuous functions to be approximated, which are $z' = \sin{3(x + y + 1)} + 3$ and $z'' = (x - 0.6)^2 + (y - 0.3)^2$, respectively. Both the approximation errors to $z'$ and $z''$ computed by equation 8.1 are of order $10^{-2}$. Figure \ref{Fig.11}d is the solution for Figure \ref{Fig.11}c, while Figure \ref{Fig.11}f for Figure \ref{Fig.11}e.

The following items explain the solutions of Figure \ref{Fig.11} by the results of section 6: (a) \textbf{Multiple strict partial orders}. Regardless of the two-sides bases to be discussed later, in Figure \ref{Fig.11}a, we have two strict partial orders including
\begin{equation}
    \mathscr{P}_1 = l_1\prec l_{19} \prec l_{15} \prec l_{17}
\end{equation}
and
\begin{equation}
    \mathscr{P}_2 = l_{20}\prec l_{16} \prec l_{6} \prec l_{11},
\end{equation}
and $R_{0}$ is their common initial region. (b) \textbf{Solution pattern of universal global units}. The initial linear function on $R_0$ of Figure \ref{Fig.11}a is implemented by three universal global lines (units) of $l_8$, $l_{12}$ and $l_{14}$, with the number equal to $n+1=3$ for $n=2$, which is the lower bound of inequality 6.7. Figures \ref{Fig.11}d and \ref{Fig.11}f have 7 and 3 universal global lines, respectively, both of which satisfy the condition of inequality 6.7. (c) \textbf{Two-sided bases}. In equations 8.2 and 8.3, if we change $l_1$ and $l_6$ into their negative forms, this is the case of Figure \ref{Fig.11}b, satisfying the condition of theorem 8 for the two-sided bases such that the function realization for $\mathscr{P}_1$ and $\mathscr{P}_2$ would not be affected. (d) \textbf{Order of multiple strict partial orders}. In Figure \ref{Fig.11}d, without considering the two-sided bases, three local strict partial orders $\mathscr{P}_1 = l_{15} \prec l_{20} \prec l_{14} \prec l_{10}$, $\mathscr{P}_2 = l_{6} \prec l_{3}$ and $\mathscr{P}_3 = l_{9} \prec l_{\infty}$ can be found and their order $\mathscr{P}_1, \mathscr{P}_2, \mathscr{P}_3$ satisfies equation 6.10 ($\mathfrak{R}_0 = l_6^0 \cap l_{15}^0 \cap U$).

From the viewpoint of continuity restriction of section 7, we can interpret the solutions of Figure \ref{Fig.11} as follows. (a) \textbf{Standard partition of $U$}. The partition of Figure \ref{Fig.11}b is obviously the same as the standard one as in Figure \ref{Fig.9}a in terms of an one-to-one correspondence between their regions as well as the lines forming them; for simplicity of descriptions, we also call it a standard partition, and similarly for the cases of the rest of this section. This partition together with the excellent data-fitting result of Figure \ref{Fig.11}a verify the conclusion related to lemma 9 and theorem 11. (b) \textbf{Multiple standard partitions combined}. In Figure \ref{Fig.11}d, we can find four standard sub-partitions intermingled, including $\mathfrak{P}_1 = l_{20}^0 \cap U$, $\mathfrak{P}_2 = l_{20}^+ \cap l_9^+ \cap l_{10}^0\cap l_3^0 \cap U$, $\mathfrak{P}_3 = l_{14}^+ \cap l_9^+  \cap U$ and $\mathfrak{P}_4 = l_{20}^+ \cap l_6^0 \cap l_{10}^0 \cap U$, which can also yield all the linear functions on $U$ through lemma 9, theorem 7 and theorem 8. (c) \textbf{Scattered continuity restrictions combined}. Two standard sub-partitions $\mathfrak{P}_1 = l_{19}^+ \cap l_8^0 \cap U$ and $\mathfrak{P}_2 = l_{19}^0 \cap l_4^0 \cap U$ can be found in Figure \ref{Fig.11}f and the linear functions on them can be realized as in Figure \ref{Fig.11}d. Although the small triangle $l_4^0 \cap l_8^+ \cap l_{19}^+$ is in neither of the two standard partitions, its left and right adjacent regions are in $\mathfrak{P}_1$ and $\mathfrak{P}_2$, respectively, such that its linear function can be determined by the continuity-restriction principle of theorem 9. The linear function on $l_4^+ \cap l_{8}^+ \cap l_{19}^+ \cap U$ immediately follows due to its left adjacent region and the previous $l_4^0 \cap l_8^+ \cap l_{19}^+$, and then the one on $l_{19}^0 \cap l_{3}^+ \cap l_{4}^+ \cap U$. Finally, the last linear function on $l_3^0 \cap U$ can be produced by $l_3$.

\section{Highlights for Black Box}
In the preceding sections, the mechanism of the solutions of two-layer ReLU networks (i.e., the black box) is embedded in the proof of the results throughout this paper, in terms of rigorous mathematical language. We here summarize them briefly to highlight the key ideas.
\begin{itemize}
\item[\rm{\Romannum{1}}.] \textbf{Unit classification}. The units of the hidden layer of a two-layer ReLU network $\mathfrak{N}$ can be classified into two categories. One is the type of (universal) global units implementing the initial linear function, based on which each of other linear functions can be simply determined by only one parameter; their number is usually greater than or equal to $n+1$, with $n$ as the dimensionality of the input space. The other is of local units to produce the knots and each knot needs at least one local unit.
\item[\rm{\Romannum{2}}.] \textbf{Number of units required}. To realize an arbitrary piecewise linear function with $\zeta$ linear pieces, the number of the units of the hidden layer of $\mathfrak{N}$ could be at least $\Theta = {\zeta}^{1/n}n + 1$, much smaller than expected due to the principle of continuity restriction making different regions share common parameters with each other.
\item[\rm{\Romannum{3}}.] \textbf{Meaning of the parameters}. Let $\sigma(\boldsymbol{w}_i^T\boldsymbol{x} + b_i)$ be the activation function of a unit $\mathcal{U}$ and $\lambda_i$ be its output weight. If $\mathcal{U}$ is a global unit, its input parameters $\boldsymbol{w}_i$ and $b_i$ are used to form a global hyperplane and $\lambda_i$ is as a variable adjusted to produce the initial linear function. If $\mathcal{U}$ is a local unit, $\boldsymbol{w}_i$ and $b_i$ yield the knot, while $\lambda_i$ generates the linear function on the associated region.
\item[\rm{\Romannum{4}}.] \textbf{Geometric meaning of output weights}. Notations from item 3, when $\mathcal{U}$ is a local unit, its output weight $\lambda_i$ contains the information of two angles derived from the associated two adjacent linear pieces separated by $\mathcal{U}$, respectively. Parameter $\lambda_i$ is of somewhat ``intrinsic geometry'' and could be thoroughly determined by the geometric feature of the two linear pieces, once $\boldsymbol{w}_i$ and $b_i$ are fixed.
\item[\rm{\Romannum{5}}.] \textbf{Expressive capability}. Denote by $H$ the set of the hyperplanes of the hidden-layer units of $\mathfrak{N}$. Under the partition of some $H$, arbitrary piecewise linear function can be implemented by $\mathfrak{N}$ through only adjusting the output weights of the units of the hidden layer.
\item[\rm{\Romannum{6}}.] \textbf{Basic principles of solutions}. Global units, multiple strict partial orders and continuity restriction are the three basic principles of the solution for higher-dimensional input and their combination yields various concrete solutions.

\end{itemize}

\section{Discussion}
A two-layer ReLU network is also a ``black box'' despite the architecture being the simplest one. The main goal of this paper is to understand its training solution derived from the back-propagation algorithm, through constructing function-approximation solutions.

From theoretical viewpoint, the revealed solution space demonstrated its complexity and diversity due to the combination of different basic principles. To our original purpose, corollary 3 successfully explained the training solution for one-dimensional input and section 8.1 verified that by experiments. Several solution patterns for higher-dimensional input predicted by our theory were also experimentally validated in section 8.2. Thus, from both the theoretical and practical aspects, the theory of this paper deserves further attention and more detailed studies. We propose the following two open problems to delve deeper into the research of two-layer ReLU networks.
\begin{itemize}
\item[] \textbf{Problem 1}. Given an arbitrary partition $\mathcal{P}$ of $U = [0, 1]^n$ for $n \ge 2$ by a set of $n-1$-dimensional hyperplanes, can the three basic principles of solutions (item 6 of section 9) cover all the regions of $\mathcal{P}$?
\item[] \textbf{Problem 2}. Let $D$ be a set of data points derived from the discretization of a continuous function $f: U=[0, 1]^n \to \mathbb{R}$ for $n \ge 2$. Under a fixed number of $n-1$-dimensional hyperplanes and a certain data-fitting error for $D$, how many ways can be found to divide $U$ such that all the regions are covered by the three basic principles above?
\end{itemize}

We give some remarks on the two problems. If the answer to problem 1 is yes, the training solution of two-layer ReLU networks for input-dimensionality $n \ge 2$ would be completely understood; otherwise, there may exist some other mechanisms to be discovered or a two-layer ReLU network may have intrinsic deficiency in its expressive capability. Problem 2 is related to the complexity of the solution space for interpolation and is of great importance in understanding the minima of loss functions of the training method.

Finally, since a two-layer neural network is the simplest feedforward one, its mechanism may be fundamental to more general architectures. We hope that the results of this paper could advance the understanding of the ``black box'' of neural networks.

\end{document}